\newsavebox{\imagebox}
\newtheorem{corollary}{Corollary}
\newtheorem{lemma}{Lemma}
\newcommand{\PR}{\textsc{PR}}
\newcommand{\casmo}{\textsc{Casmopolitan}}
\newcommand{\hybo}{\textsc{HyBO}}
\newcommand{\bocs}{\textsc{BOCS}}
\newcommand{\mercbo}{\textsc{MerCBO}}
\newcommand{\combo}{\textsc{COMBO}}
\newcommand{\mivabo}{\textsc{MiVaBO}}
\newcommand{\cocabo}{\textsc{CoCaBO}}
\newcommand{\smac}{\textsc{SMAC}}
\newcommand{\turbo}{\textsc{TR}}
\newcommand{\gryffin}{\textsc{Gryffin}}
\newcommand{\mvrsm}{\textsc{MVRSM}}
\newcommand{\contbo}{\textsc{Cont. Relax.}}
\newcommand{\exactround}{\textsc{Exact Round}}
\newcommand{\exactroundste}{\textsc{Exact Round + STE}}
\newcommand{\tpe}{\textsc{TPE}}
\DeclareMathOperator*{\argmax}{arg\,max}
\DeclareMathOperator*{\round}{round}
\DeclareMathOperator*{\discretize}{discretize}
\newenvironment{enumerate_compact}{
\begin{enumerate}
  \setlength{\itemsep}{4pt}
  \setlength{\parskip}{0pt}
  \setlength{\parsep}{0pt}
}{\end{enumerate}}
\newcommand{\papertitle}{
Bayesian Optimization over Discrete and Mixed Spaces via Probabilistic Reparameterization
}
\title{\papertitle}
\author{%
  Samuel Daulton\\
  University of Oxford, Meta\\
  \texttt{sdaulton@meta.com}\\
   \And
   Xingchen Wan\\
   University of Oxford\\
   \texttt{xwan@robots.ox.ac.uk}\\
   \And
   David Eriksson\\
   Meta\\
   \texttt{deriksson@meta.com}\\
   \And
   Maximilian Balandat\\
   Meta\\
   \texttt{balandat@meta.com}\\
   \And
   Michael A. Osborne\\
   University of Oxford\\
   \texttt{mosb@robots.ox.ac.uk}\\
   \And
   Eytan Bakshy\\
   Meta\\
   \texttt{ebakshy@meta.com} \\
}
\begin{document}

\maketitle

\begin{abstract}
Optimizing expensive-to-evaluate black-box functions of discrete (and potentially continuous) design parameters is a ubiquitous problem in scientific and engineering applications. 
Bayesian optimization (BO) is a popular, sample-efficient method that leverages a probabilistic surrogate model and  an acquisition function (AF) to select promising designs to evaluate. 
However, maximizing the AF over mixed or high-cardinality discrete search spaces is challenging standard gradient-based methods cannot be used directly or evaluating the AF at every point in the search space would be computationally prohibitive.
To address this issue, we propose using probabilistic reparameterization (\PR{}). 
Instead of directly optimizing the AF over the search space containing discrete parameters, we instead maximize the expectation of the AF over a probability distribution defined by continuous parameters. 
We prove that under suitable reparameterizations, the BO policy that maximizes the probabilistic objective is the same as that which maximizes the AF, and therefore, \PR{} enjoys the same regret bounds as the original BO policy using the underlying AF. 
Moreover, our approach provably converges to a stationary point of the probabilistic objective under gradient ascent using scalable, unbiased estimators of both the probabilistic objective and its gradient. Therefore, as the number of starting points and gradient steps increase, our approach will recover of a maximizer of the AF (an often-neglected requisite for commonly used BO regret bounds).
We validate our approach empirically and demonstrate state-of-the-art optimization performance on a wide range of real-world applications. 
\PR{} is complementary to (and benefits) recent work and naturally generalizes to settings with multiple objectives and black-box constraints.
\end{abstract}

\vspace{-1ex}
\section{Introduction}
\vspace{-1ex}
\label{sec:intro}
Many scientific and engineering problems involve tuning discrete and/or continuous parameters to optimize an objective function. 
Often, the objective function is ``black-box'', meaning it has no known closed-form expression. 
For example, optimizing the design of an electrospun oil sorbent---a material that can be used to absorb oil in the case of a marine oil spill to mitigate ecological harm---to maximize properties such as the oil absorption capacity and mechanical strength \citep{oil_solbent} can involve tuning both discrete ordinal experimental conditions and continuous parameters controlling the composition of the material. 
For another example, optimizing the structural design of a welded beam can involve tuning the type of metal (categorical), the welding type (binary), and the dimensions of the different components of the beam (discrete ordinals)--resulting in a search space with over 370 million possible designs \citep{tran2019constrained}. 
We consider the scenario where querying the objective function is expensive and sample-efficiency is crucial. 
In the case of designing the oil sorbent, evaluating the objective function requires manufacturing the material and measuring its properties in a laboratory, requiring significant time and resources. 

Bayesian optimization (BO) is a popular technique for sample-efficient black-box optimization, due to its proven performance guarantees in many settings \citep{ucb, JMLR:v20:18-213} and its strong empirical performance \citep{frazier2018tutorial, turner2021bayesian}. 
BO leverages a probabilistic surrogate model of the unknown objective(s) and an acquisition function (AF) that provides utility values for evaluating a new design to balance exploration and exploitation. 
Typically, the maximizer of the AF is selected as the next design to evaluate. 
However, maximizing the AF over mixed search spaces (i.e., those consisting of discrete and continuous parameters) or large discrete search spaces is challenging\footnote{If the discrete search space has low enough cardinality that the AF can be evaluated at every discrete element, then acquisition optimization can be solved trivially.} and continuous (or gradient-based) optimization routines cannot be directly applied.
Theoretical performance guarantees of BO policies require that the maximizer of the AF is found and selected as the next design to evaluate on the black-box objective function \citep{ucb}. When the maximizer is not found, regret properties are not guaranteed, and the performance of the BO policy may degrade. 

To tackle these challenges, we propose a technique for improving AF optimization using a probabilistic reparameterization (\PR{}) of the discrete parameters. Our main contributions are:
\begin{enumerate_compact}
    \item We propose a technique, probabilistic reparameterization (\PR{}), for maximizing AFs over discrete and mixed spaces by instead optimizing a probabilistic objective (PO): the expectation of the AF over a probability distribution of discrete random variables corresponding to the discrete parameters. 
    \item We prove that there is an equivalence between the maximizers of the acquisition function and the the maximizers of the PO and hence, the policy that chooses designs that are best with respect to the PO enjoys the same performance guarantees as the standard BO policy.
    \item We derive scalable, unbiased Monte Carlo estimators of the PO and its gradient with respect to the parameters of the introduced probability distribution. 
    We show that stochastic gradient ascent using our gradient estimator is guaranteed to converge to a stationary point on the PO surface and will recover a global maximum of the underlying AF as the number of starting points and gradient steps increase. 
       This is important because many BO regret bounds require maximizing the AF~\citep{ucb}. Although the AF is often non-convex and maximization is hard, empirically, with a modest number of starting points, \PR{} leads to better AF optimization than alternative methods.
    \item We show that \PR{} yields state-of-the-art optimization performance on a wide variety of real-world design problems with discrete and mixed search spaces. 
    Importantly, \PR{} is \emph{complementary} to many existing approaches such as popular multi-objective, constrained, and trust region-based approaches; in particular, \PR{} is agnostic to the underlying probabilistic model over discrete parameters---which is not the case for many alternative methods.
\end{enumerate_compact}

\vspace{-1ex}
\section{Preliminaries}
\vspace{-1ex}
\paragraph{Bayesian Optimization}
We consider the problem of optimizing a black-box function $f:\mathcal X \times \mathcal Z \rightarrow \mathbb R$ over a compact search space $\mathcal X 
\times \mathcal Z$, where $\mathcal X  = \mathcal X^{(1)}\times\cdot\cdot\cdot\times \mathcal X^{(d)}$ is the domain of the $d \geq 0$ continuous parameters  ($x^{(i)} \in\mathcal X^{(i)}$ for $i=1, ..., d$) and $\mathcal Z = \mathcal Z^{(1)}\times\cdot\cdot\cdot\times \mathcal Z^{(d_z)}$ is the domain of the $d_z \geq 1$ discrete parameters ($z^{(i)} \in\mathcal Z^{(i)}$ for $i=1, ..., d_z$).\footnote{Throughout this paper, we use a mixed search space $\mathcal X \times \mathcal Z$ in our derivations, theorems, and proofs, without loss of generality with respect to the case of a purely discrete search space. If $d = 0$, then the objective function $f:\mathcal Z \rightarrow \mathbb R$ is defined over the discrete space $\mathcal Z$ and the continuous parameters in this exposition can simply be ignored.}

BO leverages (i) a probabilistic surrogate model---typically a Gaussian process (GP) \citep{Rasmussen2004}---fit to a data set $\mathcal D_n = \{\bm x_i, \bm z_i, y_i\}_{i=1}^n$ of designs and corresponding (potentially noisy) observations $y_i = f(\bm x_i, \bm z_i) + \epsilon_i, \epsilon_i \sim \mathcal N(0, \sigma^2)$, and (ii) an acquisition function $\alpha(\bm x, \bm z)$ that uses the surrogate model's posterior distribution to quantify the value of evaluating a new design.
Common AFs include expected improvement (EI) \citep{jones98} and upper confidence bound (UCB) \citep{ucb}---the latter of which enjoys no-regret guarantees in certain settings \citep{ucb}. 
The next design to evaluate is chosen by maximizing the AF $\alpha(\bm x, \bm z)$ over $\mathcal{X \times Z}$.
Although the black-box objective $f$ is expensive-to-evaluate, the AF is relatively cheap-to-query, and therefore, it can be optimized numerically. Gradient-based optimization routines are often used to maximize the AF over continuous domains \citep{garnett_bayesoptbook_2022}.

\vspace{-1ex}
\paragraph{Discrete Parameters}
In its basic form, BO assumes that the inputs are continuous. 
However, discrete parameters such as binary, discrete ordinal, and non-ordered categorical parameters are ubiquitous in many applications. 
In the presence of such parameters, optimizing the AF is more difficult, as standard gradient-based approaches cannot be directly applied. 
Recent works have proposed various approaches including multi-armed bandits \citep{nguyen2019, ru2020bayesian} and  local search \citep{oh2019combinatorial} for discrete domains and interleaved discrete/continuous optimization procedures for mixed domains \citep{deshwal2021bayesian, wan2021think}. A simple and widely-used approach across many popular BO packages \citep{balandat2020botorch, gpyopt2016} is to one-hot encode the categorical parameters, apply a continuous relaxation when solving the optimization, and discretize (round) the resulting continuous candidates.
Examples of continuous relaxations and discretization functions are listed in Table~\ref{table:discrete}.

\begin{table*}[!ht]
\vspace{-1ex}
    \centering
    \caption{\label{table:discrete} Different parameter types, their continuous relaxations, and discretization functions.}
    \begin{small}
    \begin{sc}
    \begin{tabular}{llll}
        \toprule
        Type & Domain & Cont. Relaxation  & $\discretize(\cdot)$ Function\\
        \midrule
         Binary & $z \in \{0,1\}$& $z' \in [0,1]$ & $\round(z')$\\
        Ordinal & 
$z \in \{0, ..., C-1\}$ & $z' \in [-0.5,C-0.5)$ & $\round(z')$\\
        Categorical &$ z \in \{0,..., C-1\}$& $ z' \in [0,1]^C$& $\argmax_c z'^{(c)}$ \\
        \bottomrule
    \end{tabular}
    \end{sc}
    \end{small}
    \vspace{-1ex}
\end{table*}

Although using a continuous relaxation allows for efficient optimization using standard optimization routines in an alternate continuous domain $\mathcal Z' \subset \mathbb{R}^m$, the AF value for an infeasible continuous value (i.e., $z' \notin \mathcal{Z}$) does not account for the discretization that must occur before the black-box function is evaluated. 
Moreover, the acquisition value for an infeasible continuous value can be larger than the AF value after discretization. 
For an illustration of this, see Fig.~\ref{fig:intro_figure} (middle/right).
In the worst case, BO will repeatedly select the same infeasible continuous design due to its high AF value, but discretization will result in a design that has already been evaluated and has zero AF value. 
To mitigate this degenerate behavior and avoid the over-estimation issue, \citet{GarridoMerchn2020DealingWC} propose discretizing $z'$ before evaluating the AF, but the AF is then non-differentiable with respect to the $z'$.
While this improves performance on small search spaces, the response surface has large flat regions after discretizing $z'$, which makes it difficult to optimize the AF. 
The authors of \citep{GarridoMerchn2020DealingWC} propose to approximate the gradients using finite differences, but, empirically, we find that this approach to be leads to sub-optimal AF optimization relative to \PR{}.
\begin{figure*}[!t]
    \centering
    \includegraphics[width=\linewidth]{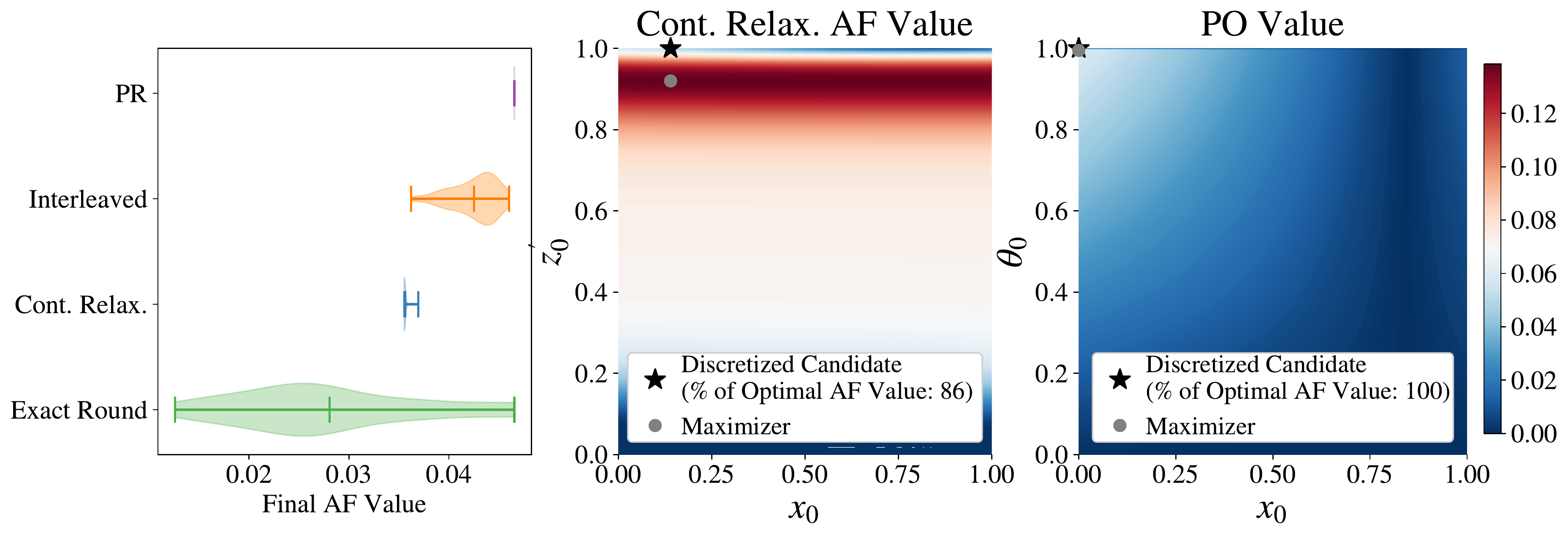}
    \caption{(\textbf{Left}) A comparison of AF optimization using different methods over a mixed search space shows that \emph{\PR{} outperforms alternative methods for AF optimization and has much lower variance across replications}. 
    The violin plots show the distribution of final AF values and the mean. ``Cont. Relax.'' denotes optimizing a continuous relaxation of the categoricals with exact gradients. ``Exact Round'' refers to optimizing a continuous relaxation with approximate gradients (via finite difference), but discretizes the relaxation before evaluating the surrogate \citep{GarridoMerchn2020DealingWC}. "Interleaved" alternates between one step of local search on the discrete parameters and one step of gradient ascent on the continuous parameters (used in \casmo{} \citep{wan2021think}).
    For each method, the best candidate across 20 restarts is selected (after discretization) and the acquisition value of the resulting feasible candidate is recorded. The AF is expected improvement \citep{jones98}. (\textbf{Middle/Right}) AF values with a continuous relaxation (middle) and the PO (right) for the Branin function over a mixed domain with one continuous parameter ($x_0$) and one binary parameter ($z_0)$ (see Appendix~\ref{appdx:exp_details} for details on Branin). 
    (\textbf{Middle}) Under a continuous relaxation, the maximizer of the AF is an infeasible point in the domain (grey circle), which results in a suboptimal AF value when rounded (black star); the resulting candidate only has 86\% of the AF value of the true maximizer. The maximum AF value across the feasible search space is shown in white and the red regions indicate that the continuous relaxation overestimates the AF value since it is greater than the maximum AF value of any feasible design.
    (\textbf{Right}) The PO is maximized at the AF unique maximizer within the valid search domain. These contours show that \PR{} avoids the overestimation issue that the naive continuous relaxation suffers from.
    }
    \label{fig:intro_figure}
    \vspace{-1ex}
\end{figure*}
\vspace{-1ex}
\section{Probabilistic Reparameterization}
\vspace{-1ex}
\begin{wrapfigure}{r}{0.62\textwidth}
\begin{minipage}{0.62\textwidth}
\vspace{-3.5ex}
\begin{algorithm}[H]
    \caption{BO with PR}
    \label{alg:bo_pr}
    \begin{algorithmic}[1]
        \STATE Input: black-box objective $f: \mathcal X \times \bm Z \rightarrow \mathbb R$
        \STATE Initialize $\mathcal D_0 \leftarrow \emptyset$, $\text{GP}_0 \leftarrow \text{GP}(\bm 0, k)$
        \FOR{$n=1$ \textbf{to} $N_\text{iterations}$}
        \STATE $(\bm x_n, \bm \theta_n) \leftarrow \argmax_{(\bm x, \bm \theta) \in \mathcal X \times \Theta} \mathbb E_{\bm Z \sim p(\bm Z | \bm \theta)}[\alpha(\bm x, \bm Z)]$\\
        \STATE Sample $\bm z_n \sim p(\bm Z | \bm \theta_n)$\\
        \STATE Evaluate $ f(\bm x_n, \bm z_n)$
        \STATE $\mathcal D_n \leftarrow \mathcal D_{n-1} \cup \{(\bm x_n, \bm z_n, \bm f (\bm x_n, \bm z_n))\}$\\
        \STATE Update posterior $\text{GP}_n$ given $\mathcal D_n$
        \ENDFOR
    \end{algorithmic}
\end{algorithm}
\vspace{-4ex}
\end{minipage}
\end{wrapfigure}
We propose an alternative approach based on probabilistic reparameterization, a relaxation of the original optimization problem involving discrete parameters. 
Rather than directly optimizing the AF via a continuous relaxation $\bm z'$ of the design $\bm z$, we instead reparameterize the optimization problem by introducing a discrete probability distribution $p(\bm Z |\bm \theta)$ over a random variable $\bm Z$ with support exclusively over $\mathcal Z$. 
This distribution is parameterized by a vector of continuous parameters $\bm \theta$. We use $\bm z$ to denote the vector $(z^{(1)}, ..., z^{(d_z)})$, where each element is a different (possibly vector-valued) discrete parameter.
Given this reparameterization, we define the probabilistic objective (PO):
\begin{equation}
    \label{eq:prob_reparam}
   \mathbb{E}_{\bm Z \sim p(\bm Z|\bm \theta)}[\alpha(\bm x, \bm Z)].
\end{equation}
Algorithm~\ref{alg:bo_pr} outlines BO with probabilistic reparameterization. 

\PR{} allows us to optimize $\bm \theta$ and $\bm x$ over a continuous space to maximize the PO instead of optimizing $\bm x$ and $\bm z$ to maximize $\alpha$ directly over the mixed search space $\mathcal X \times \mathcal Z$. 
As we will show later,
maximizing the PO allows us to recover a maximizer of~$\alpha$ over the space $\mathcal{X} \times \mathcal Z$. 
Choosing $p(\bm Z|\theta)$ to be a discrete distribution over $\mathcal Z$ means the realizations of $\bm Z$ are feasible values in $\mathcal Z$. 
Hence, the AF is only evaluated for feasible discrete designs. 
Since $p(\bm Z|\bm \theta)$ is a discrete probability distribution, we can express $\mathbb E_{\bm Z \sim p(\bm Z|\bm \theta)}[\alpha(\bm x, \bm Z)]$ as a linear combination where each discrete design is weighted by its probability mass:
\begin{equation}
    \label{eq:prob_mass}
    \mathbb E_{\bm Z \sim p(\bm Z|\bm \theta)}[\alpha(\bm x, \bm Z)] = \sum_{\bm z \in \mathcal Z} p(\bm z |\bm  \theta)\alpha(\bm x, \bm z).
\end{equation}

Example distributions for binary, ordinal, and categorical parameters are provided in Table~\ref{table:proposal}. 
\begin{table*}[h]
\vspace{-1ex}
    \centering
    \caption{\label{table:proposal} Examples of probabilistic reparameterizations for different parameter types. 
    We denote the $(C-1)$-simplex as $\Delta^{C-1}$.
    }
    \begin{small}
    \begin{sc}
    \begin{tabular}{lll}
        \toprule
        Parameter Type & Random Variable & Continuous Parameter\\
        \midrule
        Binary & $Z \sim \text{Bernoulli}(\theta)$ & $\theta \in [0,1]$\\
        Ordinal & $Z =  \lfloor \theta \rfloor + B, B \sim \text{Bernoulli}(\theta - \lfloor \theta \rfloor)$&  $\theta \in [0,C-1]$\\
        Categorical & $Z \sim  \text{Categorical}( \theta), \theta = (\theta^{(1)}, ..., \theta^{(C)})$ & 
$\theta \in \Delta^{C-1}$\\
        \bottomrule
    \end{tabular}
    \end{sc}
    \end{small}
    \vspace{-1ex}
\end{table*}

Although ordinal parameters could use the same categorical distributions as the non-ordered categorical parameters, we opt for the provided proposal distribution since it uses a scalar $\theta$ (rather than a $C$-element vector) and it naturally encodes the ordering of the values.
Using an independent random variable $Z^{(i)}\sim p(Z^{(i)} | \theta^{(i)})$ for each parameter $z^{(i)}$ for $i=1, ..., d_z$ means that the probabilistic objective can be expressed as
\begin{equation}
    \label{eq:prob_mass2}
    \mathbb E_{\bm Z\sim p(
    \bm Z | \bm \theta)}[\alpha(\bm x, \bm Z)]= \hspace{-1ex}\sum_{z^{(1)} \in \mathcal Z^{(1)}}\hspace{-1ex}\cdot\cdot\cdot \hspace{-1ex} \sum_{z^{(d_z)} \in \mathcal Z^{(d_z)}} \hspace{-1.5ex}\alpha\big(\bm x, z^{(1)}, ..., z^{(d_z)}\big)\prod_{i=1}^{d_z} p(z^{(i)} |\theta^{(i)}).
\end{equation}
\vspace{-1ex}

\vspace{-1ex}
\subsection{Analytic Gradients}
\vspace{-1ex}
One important benefit of \PR{} is that the PO in \eqref{eq:prob_reparam} is differentiable with respect to $\bm \theta$ (and $\bm x$, if the gradient of $\alpha$ with respect to $\bm x$ exists), whereas $\alpha(\bm x, \bm z)$ is not differentiable with respect to $\bm z$.  
The gradients of the PO with respect to $\bm \theta$ and $\bm x$ can be obtained by differentiating Equation~\ref{eq:prob_mass}:
\begin{equation}
    \label{eq:exact_gradient}
    \nabla_{\bm \theta} \mathbb E_{\bm Z \sim p(\bm Z | \bm \theta)}[\alpha(\bm x, \bm Z)] = \sum_{\bm z \in \mathcal Z} \alpha(\bm x, \bm z)\nabla_{\bm \theta}p(\bm z|\bm \theta)
\end{equation}
\begin{equation}
    \label{eq:exact_gradient_x}
    \nabla_{\bm x} \mathbb E_{\bm Z \sim p(\bm Z | \bm \theta)}[\alpha(\bm x, \bm Z)] = \sum_{\bm z \in \mathcal Z} p(\bm z|\bm \theta)\nabla_{\bm x}\alpha(\bm x, \bm z)
\end{equation}
This enables optimizing the PO (line 4 of Algorithm~\ref{alg:bo_pr}) efficiently and effectively using gradient-based methods.



\vspace{-1ex}
\subsection{Theoretical Properties}
\vspace{-1ex}
In this section, we derive theoretical properties of PR. Proofs are provided in Appendix~\ref{appdx:proofs}.  Our first result is that there is an equivalence between the maximizers of the PO and the maximizers of the AF over $\mathcal X \times \mathcal{Z}$.

\begin{restatable}[Consistent Maximizers]{theorem}{consistentmaximizers}
    \label{thm:consistent_maximizers}
    
    Suppose that $\alpha$ is continuous in $\bm x$ for every $\bm z \in \mathcal{Z}$. 
    Let $\mathcal H^*$ be the maximizers of $\alpha(\bm x, \bm z)$: $\mathcal H^* = \{(\bm x, \bm z) \in \argmax_{(\bm x, \bm z) \in \mathcal X \times \mathcal Z} \alpha(\bm x, \bm z)\}$. 
    Let $\mathcal J^* \subseteq \mathcal X \times \Theta$ be the maximizers of $\mathbb E_{\bm Z\sim p(\bm Z|\bm \theta)}[\alpha(\bm x, \bm Z)]$: $\mathcal J^* = \{(\bm x, \bm \theta) \in \argmax_{(\bm x, \bm \theta) \in \mathcal X \times \Theta} \mathbb E_{\bm Z \sim p(\bm Z | \bm \theta)}[\alpha(\bm x, \bm Z)]\}$, where $\Theta$ is the domain of $\bm \theta$. 
    Let $\hat{\mathcal H}^*  \subseteq \mathcal X \times \mathcal Z$ be defined as: $\hat{\mathcal H}^* = \{(\bm x, \tilde{\bm z}): (\bm x, \bm \theta) \in \mathcal J^*, \tilde{\bm z} \sim p(\bm Z|\bm \theta)\}$. 
    Then, $\hat{\mathcal H}^* = \mathcal H^*$.
\end{restatable}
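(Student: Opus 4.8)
The plan is to show the two set inequalities $\hat{\mathcal H}^* \subseteq \mathcal H^*$ and $\mathcal H^* \subseteq \hat{\mathcal H}^*$ after first establishing a key pointwise inequality: for every $(\bm x, \bm \theta) \in \mathcal X \times \Theta$,
\begin{equation*}
    \mathbb E_{\bm Z \sim p(\bm Z | \bm \theta)}[\alpha(\bm x, \bm Z)] = \sum_{\bm z \in \mathcal Z} p(\bm z | \bm \theta)\, \alpha(\bm x, \bm z) \le \max_{\bm z \in \mathcal Z} \alpha(\bm x, \bm z) \le \alpha^*,
\end{equation*}
where $\alpha^* = \max_{(\bm x, \bm z)} \alpha(\bm x, \bm z)$ is the global maximum of the AF (which exists: $\alpha$ is continuous in $\bm x$ on the compact $\mathcal X$ for each of the finitely many $\bm z$, so the finite maximum over $\bm z$ of these continuous functions attains its max). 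This holds because a convex combination of numbers is at most their maximum. The first key observation is then that this upper bound $\alpha^*$ is \emph{achieved} by the PO: picking any $(\bm x^\star, \bm z^\star) \in \mathcal H^*$ and any $\bm \theta^\star$ with $p(\bm z^\star | \bm \theta^\star) = 1$ (such a $\bm \theta^\star$ exists for each of the reparameterizations in Table~\ref{table:proposal}, since the distributions include all point masses — Bernoulli with $\theta \in \{0,1\}$, etc.), we get $\mathbb E_{\bm Z \sim p(\bm Z | \bm \theta^\star)}[\alpha(\bm x^\star, \bm Z)] = \alpha(\bm x^\star, \bm z^\star) = \alpha^*$. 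Hence $\max_{(\bm x, \bm \theta)} \mathbb E[\alpha(\bm x, \bm Z)] = \alpha^*$, i.e., the optimal PO value equals the optimal AF value.

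Next I would characterize $\mathcal J^*$. Since any $(\bm x, \bm \theta) \in \mathcal J^*$ attains the value $\alpha^*$, and since $\sum_{\bm z} p(\bm z|\bm \theta)\alpha(\bm x, \bm z) \le \max_{\bm z} \alpha(\bm x, \bm z) \le \alpha^*$, equality must hold throughout. Equality in the convex-combination bound $\sum_{\bm z} p(\bm z|\bm \theta)\alpha(\bm x, \bm z) = \max_{\bm z}\alpha(\bm x, \bm z)$ forces $p(\bm z | \bm \theta) = 0$ for every $\bm z$ with $\alpha(\bm x, \bm z) < \max_{\bm z'}\alpha(\bm x, \bm z')$; that is, $p(\cdot | \bm \theta)$ is supported only on $\argmax_{\bm z} \alpha(\bm x, \bm z)$. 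Equality in the second bound $\max_{\bm z}\alpha(\bm x, \bm z) = \alpha^*$ means that for this $\bm x$, the maximizing $\bm z$'s are \emph{global} AF maximizers, i.e., $(\bm x, \bm z) \in \mathcal H^*$ for each $\bm z \in \operatorname{supp} p(\cdot | \bm \theta)$. This gives: $(\bm x, \bm \theta) \in \mathcal J^*$ iff $p(\bm z | \bm \theta)$ is supported on $\{\bm z : (\bm x, \bm z) \in \mathcal H^*\}$ (which in particular must then be nonempty).

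With this characterization, both inclusions follow. For $\hat{\mathcal H}^* \subseteq \mathcal H^*$: any $(\bm x, \tilde{\bm z})$ with $(\bm x, \bm \theta) \in \mathcal J^*$ and $\tilde{\bm z} \sim p(\bm Z | \bm \theta)$ has $\tilde{\bm z} \in \operatorname{supp} p(\cdot | \bm \theta) \subseteq \{\bm z : (\bm x, \bm z) \in \mathcal H^*\}$, so $(\bm x, \tilde{\bm z}) \in \mathcal H^*$. For $\mathcal H^* \subseteq \hat{\mathcal H}^*$: given $(\bm x^\star, \bm z^\star) \in \mathcal H^*$, take $\bm \theta^\star$ with $p(\cdot|\bm \theta^\star) = \delta_{\bm z^\star}$; then $\operatorname{supp} p(\cdot|\bm \theta^\star) = \{\bm z^\star\} \subseteq \{\bm z : (\bm x^\star, \bm z)\in\mathcal H^*\}$, so $(\bm x^\star, \bm \theta^\star) \in \mathcal J^*$, and sampling $\tilde{\bm z} \sim p(\cdot|\bm \theta^\star)$ yields $\tilde{\bm z} = \bm z^\star$ almost surely, hence $(\bm x^\star, \bm z^\star) \in \hat{\mathcal H}^*$. (Interpreting $\hat{\mathcal H}^*$ as the set of points that occur with positive probability under such sampling handles the measure-zero subtlety in the definition.)

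The main obstacle — really the only thing beyond routine convexity bookkeeping — is the requirement that the parameterization family $p(\cdot | \bm \theta)$ be rich enough to represent every point mass $\delta_{\bm z}$ on $\mathcal Z$; this is what the inclusion $\mathcal H^* \subseteq \hat{\mathcal H}^*$ genuinely relies on, and it is satisfied by each reparameterization in Table~\ref{table:proposal} (and should be stated as the needed property of $p$). A secondary, cosmetic issue is the almost-sure/support interpretation of the set $\hat{\mathcal H}^*$, which I would address with a sentence clarifying that $\tilde{\bm z} \sim p(\bm Z|\bm \theta)$ in the definition ranges over the support of the distribution.
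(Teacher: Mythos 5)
Your proposal is correct, and it reaches the same two inclusions as the paper, but the mechanism for the harder direction ($\hat{\mathcal H}^* \subseteq \mathcal H^*$) is genuinely different. The paper proves this via a standalone lemma stated for a mapping $\varphi:\Theta \to \mathcal P_{\mathcal Z}$ assumed continuous and surjective onto all probability measures on $\mathcal Z$: existence of a PO maximizer comes from compactness of $\mathcal X \times \Theta$ and continuity of the induced objective, and the support property is proved by contradiction, explicitly constructing a perturbed measure $p'$ that transfers the mass of a suboptimal atom $\bm z'$ onto a best atom $\bm z^*$ and strictly increases the objective (surjectivity is what guarantees $p'$ is realizable as some $\varphi(\bm\theta')$). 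You instead run a sandwich/equality-condition argument: the PO is a convex combination, so it is bounded by $\max_{\bm z}\alpha(\bm x,\bm z)\le\alpha^*$, the bound is attained at point masses, and equality in a convex combination forces the support onto $\argmax_{\bm z}\alpha(\bm x,\bm z)$, which must consist of global maximizers. This buys you something: you never need surjectivity of the parameterization onto $\mathcal P_{\mathcal Z}$, nor continuity of $\bm\theta \mapsto p(\cdot|\bm\theta)$, nor compactness of $\Theta$ for existence (attainment at a point mass plus the upper bound already shows the max is achieved and equals $\alpha^*$); the only structural requirement is that every point mass $\delta_{\bm z}$ be representable, which you correctly isolate and which the paper verifies for its Table~\ref{table:proposal} families (the paper's corresponding Corollary even checks stronger conditions). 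Your characterization of $\mathcal J^*$ as exactly those $(\bm x,\bm\theta)$ whose distribution is supported on $\{\bm z : (\bm x,\bm z)\in\mathcal H^*\}$ also cleanly supplies a step the paper's lemma asserts with a ``clearly'' (that the $\bm x$-component of a PO maximizer attains the global value $\alpha^*$). The second inclusion, via point masses and the observation that no convex combination exceeds the global maximum, coincides with the paper's argument, and your remark on interpreting $\tilde{\bm z}\sim p(\bm Z|\bm\theta)$ via the support matches the paper's intended reading.
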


Algorithm~\ref{alg:bo_pr} outlines BO with probabilistic reparameterization. Importantly, 
Theorem~\ref{thm:consistent_maximizers} states that sampling from the distribution parameterized by a maximizer of the PO yields a maximizer of $\alpha$, and therefore, Algorithm~\ref{alg:bo_pr} enjoys the performance guarantees of $\alpha(\cdot)$.

\begin{restatable}[Regret Bounds]{corollary}{regret}
    \label{corr:regret}
    Let $\alpha(\bm x, \bm z)$ be an acquisition function over a search space $\mathcal X \times \mathcal Z$ such that when $\alpha$ is applied as part of a BO strategy that strategy has bounded regret .
    If the conditions for the regret bounds of that BO strategy using $\alpha$ are satisfied, then Algorithm~\ref{alg:bo_pr} using $\alpha$ enjoys the same regret bound.
\end{restatable}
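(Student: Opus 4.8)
The plan is to reduce the corollary to Theorem~\ref{thm:consistent_maximizers} by observing that any regret bound for a BO strategy built on $\alpha$ depends only on the trajectory of queried designs, and that Algorithm~\ref{alg:bo_pr} produces a trajectory that is a valid realization of the standard policy's trajectory. First I would recall the structure of such regret bounds (e.g., for GP-UCB~\citep{ucb}): at iteration $n$, given the data $\mathcal D_{n-1}$ and the induced posterior, the strategy forms an acquisition function $\alpha_n := \alpha(\cdot\,;\mathcal D_{n-1})$ and queries a maximizer $(\bm x_n,\bm z_n) \in \argmax_{\mathcal X \times \mathcal Z}\alpha_n$; the bound is then a deterministic (or high-probability) consequence of this choice rule together with the model assumptions, and does not otherwise depend on how the maximizer is computed.

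Next I would verify, by induction on $n$, that the designs produced by Algorithm~\ref{alg:bo_pr} satisfy exactly this choice rule. Line 4 computes $(\bm x_n,\bm\theta_n) \in \mathcal J^*$ for the PO built from $\alpha_n$, and line 5 draws $\bm z_n \sim p(\bm Z\mid\bm\theta_n)$; hence $(\bm x_n,\bm z_n) \in \hat{\mathcal H}^*$. Provided $\alpha_n$ is continuous in $\bm x$ for every $\bm z \in \mathcal Z$ --- which holds for the AFs to which the cited regret bounds apply, and is in any case implied by the stated conditions --- Theorem~\ref{thm:consistent_maximizers} gives $\hat{\mathcal H}^* = \mathcal H^* = \argmax_{\mathcal X \times \mathcal Z}\alpha_n$, so $(\bm x_n,\bm z_n)$ is a maximizer of $\alpha_n$. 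The posterior update in line 8 is the same Bayesian conditioning used by the standard strategy. Therefore every sample path of Algorithm~\ref{alg:bo_pr} coincides with some admissible trajectory of the standard BO strategy using $\alpha$, and the regret bound --- whether deterministic, in expectation, or high-probability --- transfers verbatim; in the randomized case one additionally notes that the bound holds path-wise and hence survives taking expectations over the draws of $\bm z_n$.

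The main obstacle I anticipate is bookkeeping rather than a deep difficulty: one must state carefully that Theorem~\ref{thm:consistent_maximizers} is being invoked at each iteration for the iteration-specific acquisition function $\alpha_n$, and that the continuity hypothesis of that theorem is entailed by whatever regularity the cited regret analysis already assumes, so that no new assumption is introduced. A secondary point worth spelling out is that ties --- the case $|\mathcal H^*| > 1$ --- are harmless: Theorem~\ref{thm:consistent_maximizers} guarantees the sampled design lands in $\mathcal H^*$ regardless of which maximizer of the PO is returned, so any tie-breaking rule permitted by the original analysis is also permitted here.
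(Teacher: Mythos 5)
Your proposal is correct and follows essentially the same route as the paper: the paper treats the corollary as an immediate consequence of Theorem~\ref{thm:consistent_maximizers}, noting that sampling $\bm z_n \sim p(\bm Z \mid \bm\theta_n)$ at a PO-maximizer $(\bm x_n,\bm\theta_n)$ yields a point of $\mathcal H^*$, so Algorithm~\ref{alg:bo_pr} selects a maximizer of $\alpha$ at every iteration and therefore implements the standard BO policy, inheriting its regret bound verbatim. Your added bookkeeping (per-iteration acquisition function $\alpha_n$, tie-breaking, path-wise transfer of high-probability bounds) is a careful elaboration of the same argument rather than a different approach.
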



Examples of BO policies with bounded regret include those based on AFs such as upper confidence bound (UCB) \citep{ucb} or Thompson sampling (TS) \citep{russo2014learning} for single objective optimization, and UCB or TS with Chebyshev \citep{paria2018flexible} or hypervolume \citep{golovin2020random} scalarizations in the multi-objective setting.

Although the BO policy selects a maximizer of $\alpha$ is equivalent to the BO policy in Algorithm~\ref{alg:bo_pr}, maximizing the AF over mixed or high-dimensional discrete search spaces is challenging because commonly used gradient-based methods cannot directly be applied. 
The key advantage of our approach is that maximizers of the AF can be identified efficiently and effectively by optimizing the PO using gradient information instead of directly optimizing the AF.
We find that optimizing \PR{} yields better results than directly optimizing $\alpha$ or other common relaxations as shown in Figure~\ref{fig:intro_figure}(Left), where we compare AF optimization methods on the mixed Rosenbrock test problem (see Appendix~\ref{appdx:exp_details} for details).
\vspace{-1ex}
\section{Practical Monte Carlo Estimators}
\vspace{-1ex}
\subsection{Unbiased estimators of the Probabilistic Reparameterization and its Gradient}
\vspace{-1ex}
As the number of discrete configurations ($|\mathcal Z|$) increases,
the PO and its gradient may become computationally expensive to evaluate analytically because both require a summation of $|\mathcal Z|$ terms. 
Therefore, we propose to estimate the PO and its gradient using Monte Carlo (MC) sampling. 
The MC estimator of the PO is given by
\vspace{-1ex}
\begin{equation}
    \label{eq:mc_estimator}
    \mathbb{E}_{\bm Z \sim p(\bm Z | \bm \theta)}[\alpha(\bm x, \bm Z)] \approx \frac{1}{N}\sum_{i=1}^N \alpha(\bm x, \tilde{\bm z}_i),
\end{equation}
where $\tilde{\bm z}_1, ..., \tilde{\bm z}_N$ are samples from $p(\bm Z | \bm \theta)$. 
This estimator is unbiased and can be computed for a large number of samples by evaluating the AF independently (or in chunks) for each input $(\bm x, \tilde{\bm z}_n)$.

MC can also be used to estimate the gradient of the PO with respect to $\bm \theta$. We opt for using a score function gradient estimator \citep{kleijnen1996optimization} (also known as REINFORCE \citep{williams1992simple} and the likelihood ratio estimator \citep{glynn1990likelihood}) because it is simple, scalable, and can be computed using the acquisition values $\{\alpha(\bm x, \tilde{\bm z}_i)\}_{i=1}^N$ that are used in the MC estimator of the PO.
Many alternative lower variance estimators (e.g. \citet{yin2020probabilistic, ARSM_ICML2019}) would require many additional AF evaluations (see \citet{mohamed2020monte} for a review of MC gradient estimation). 
The score function is the gradient of the log probability with respect to the parameters of the distribution:
$\nabla_{\bm \theta}\log p(\bm Z | \bm \theta) = \frac{\nabla_{\bm \theta}p(\bm Z | \bm \theta)}{p(\bm Z | \bm \theta)}.$
Using this score function, we can express the analytic gradient as
\begin{equation*}
    \nabla_{\bm \theta} \mathbb E_{\bm Z \sim p(\bm Z | \bm \theta)}[\alpha(\bm x, \bm Z)] = \sum_{\bm z \in \mathcal Z} \alpha(\bm x, \bm z)p(\bm z| \bm \theta)\nabla_{\bm \theta}\log p(\bm z | \bm \theta)=\mathbb{E}_{\bm Z \sim p(\bm Z | \bm \theta)}[\alpha(\bm x,\bm Z)\nabla_{\bm \theta}\log p(\bm Z | \bm \theta)].
\end{equation*}
The unbiased MC estimator of the gradient of the PO with respect to $\bm \theta$ is given by
\begin{equation}
    \label{eq:score_func_estimator}
    \nabla_{\bm \theta} \mathbb E_{\bm Z \sim p(\bm Z | \bm \theta)}[\alpha(\bm x, \bm Z)] \approx \frac{1}{N}\sum_{i=1}^N \alpha(\bm x, \tilde{\bm z}_i)\nabla_{\bm \theta}\log p(\tilde{\bm z}_i | \bm \theta).
\end{equation}
Since the score function gradient is only defined when $p(\bm z | \bm \theta) > 0$, we reparameterize $\bm \theta$ to ensure $p(\bm z | \bm \theta) > 0$ for all $\bm z$ and $\bm \theta$ by using the softmax transformations provided in Table~\ref{table:transformation}, which are commonly used for computational convenience and stability in probablistic reparameterization \citep{yin2020probabilistic,ARSM_ICML2019}, and the solution converges as $\tau \rightarrow 0$.
\begin{wrapfigure}{r}{0.6\textwidth}
\begin{minipage}{0.6\textwidth}
    \vspace{-1ex}
    \centering
    \begin{small}
    \begin{sc}
    \begin{tabular}{ll}
        \toprule
        Parameter Type & Transformation ($\theta = g(\phi)$) \\
        \midrule
        Binary & $\theta = \sigma((\phi - \frac{1}{2})/\tau)$\\
        Ordinal & $\theta = \lfloor \phi \rfloor + \sigma((\phi- \lfloor \phi \rfloor - \frac{1}{2}) / \tau)$\\
        Categorical & $\theta^{(c)} = \textsc{softmax}((\bm \phi-0.5)/\tau)^{(c)}$\\
        \bottomrule
    \end{tabular}
    \end{sc}
    \end{small}
    \captionof{table}{\label{table:transformation} Transformations where $\tau \in \mathbb R_+$ and $\phi, \theta \in \Theta$. 
    }
    \vspace{-1ex}
\end{minipage}
\end{wrapfigure}
Moreover, even though $p(\bm z | \bm \theta) > 0$, when $p(\bm z | \bm \theta)$ is small, a small number $N$ of MC samples are unlikely to produce any samples where $\tilde{\bm z} = \bm z$. 
Instead of optimizing $\bm \theta$ directly, we instead optimize $\bm \phi$. 
Since the transformations $g(\cdot)$ are differentiable with respect to $\bm \phi$, the gradient (and MC gradient estimator) of the PO with respect to $\bm \phi$ are easily obtained using the gradient of the PO with respect to $\bm \theta$ and a simple application the chain rule (multiplying by $\nabla_{\bm \phi}\bm \theta$).
\subsection{Variance Reduction in Monte Carlo Gradient Estimation}
\label{sec:var_reduction}
Although the MC gradient estimator in \eqref{eq:score_func_estimator} is unbiased, score function gradient estimators can suffer from high variance \citep{mohamed2020monte}. Therefore, we adopt  a popular technique for variance reduction where the score function itself is used as a control variate, since its expectation is zero under $p(\bm Z |\bm \theta)$ \citep{mohamed2020monte}. Score function estimators with this control variate have been shown to be among the best performing gradient estimators  \citep{mohamed2020monte}. Moreover, this technique is simple and merely amounts to subtracting a value $\beta$ from the acquisition value in the score function estimator in Equation \eqref{eq:score_func_estimator}:
\begin{equation}
    \label{eq:score_func_estimator_baseline}
    \nabla_{\bm \theta} \mathbb E_{\bm Z \sim p(\bm Z | \bm \theta)}[\alpha(\bm x, \bm Z)] \approx \frac{1}{N}\sum_{i=1}^N [\alpha(\bm x, \tilde{\bm z}_i)-\beta]\nabla_{\bm \theta}\log p(\tilde{\bm z}_i | \bm \theta).
\end{equation}
The $\beta$ is commonly known as a baseline and is often taken to be a moving average of the (acquisition) values \citep{mohamed2020monte}. See Appendix \ref{appdx:exp_details} for details on $\beta$.
\vspace{-1ex}
\subsection{Convergence Guarantee using Stochastic Gradient Ascent}
\vspace{-1ex}
Since the score function gradient estimator is unbiased, we can leverage previous work on convergence in probability under stochastic gradient ascent \citep{robbins_monro} to arrive at our main convergence result for acquisition optimization.


\begin{restatable}[Convergence Guarantee]{theorem}{converge}
    \label{thm:converge}
Let $\alpha: \mathcal{X \times Z} \rightarrow \mathbb{R}$ be differentiable in $\bm x$ for every $\bm z \in \mathcal{Z}$.
    Let $(\hat{\bm x}_{t,m}, \hat{\bm \theta}_{t,m})$ be the best solution after running stochastic gradient ascent for $t$ time steps on the probabilistic objective $\mathbb{E}_{\bm Z \sim p(\bm Z | \bm \theta)}[\alpha(\bm x, \bm Z)]$ from $m$ starting points with its unbiased MC estimators proposed above. 
    Let $\{a_t\}_{t=1}^\infty$ be a sequence of positive step sizes such that $0 < \sum_{t=1}^\infty a_t^2 = A < \infty$ and $\sum_{t=1}^\infty a_t = \infty$, where $a_t$ is the step size used in stochastic gradient ascent at time step $t$. 
    Let $\hat{\bm z}_{t, m}\sim p(\bm Z|  \hat{\bm \theta}_{t, m})$.
    Then as $t \rightarrow \infty$, $m \rightarrow \infty$, and $\tau \rightarrow 0$, $(\hat{\bm x}_{t, m},\hat{\bm z}_{t, m}) \rightarrow (\bm x^*, \bm z^*) \in \argmax_{(\bm x, \bm z) \in \mathcal X \times \mathcal Z} \alpha(\bm x, \bm z)$ in probability.
\end{restatable}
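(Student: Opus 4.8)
The plan is to peel off the three limits, handling $t\to\infty$ first (for a fixed reparameterization temperature $\tau$), then $m\to\infty$, and finally $\tau\to 0$, and then to invoke Theorem~\ref{thm:consistent_maximizers} to pass from ``approximate maximizer of the PO'' to ``approximate maximizer of $\alpha$''. Throughout, write $h_\tau(\bm x,\bm\phi):=\mathbb E_{\bm Z\sim p(\bm Z\mid g(\bm\phi))}[\alpha(\bm x,\bm Z)]$ for the PO expressed in the unconstrained parameters $\bm\phi$ of Table~\ref{table:transformation}, and set $\alpha^\star:=\max_{(\bm x,\bm z)\in\mathcal X\times\mathcal Z}\alpha(\bm x,\bm z)$, which exists because $\mathcal Z$ is finite and each $\alpha(\cdot,\bm z)$ is continuous on the compact set $\mathcal X$. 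Since $h_\tau(\bm x,\bm\phi)=\sum_{\bm z}p(\bm z\mid g(\bm\phi))\,\alpha(\bm x,\bm z)$ is a convex combination of the values $\alpha(\bm x,\bm z)$, we always have $h_\tau\le\alpha^\star$, so the whole argument reduces to showing $h_\tau(\hat{\bm x}_{t,m},\hat{\bm\phi}_{t,m})\to\alpha^\star$ in probability together with a rounding step at the end.

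\emph{Step 1 (fix $\tau$, let $t\to\infty$).} For fixed $\tau>0$ the maps $\bm\phi\mapsto p(\bm z\mid g(\bm\phi))$ are $C^1$ with derivatives bounded on the compact domain $\Theta$, and by hypothesis $\alpha$ is differentiable in $\bm x$ with gradient bounded on the compact $\mathcal X\times\mathcal Z$; hence $h_\tau$ is continuously differentiable in $(\bm x,\bm\phi)$ with Lipschitz gradient, and the score-function estimators in~\eqref{eq:mc_estimator} and~\eqref{eq:score_func_estimator_baseline} are unbiased with bounded variance. With step sizes obeying $\sum_t a_t=\infty$ and $\sum_t a_t^2<\infty$, the classical stochastic-approximation argument of \citet{robbins_monro} (in its form for smooth, bounded, nonconvex objectives) then yields that, from any fixed initialization, stochastic gradient ascent converges almost surely to the set of stationary points of $h_\tau$; in particular $\nabla h_\tau(\hat{\bm x}_t,\hat{\bm\phi}_t)\to 0$.

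\emph{Step 2 (let $m\to\infty$).} Draw the $m$ starting points i.i.d.\ from a distribution with full support on $\mathcal X\times\Theta$ (e.g.\ uniform). The function $h_\tau$ attains its supremum at some $(\bm x^\star_\tau,\bm\phi^\star_\tau)$, and because $h_\tau$ is smooth, the set of initializations from which noisy gradient ascent with the given diminishing step sizes reaches objective value within $\epsilon$ of $\sup h_\tau$ contains an open neighborhood of $(\bm x^\star_\tau,\bm\phi^\star_\tau)$ and hence has positive measure; therefore the probability that at least one of the $m$ starts falls in this region tends to $1$ as $m\to\infty$. Since $(\hat{\bm x}_{t,m},\hat{\bm\phi}_{t,m})$ is chosen as the best of the $m$ runs in PO value, we conclude $h_\tau(\hat{\bm x}_{t,m},\hat{\bm\phi}_{t,m})\to\sup_{(\bm x,\bm\phi)}h_\tau$ in probability as $t,m\to\infty$. (Alternatively one can partially bypass basins of attraction: by continuity of $h_\tau$ and density of the i.i.d.\ starts, $\max_{j\le m}h_\tau(\text{start}_j)\to\sup h_\tau$ in probability, while gradient ascent does not decrease the objective along the deterministic flow and the stochastic correction is a summable-variance martingale by $\sum_t a_t^2<\infty$.)

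\emph{Step 3 (let $\tau\to0$) and conclusion.} For any fixed optimal $(\bm x^\star,\bm z^\star)$, choosing $\bm\phi$ so that $g(\bm\phi)$ concentrates $p(\cdot\mid\theta)$ on $\bm z^\star$ (pushing each coordinate of $\bm\phi$ to the appropriate endpoint of $\Theta$) makes $p(\bm z^\star\mid g(\bm\phi))\to1$ as $\tau\to0$, so $\sup h_\tau\to\alpha^\star$, using also $\sup h_\tau\le\alpha^\star$. Sending $\tau\to0$ after $t,m\to\infty$ then gives $h_\tau(\hat{\bm x}_{t,m},\hat{\bm\phi}_{t,m})\to\alpha^\star$ in probability. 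Now fix $\delta>0$: since $h_\tau(\hat{\bm x}_{t,m},\hat{\bm\phi}_{t,m})=\sum_{\bm z}p(\bm z\mid\hat{\bm\theta}_{t,m})\,\alpha(\hat{\bm x}_{t,m},\bm z)$ is a convex combination bounded above by $\alpha^\star$, being within $\epsilon$ of $\alpha^\star$ forces the total weight on $\{\bm z:\alpha(\hat{\bm x}_{t,m},\bm z)\le\alpha^\star-\delta\}$ to be at most $\epsilon/\delta$; hence a draw $\hat{\bm z}_{t,m}\sim p(\cdot\mid\hat{\bm\theta}_{t,m})$ satisfies $\alpha(\hat{\bm x}_{t,m},\hat{\bm z}_{t,m})\ge\alpha^\star-\delta$ with probability at least $1-\epsilon/\delta$, i.e.\ $\alpha(\hat{\bm x}_{t,m},\hat{\bm z}_{t,m})\to\alpha^\star$ in probability. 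Because $\alpha$ is continuous on the compact $\mathcal X\times\mathcal Z$ and $\mathcal H^\star=\argmax\alpha$ is closed and nonempty, convergence of the value to $\alpha^\star$ is equivalent to $d\big((\hat{\bm x}_{t,m},\hat{\bm z}_{t,m}),\mathcal H^\star\big)\to0$ in probability, which is the claim; this last implication is the quantitative counterpart of Theorem~\ref{thm:consistent_maximizers}. The main obstacle is Step 2 --- making rigorous that multi-start \emph{stochastic} gradient ascent on the nonconvex PO recovers a \emph{global} (not merely local) maximizer, i.e.\ that the set of good initializations has positive measure and that the diminishing-step-size noise does not eject an iterate from a good basin --- together with the bookkeeping that $\tau$ must be driven to zero only after $t,m\to\infty$ (or along a slow-enough schedule $\tau_t\to0$), since the smoothness constants of $h_\tau$ degrade like $1/\tau$ because the transformations $g$ have slope of order $1/\tau$.
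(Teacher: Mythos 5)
Your proposal is correct and follows essentially the same route as the paper's proof: Robbins--Monro stochastic approximation gives convergence of each SGA run to a local maximizer of the differentiable PO, multi-start with $m\rightarrow\infty$ recovers the global maximum (the paper simply cites \citep{wang2016parallel} for this, at the same level of rigor as your Step~2 sketch), and the equivalence of maximizers (Theorem~\ref{thm:consistent_maximizers}) transfers the conclusion to $\alpha$. Your treatment is in fact somewhat more careful than the paper's on two points it glosses over---the $\tau\rightarrow 0$ limit of the softmax transformation and the passage from an \emph{approximate} PO maximizer to the sampled $\hat{\bm z}_{t,m}$ via the convex-combination/Markov argument---whereas the paper invokes Theorem~\ref{thm:consistent_maximizers} only for exact maximizers.
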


The significance of Theorem~\ref{thm:converge} is that optimizing the PO is guaranteed to converge in probability to a global maximizer of the AF, meaning that optimizing the PO guarantees that resulting candidate design has maximal AF value. 
The implication is that the intended BO policy is followed and the underlying regret bounds of the AF are recovered (provided that the other conditions of the regret bound are met). Although global convergence is only guaranteed as $m \rightarrow \infty$, we observe in Figure~\ref{fig:intro_figure}(left) that \PR{} yields strong, stable acquisition optimization with only $m=20$ starting points, $200$ steps, and $\tau=\frac{1}{10}$ (see Appendix~\ref{appdx:tau} for further discussion) and outperforms alternative optimization approaches.
\section{Related Work}
\vspace{-1ex}
\label{sec:related_work}
Many methods for BO over discrete and mixed search spaces have been proposed. 
Previous work has largely focused on (i) improving the surrogate models or (ii) improving AF optimization.

\textbf{Improving models}: 
Historically, methods leveraging tree-based surrogate models, e.g., \smac{} \citep{smac} and \tpe{} \citep{tpe}, have been popular for optimizing discrete or mixed search spaces. 
Many recent works have considered alternative surrogate models. \bocs{} encodes categorical parameters as binary variables and uses Bayesian linear regression with pairwise interactions \citep{baptista2018bayesian}.
\combo{} uses a diffusion kernel on the graph defined by the Cartesian product of discrete parameters \citep{oh2019combinatorial}.
\mercbo{} similarly exploits the combinatorial graph, but with Mercer features and Thompson sampling \citep{deshwal2021mercer}.
\hybo{} extends the diffusion kernels to mixed continuous-discrete spaces \citep{deshwal2021bayesian}. 
However, these methods scale poorly with respect to the number of data points and parameters. 
Moreover, of the methods listed above, only \hybo{} supports continuous parameters without restricting them to a discrete set. 
\hybo{} enjoys a universal function approximation property, but relies on summing over all possible orders of interactions between base kernels for each parameter which results in exponential complexity with respect to the number of parameters and limits its applicability to low-dimensional problems. Moreover, the computational issues of such approaches make it difficult to apply them to multi-objective and constrained optimization.
\gryffin{} \citep{hase2021gryffin} uses kernel density estimation, but is limited to categorical search spaces.
\mivabo{} uses a linear combination of basis functions (e.g. pseudo Boolean features \citep{BOROS2002155} for discrete parameters) with interaction terms\citep{mixed_var_bo}. 
\mvrsm{} \citep{bliek2021} uses ReLU-based surrogates for computational efficiency, but is limited by the expressiveness of these models. 

\textbf{Optimizing acquisition functions}: 
As discussed previously, \citet{GarridoMerchn2020DealingWC} propose using continuous relaxation and discretize the inputs before evaluating the AF.
However, the resulting AF after discretization is piece-wise-flat along slices of the continuous relaxation of the discrete parameters and therefore is difficult to optimize. 
\cocabo{} \citep{ru2020bayesian} samples discrete parameters using a multi-armed bandit and optimizes the continuous parameters conditional upon the sampled discrete parameters. 
However, \cocabo{}'s performance degrades as number of discrete configurations increases. 
\casmo{} \citep{wan2021think} uses local trust regions combined with an interleaved AF optimization strategy that alternates between local search steps on the discrete parameters and gradient ascent for the continuous parameters.
Furthermore, both \cocabo{} and \casmo{} do not inherently exploit ordinal structure.

\textbf{Probabilistic reparameterization}:
\PR{} has been considered for optimizing discrete parameters in other domains such as reinforcement learning \citep{williams1992simple} and sparse regression \citep{yin2020probabilistic}.
However, \PR{} has not been leveraged for BO. Although the reparameterization trick used by \citet{wilson2018maxbo} is in a similar vein to \PR{}, \citet{wilson2018maxbo} reparameterize an existing multivariate normal random variable in terms of standard normal random variables and then use sample-path gradient estimators. In contrast, our approach introduces a new probabilistic formulation using discrete probability distributions and uses likelihood-ratio-based gradient estimators since sample-path gradients cannot be computed through discrete sampling. 

\textbf{Alternative methods for propagating gradients}: 
Alternative methods for propagating gradients through discrete structures have been considered in the deep learning community (among others). 
One approach is to use approximate discrete Concrete distributions \citep{maddison2016concrete, jang2016categorical}, which admit sample-path gradients. 
However, samples from Concrete distributions are not discrete and approximation error can result in pathologies similar to evaluating the AF using continuous relaxation. 
Moreover, approximately discrete samples prohibit using surrogate models that require discrete inputs (without discretizing the samples)---e.g., GPs with Hamming distance kernels \citep{ru2020bayesian}.
Another approach for gradient propagation in the deep learning community is to use straight-through gradient estimators (STE) \citep{bengio2013estimating}, where the gradient of the discretization function with respect to its input is estimated using, for example, an identity function.
This approach works well empirically in some cases, these estimators are not well-grounded theoretically. 
Nevertheless, we discuss and evaluate using STE for AF optimization in Appendix~\ref{appdx:alternative_methods}.

\vspace{-1ex}
\section{Experiments}
\vspace{-1ex}
In this section, we provide an empirical evaluation of \PR{} on a suite of synthetic problems and real world applications. For \PR{}, we use stochastic mini-batches of $N=128$ MC samples in our experiments and demonstrate that \PR{} is robust with respect to the number of MC samples (and compare against analytic PR, where computationally feasible) in Appendix~\ref{appdx:pr_mc_analysis}. We optimize \PR{} using Adam \citep{kingma2014adam} with an initial learning rate of $\frac{1}{40}$. We show that \PR{} Adam is generally robust to the choice of learning rate (more so than vanilla stochastic gradient ascent) in the sensitivity analysis in Figure~\ref{fig:saa_vs_adam} in Appendix~\ref{appdx:sgd_vs_saa}.
We compare \PR{} against two alternative acquisition optimization strategies: using a continuous relaxation (\contbo{}) and using exact discretization with approximate gradients (\exactround{}) \citep{GarridoMerchn2020DealingWC}. These approaches optimize the acquisition function with L-BFGS-B with exact and approximate gradients, respectively.
In addition, we compare against two state-of-the-art methods for discrete/mixed BO: a modified version of \casmo{} \citep{wan2021think} that additionally supports ordinal variables introduced in \citet{wan2022bayesian} and \hybo{} \citep{deshwal2021bayesian}, both of which are shown to outperform the other related works discussed in Section~\ref{sec:related_work}. 
In addition, we showcase how \PR{} is complementary to existing methods such as trust region methods \citep{turbo}.
We demonstrate this by using \PR{} with a trust region for the continuous and discrete ordinal parameters and optimize \PR{} within this trust region. In Appendix~\ref{appdx:alternative_methods}, we provide comparison of \turbo{} methods with alterative optimizers and find that \PR{} is the best optimizer when using \turbo{}s on 6 of the 7 benchmark problems. 
See Appendix~\ref{appdx:exp_details} for additional discussion of \PR{} + \turbo. 
For \PR{}, \exactround{}, and \PR{} + \turbo{} we use the sum of a product kernel and a sum kernel of a categorical kernel \citep{ru2020bayesian} for the categorical parameters and  Mat\'ern-5/2 kernel for all other parameters.\footnote{\contbo{} is incompatible with a categorical kernel, so we use a Mat\'ern-5/2 with one-hot encoded categorical parameters.} Alternative kernels over different representations of categorical parameters such as one-hot encoded vectors, latent embeddings \citep{zhang2019latent}, and known embeddings (e.g. using fingerprint-based reaction encodings for categorical parameters in chemical reaction optimization \citep{shields2021bayesian}) are evaluated in Appendix~\ref{appdx:alternative_kernels}. 

\contbo{}, \exactround{}, \PR{}, and \PR{} + \turbo{} use expected improvement \citep{jones98, gardner2014constrained} for single objective (constrained problems) and expected hypervolume improvement \citep{emmerich2006} for the multi-objective oil sorbent problem (where exact gradients with respect to continuous parameters are computed using auto-differentiation \citep{daulton2020ehvi}). 
We report the mean for each method $\pm$ 2 standard errors across 20 replications. 
Performance is evaluated in terms of regret (feasible regret for constrained problems and hypervolume regret for multi-objective problems). \casmo{} and \hybo{} are not run on Welded Beam and Oil Sorbent as they do not support constrained and multi-objective optimization. We also leave the multi-objective extension of \PR{}+\turbo{} to future work because it would add additional complexity \citep{morbo}. For \hybo{}, we only run 60 BO iterations on SVM due to the large wall time (see Figure \ref{fig:wall_times}) and only report partial results on Cellular Network due to a singular covariance matrix error. See Appendix~\ref{appdx:exp_details} for details on the experiment setup, regret metrics, benchmark problems, and methodological details. We leverage existing open source implementations of \casmo{}  and \hybo{} (see Appendix~\ref{appdx:exp_details} for links), and the implementations of all of other methods are available at \url{https://github.com/facebookresearch/bo_pr}.

\vspace{-1ex}
\subsection{Synthetic Problems}
\vspace{-1ex}
We evaluate all methods on 3 synthetic problems.
\textbf{Ackley} is a 13-dimensional function with 10 binary and 3 continuous parameters (a modified version of the problem in \citet{bliek2021}).
\textbf{Mixed Int F1} is a 16-dimensional variant of the F1 function from \citet{bbob_mixed_int} with 2 binary, 6 discrete ordinal parameters, and 8 continuous parameters. The discrete ordinal parameters have following cardinalities: 2 parameters with 3 values, 2 with 5 values, and 2 with 7 values.
\textbf{Rosenbrock} is a 10-dimensional Rosenbrock function with 6 discrete ordinal parameters with 4 values each and 4 continuous parameters.

\vspace{-1ex}
\subsection{Real World Problems}
\label{subsec:real_world_problems}
\vspace{-1ex}
We consider 5 real world applications including a problem with 5 black-box outcome constraints and a 3-objective problem (see Appendix~\ref{appdx:constrained_bo_mobo} for details on constrained and multi-objective BO). 

\textbf{Welded Beam} Optimizing the design of a welded steel beam is a classical engineering optimization. 
In this problem, the goal is to minimize manufacturing cost subject to 5 black-box constraints on structural properties of the beam (including shear stress, bending stress, and buckling load) by tuning 6 parameters: the welding configuration (binary), the metal material type (categorical with 4 options), and 4 ordinal parameters controlling the dimensions of the beam \citep{tran2019constrained}.

\textbf{SVM Feature Selection} This problem involves jointly performing feature selection and hyperparameter optimization for a Support Vector Machine (SVM) trained on the CTSlice UCI data set \citep{uci,sparse_bo}. 
The design space for this problem involves 50 binary parameters controlling whether a particular feature is included or not, and 3 continuous hyperparameters of the SVM.

\textbf{Cellular Network Optimization} In this 30-dimensional problem, the goal is to tune the tilt (ordinal with 6 values) and transmission power (continuous) for a set of 15 antennas \citep{samal2021dynamic} to maximize a coverage quality metric that is a function of signal power and interference \citep{maddox2021} over a geographic region of interest. 
We use the simulator from \citet{dreifuerst2021}.

\textbf{Direct Arylation Chemical Synthesis} Palladium-catalysed direct arylation has generated significant interest in the pharamceutical development sector\citep{davies2016recent}. 
In this problem, the goal is maximize yield for a direct arylation chemical reaction by tuning 3 categorical parameters corresponding to the choice of solvent, base, and ligand, as well 2 continuous parameters controlling the temperature and concentration. 
We fit a surrogate model to the direct arylation dataset from \citet{shields2021bayesian} in order to facilitate continuous optimization of temperature and concentration. In Appendix~\ref{appdx:alternative_kernels}, we demonstrate that \PR{} can leverage a kernel over fingerprint-based reaction encodings computed via density functional theory (DFT) for the categorical parameters \citep{shields2021bayesian}.

\textbf{Electrospun Oil Sorbent} Marine oil spills can cause ecological catastrophe. 
One avenue for mitigating environmental harm is to design and deploy absorbent materials to capture the spilled oil. 
In this problem, we tune 5 ordinal parameters (3 parameters with 5 values and 2 with 4 values) and 2 continuous parameters controlling the composition and manufacturing conditions for an electrospun oil sorbent material to maximize 3 competing objectives: the oil absorbing capacity, the mechanical strength, and the water contact angle \citep{oil_solbent}.

\begin{figure*}[!ht]
    \vspace{-1ex}
    \centering
    \includegraphics[width=\linewidth]{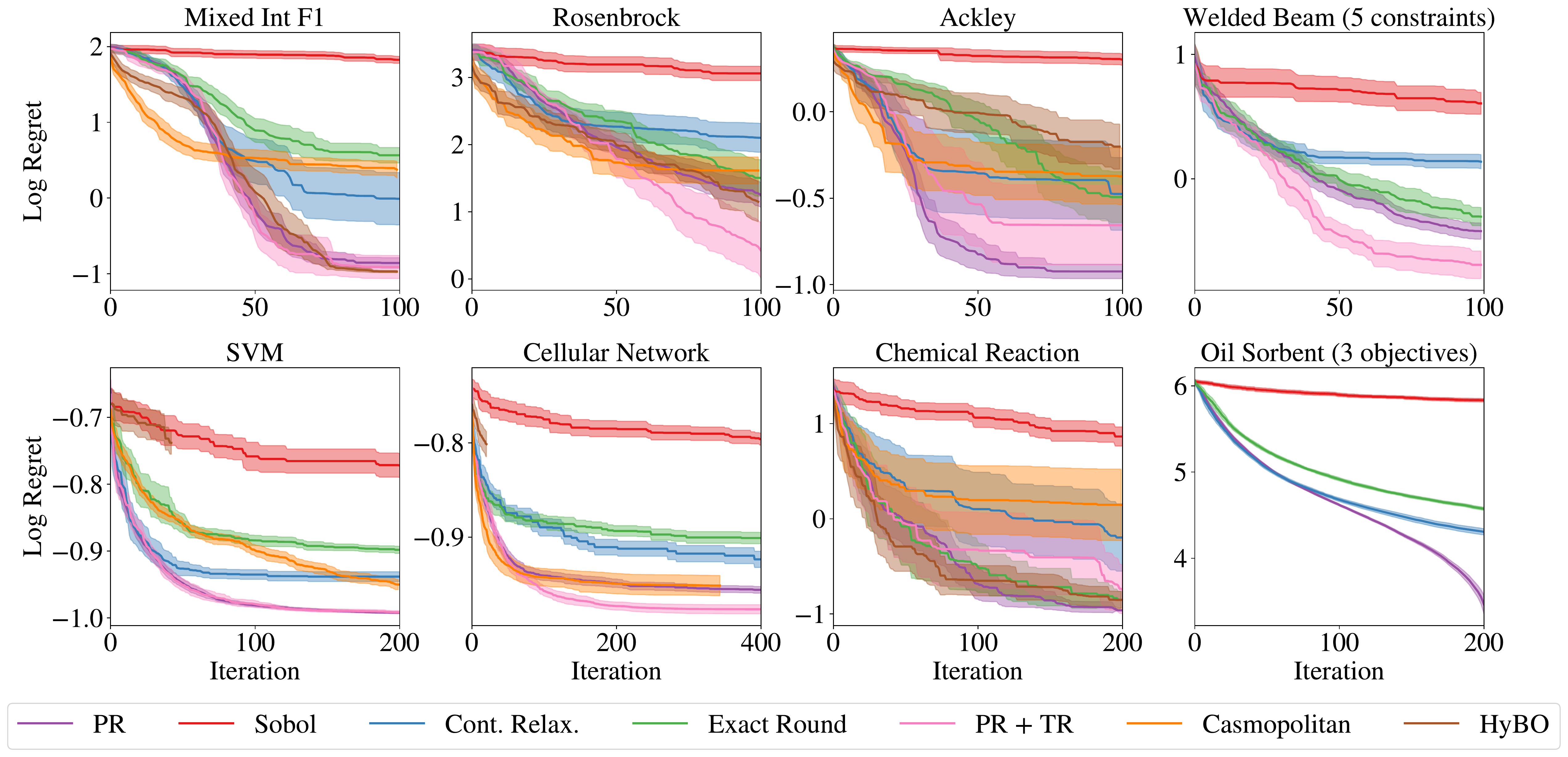}
    \caption{
        \PR{} (or \PR{} + \turbo{}) consistently outperforms alternatives with respect to log regret. 
    }
    \label{fig:exp1}
    \vspace{-1ex}
\end{figure*}
\FloatBarrier
\begin{figure*}[!ht]
    \centering
    \includegraphics[width=\linewidth]{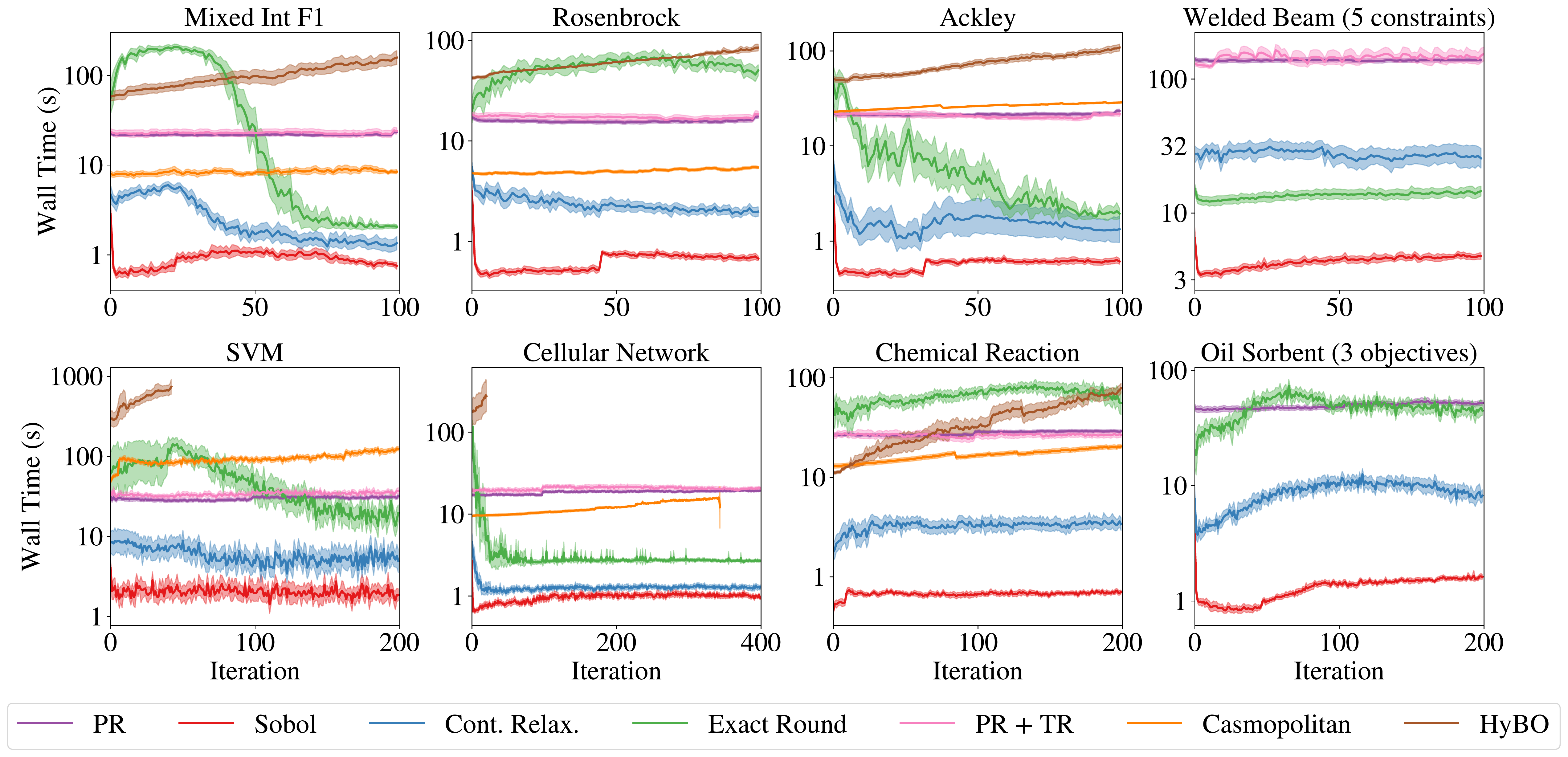}
    \caption{
        Wall time for candidate generation at each BO iteration in seconds. 
        \contbo{}, \exactround{}, \PR{}, and \PR{} + \turbo{} are run on a single Tesla V100-SXM2-16GB GPU and other methods are run on an Intel Xeon Gold 6252N CPU.}
    \label{fig:wall_times}
\end{figure*}

\subsection{Results}
\vspace{-1ex}
We find \PR{} consistently delivers strong empirical performance as shown in Figure~\ref{fig:exp1}. 
\emph{On all benchmark problems, \PR{} (or \PR{} + \turbo{}) outperforms all baseline methods (except for Mixed Int F1, where \hybo{} performs comparably)}.
Figure~\ref{fig:wall_times} shows the wall time for candidate generation over the number of BO iterations. 
Although \PR{} is computationally intensive, the computation is embarrassingly parallel and therefore exploiting GPU acceleration yields competitive wall times. 
Importantly, \PR{}'s wall time scales well with the number of observations and design parameters, unlike \hybo{} which scales poorly with both. 
However, the complexity of \PR{} scales additively in the number of GPs being used (e.g. outcomes being modeled), assuming they are evaluated sequentially. 
Hence, in multi-objective or constrained settings, \PR{} incurs a high cost in terms of wall time. However, empirically \PR{} achieves better optimization performance on constrained and multi-objective problems relative to \contbo{} and \exactround{}. 
We note that \casmo{} does not support multi-objective BO or constrained BO, and although \hybo{} could be used in those settings, it would be impractically slow because 1) its wall time would scale linearly with the number of modeled outcomes (using independent GPs) and 2) its diffusion kernel is non-differentiable, which would make optimizing hypervolume-based AFs slow \citep{daulton2020ehvi, NEURIPS2021_11704817}.

\vspace{-1ex}
\section{Discussion}
\vspace{-1ex}
The performance and regret properties of BO depend critically on properly maximizing the AF.  
For problems with discrete features, exhaustively trying all possible combinations of discrete values quickly becomes infeasible as the number of combinations grows.
Alternatives such as trying a subset of the possible combinations or resorting to continuous relaxations often leads to a failure to effectively optimize the AF which may result in sub-optimal BO performance.
As an alternative, we propose using \PR{} to better optimize the AF, and we demonstrate that \PR{} achieves strong performance on a large number of real-world problems.
Our approach is complementary to many other BO extensions, and combines seamlessly with, for example, trust region-based BO and specialized kernels for discrete parameters.
One limitation of \PR{} is that it requires computationally-demanding MC integration. However, given that the computation in \PR{} is embarrassingly parallel, it motivates for future research on optimizing AFs on distributed hardware.

\section*{Acknowledgements}
We thank Ben Letham, James Wilson, and Michael Cohen, as well as the members of the Oxford Machine Learning Research Group, for providing insightful feedback.


\FloatBarrier
\bibliographystyle{plainnat}
\bibliography{neurips_2022_updated}
\section*{Checklist}


\begin{enumerate}

\item For all authors...
\begin{enumerate}
  \item Do the main claims made in the abstract and introduction accurately reflect the paper's contributions and scope?
    \answerYes{}
  \item Did you describe the limitations of your work?
    \answerYes{}
  \item Did you discuss any potential negative societal impacts of your work?
    \answerYes{} See Appendix~\ref{appdx:society}.
  \item Have you read the ethics review guidelines and ensured that your paper conforms to them?
    \answerYes{}
\end{enumerate}

\item If you are including theoretical results...
\begin{enumerate}
  \item Did you state the full set of assumptions of all theoretical results?
    \answerYes{}
        \item Did you include complete proofs of all theoretical results?
    \answerYes{}
\end{enumerate}

\item If you ran experiments...
\begin{enumerate}
  \item Did you include the code, data, and instructions needed to reproduce the main experimental results (either in the supplemental material or as a URL)?
    \answerYes{}
  \item Did you specify all the training details (e.g., data splits, hyperparameters, how they were chosen)?
    \answerYes{}
        \item Did you report error bars (e.g., with respect to the random seed after running experiments multiple times)?
    \answerYes{}
        \item Did you include the total amount of compute and the type of resources used (e.g., type of GPUs, internal cluster, or cloud provider)?
    \answerYes{} See Appendix~\ref{appdx:exp_details}.
\end{enumerate}

\item If you are using existing assets (e.g., code, data, models) or curating/releasing new assets...
\begin{enumerate}
  \item If your work uses existing assets, did you cite the creators?
    \answerYes{}
  \item Did you mention the license of the assets?
    \answerYes{} See Appendix~\ref{appdx:exp_details}.
  \item Did you include any new assets either in the supplemental material or as a URL?
    \answerNo{}
  \item Did you discuss whether and how consent was obtained from people whose data you're using/curating?
    \answerNA
  \item Did you discuss whether the data you are using/curating contains personally identifiable information or offensive content?
    \answerNA
\end{enumerate}

\item If you used crowdsourcing or conducted research with human subjects...
\begin{enumerate}
  \item Did you include the full text of instructions given to participants and screenshots, if applicable?
    \answerNA
  \item Did you describe any potential participant risks, with links to Institutional Review Board (IRB) approvals, if applicable?
    \answerNA
  \item Did you include the estimated hourly wage paid to participants and the total amount spent on participant compensation?
    \answerNA
\end{enumerate}

\end{enumerate}
\clearpage
\appendix

\begin{center}
\hrule height 4pt
\vskip 0.25in
\vskip -\parskip
    {\LARGE\bf  Appendix to:\\[2ex] \papertitle}
\vskip 0.29in
\vskip -\parskip
\hrule height 1pt
\vskip 0.2in%
\end{center}

\section{Potential Societal Impacts}
\label{appdx:society}
Our work advances Bayesian optimization, a generic class of methods for optimization of expensive, difficult-to-optimize black-box problems. With this paper in particular, we improve the performance of Bayesian optimization on problems with mixed types of inputs. Given the ubiquity of such problems in many practical applications, we believe that our method could lead to positive broader impacts by solving these problems better and more efficiently while reducing the costs incurred for solving them. Concrete and high-stake examples where our method could be potentially applied (some of which have been already demonstrated by the benchmark problems considered in the paper) include but are not limited to applications in communications, chemical synthesis, drug discovery, engineering optimization, tuning of recommender systems, and automation of machine learning systems. On the flip side, while the method proposed is ethically neutral, there is potential of misuse given that the exact objective of optimization is ultimately decided by the end users; we believe that practitioners and researchers should be aware of such possibility and aim to mitigate any potential negative impacts to the furthest extent.

\section{Theoretical Results and Proofs}
\label{appdx:proofs}

\subsection{Results}

Let $\mathcal{P}_\mathcal{Z}^{(i)} := \mathcal{P}(\mathcal{Z}^{(i)})$ denote the set of probability measures on $\mathcal{Z}^{(i)}$ for each $i=1, ..., d_z$, and let $\mathcal{P}_\mathcal{Z}:= \prod_{i=1}^{d_z} \mathcal{P}_\mathcal{Z}^{(i)}$. For any $\alpha: \mathcal{X} \times \mathcal{Z} \rightarrow \mathbb{R}$, define $\tilde{\alpha}: \mathcal{X} \times \mathcal{P} \rightarrow \mathbb{R}$ as
\begin{align}
    \tilde{\alpha}(\bm x, p) = \int_\mathcal{Z} \alpha(\bm x, \bm z) dp(\bm z) = \sum_{\bm z\in \mathcal{Z}} \alpha(\bm x, \bm z) p(\{\bm z\}). 
\end{align}
Let $\Theta$ be a compact metric space, and consider the set of functionals $\Phi =\{\varphi ~s.t.~ \varphi: \Theta \rightarrow \mathcal{P}_\mathcal{Z}\}$. Let
\begin{align}
    \hat{\alpha}(\bm x, \bm \theta) := \tilde{\alpha}(\bm x, \bm \varphi(\bm \theta)) = \int_\mathcal{Z} \alpha(\bm x, \bm z) dp_{\varphi(\bm \theta)}(\bm z) = \sum_{\bm z\in \mathcal{Z}} \alpha(\bm x, \bm z) p_{\varphi(\bm \theta)}(\{\bm z\})
\end{align}
Since $\mathcal{Z}$ is finite, each element of $\varphi \in \Phi$ can be expressed as a mapping from $\Theta$ to $\mathbb{R}^{|\mathcal{Z}|}$. Namely, each $\varphi(\bm \theta)$ corresponds to a vector with $|\mathcal{Z}|$ elements containing the probability mass for each element of $\mathcal{Z}$ under $p_\varphi(\bm \theta)$.
Thus $(\mathcal{P}_\mathcal{Z}, \| \cdot \|)$ is a metric space under any norm $\|\cdot\|$ on $\mathbb{R}^{|Z|}$. Let $\alpha^* := \max_{(\bm x, \bm z) \in (\mathcal{X}\times \mathcal{Z})}\alpha(\bm x, \bm z)$ and let
$\mathcal{H}^* := \argmax_{(\bm x, \bm z) \in (\mathcal{X}\times \mathcal{Z})}\alpha(\bm x, \bm z)$ denote the set of maximizers of $\alpha$.
\begin{lemma}
    \label{lemma:general_consistent_maximizers}
    Suppose $\alpha$ is continuous in $\bm x$ for every $\bm z\in \mathcal{Z}$ and that $\varphi: \Theta \mapsto (\mathcal{P}_\mathcal{Z}, \| \cdot \|)$ is continuous with $\varphi(\Theta) = \mathcal{P}_\mathcal{Z}$. Let $\mathcal J^* := \argmax_{(\bm x,\bm \theta) \in \mathcal{X} \times \Theta} \hat{\alpha}(\bm x, \bm \theta)$. Then for any $(\bm x^*, \bm \theta^*) \in \mathcal J^*$, it holds that $(\bm x^*, \bm z) \in \mathcal{H}^*$ for all $z \in \mathrm{supp}\,  p_{\varphi(\bm \theta^*)}$.
\end{lemma}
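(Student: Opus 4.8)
The plan is to exploit the fact that, for each fixed $\bm x$, the quantity $\hat\alpha(\bm x,\bm\theta)$ is an average of the values $\{\alpha(\bm x,\bm z)\}_{\bm z\in\mathcal Z}$ under the probability vector $p_{\varphi(\bm\theta)}$, hence can never exceed the global maximum $\alpha^*$ of $\alpha$; I then show this bound is attained, which pins down the support of any $\bm\theta^*$ appearing in a maximizer of $\hat\alpha$.

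First I would record two elementary facts. (i) Since $p_{\varphi(\bm\theta)}$ is a probability measure on the finite set $\mathcal Z$, $\hat\alpha(\bm x,\bm\theta)=\sum_{\bm z\in\mathcal Z}\alpha(\bm x,\bm z)\,p_{\varphi(\bm\theta)}(\{\bm z\})\le \max_{\bm z\in\mathcal Z}\alpha(\bm x,\bm z)\le\alpha^*$ for every $(\bm x,\bm\theta)\in\mathcal X\times\Theta$. (ii) $\hat\alpha$ is continuous on the compact set $\mathcal X\times\Theta$: each map $\bm\theta\mapsto p_{\varphi(\bm\theta)}(\{\bm z\})$ is continuous, being the composition of the continuous $\varphi$ with a coordinate projection on $\mathbb R^{|\mathcal Z|}$; each $\bm x\mapsto\alpha(\bm x,\bm z)$ is continuous by hypothesis; and $\hat\alpha$ is a finite sum of products of these. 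Hence $\hat\alpha$ attains its maximum over $\mathcal X\times\Theta$ and $\mathcal J^*\neq\emptyset$.

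Next I would show $\max_{\mathcal X\times\Theta}\hat\alpha=\alpha^*$. Pick any $(\bar{\bm x},\bar{\bm z})\in\mathcal H^*$. Because $\varphi(\Theta)=\mathcal P_\mathcal Z$ and $\mathcal P_\mathcal Z$ contains the Dirac measure $\delta_{\bar{\bm z}}$, there exists $\bar{\bm\theta}\in\Theta$ with $p_{\varphi(\bar{\bm\theta})}=\delta_{\bar{\bm z}}$; then $\hat\alpha(\bar{\bm x},\bar{\bm\theta})=\alpha(\bar{\bm x},\bar{\bm z})=\alpha^*$. Combined with fact (i) this gives $\max_{\mathcal X\times\Theta}\hat\alpha=\alpha^*$. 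Finally, take $(\bm x^*,\bm\theta^*)\in\mathcal J^*$, so that $\sum_{\bm z\in\mathcal Z}\alpha(\bm x^*,\bm z)\,p_{\varphi(\bm\theta^*)}(\{\bm z\})=\hat\alpha(\bm x^*,\bm\theta^*)=\alpha^*$. Rewriting this as $\sum_{\bm z\in\mathcal Z}\bigl(\alpha^*-\alpha(\bm x^*,\bm z)\bigr)\,p_{\varphi(\bm\theta^*)}(\{\bm z\})=0$, a sum of nonnegative terms, forces $\alpha(\bm x^*,\bm z)=\alpha^*$ whenever $p_{\varphi(\bm\theta^*)}(\{\bm z\})>0$, i.e.\ for every $\bm z\in\mathrm{supp}\,p_{\varphi(\bm\theta^*)}$. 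Equivalently $(\bm x^*,\bm z)\in\mathcal H^*$ for all such $\bm z$, which is the claim.

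I do not anticipate a serious obstacle: the only steps requiring care are the compactness/continuity argument ensuring $\hat\alpha$ actually attains its maximum (so that $\mathcal J^*$ is meaningfully nonempty) and the appeal to surjectivity of $\varphi$ to realize a point mass, both of which follow immediately from the stated hypotheses; the remainder is the standard ``equality in a convex combination'' argument.
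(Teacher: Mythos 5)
Your proof is correct and complete, and it takes a somewhat different route from the paper's. The paper argues by contradiction with a mass-transfer construction: assuming some $\bm z'$ in the support of $p_{\varphi(\bm \theta^*)}$ with $(\bm x^*, \bm z') \notin \mathcal H^*$, it builds a perturbed measure $p'$ that moves the mass at $\bm z'$ onto a point $\bm z^* \in \argmax_{\bm z \in \mathcal Z} \alpha(\bm x^*, \bm z)$, shows $\tilde{\alpha}(\bm x^*, p') > \hat{\alpha}(\bm x^*, \bm \theta^*)$, and uses surjectivity of $\varphi$ to contradict optimality of $(\bm x^*, \bm \theta^*)$. You instead first pin down the optimal value: $\hat{\alpha} \leq \alpha^*$ pointwise by the convex-combination bound, and the bound is attained because surjectivity lets you realize the Dirac measure at a global maximizer of $\alpha$; the equality $\hat{\alpha}(\bm x^*, \bm \theta^*) = \alpha^*$ then forces $\alpha(\bm x^*, \bm z) = \alpha^*$ on the support, since $\sum_{\bm z}\bigl(\alpha^* - \alpha(\bm x^*, \bm z)\bigr) p_{\varphi(\bm \theta^*)}(\{\bm z\}) = 0$ is a sum of nonnegative terms. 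Both arguments rest on the same two ingredients (continuity and compactness so that $\mathcal J^*$ is nonempty, and surjectivity of $\varphi$ to realize a target measure), but your direct version makes explicit that $\max_{\mathcal X \times \Theta} \hat{\alpha} = \alpha^*$, a fact the paper leaves implicit in its assertion that there is $\bm z^*$ with $\alpha(\bm x^*, \bm z^*) = \alpha^*$, and it is essentially the same mechanism the paper deploys separately in Lemma~\ref{lemma:equal_vals}; the paper's exchange argument, by contrast, isolates a local improvement step that does not require identifying the optimal value in advance.
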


\begin{proof}
    First, note that $\hat{\alpha}: \mathcal{X} \times \Theta \rightarrow \mathbb{R}$ is continuous (using that $\varphi$ is continuous and $\alpha$ is bounded). Since both $\mathcal{X}$ and $\Theta$ are compact $\hat{\alpha}$ attains its maximum, i.e.,  $\mathcal J^*$ exists.
    Let $(\bm x^*, \bm \theta^*) \in \mathcal J^*$. Clearly, there exists $\bm z^* \in \argmax_{\bm z\in \mathcal{Z}} \alpha(\bm x^*, \bm z)$ such that $\alpha(\bm x^*, \bm z^*) = \alpha^*$.    
    Suppose there exists $\bm z' \in \mathrm{supp}\,  p_{\varphi(\bm \theta^*)}$ such that $(\bm x^*, \bm z') \notin \mathcal{H}^*$. Then $\alpha(\bm x^*, \bm z') < \alpha^*$ and, since $\mathcal{Z}$ is finite, $ p_{\varphi(\bm \theta^*)}(\{\bm z'\}) > 0$. 
    Consider the probability measure $p'$ given by 
    \begin{align*}
        p'(\{\bm z\}) = \Biggl\{ \begin{array}{ll}
             0 & \text{ if } \bm z = \bm z' \\
             p_{\varphi(\bm \theta^*)}(\{\bm z^*\}) + p_{\varphi(\bm \theta^*)}(\{\bm z'\}) & \text{ if } \bm z = \bm z^* \\
             p_{\varphi(\bm \theta^*)}(\{\bm z\}) & \text{ otherwise}
        \end{array}
    \end{align*}
    Then 
    \begin{align*}
        \tilde{\alpha}(\bm x^*, p') - \hat{\alpha}(\bm x^*, \bm \theta^*) &= \sum_{\bm z\in \mathcal{Z}} \alpha(\bm x^*, z) p'(\{\bm z\}) - \hat{\alpha}(\bm x^*, \bm \theta^*) \\
        &= \sum_{\bm z\in \mathcal{Z}} \alpha(\bm x^*, \bm z) p_{\varphi(\bm \theta^*)}(\{\bm z\}) + p_{\varphi(\bm \theta^*)}(\{\bm z'\}) (\alpha(\bm x^*, \bm z^*) - \alpha(\bm x^*, \bm z'))\\ &~~~~~~~~~~- \hat{\alpha}(\bm x^*, \bm \theta^*) \\
        &=  p_{\varphi(\bm \theta^*)}(\{\bm z'\}) (\alpha(\bm x^*, \bm z^*) - \alpha(\bm x^*, \bm z')) \\
        &> 0
    \end{align*}
    Now $p' \in \mathcal{P}_{\mathcal{Z}}$, and so $p' = \varphi(\bm \theta')$ for some $\bm \theta' \in \Theta$. But then $\hat{\alpha}(\bm x^*, \bm \theta') > \hat{\alpha}(\bm x^*, \bm \theta^*)$. This is a contradiction.
\end{proof}

\begin{corollary}
    Suppose the optimizer of $g$ is unique, i.e., that $\mathcal H^* = \{(\bm x^*, \bm z^*)\}$ is a singleton. Then the optimizer of $\hat{\alpha}$ is also unique and $\mathcal J^* = \{(\bm x^*, \bm \theta^*)\}$, with $p_{\varphi(\bm \theta^*)}(\{\bm z^*\}) = 1$.
\end{corollary}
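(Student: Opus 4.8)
The plan is to deduce the corollary directly from Lemma~\ref{lemma:general_consistent_maximizers} together with a one-line upper-bound argument. First I would record that $\hat\alpha$ is continuous on the compact set $\mathcal{X}\times\Theta$ (already shown in the proof of the lemma), so $\mathcal{J}^*$ is nonempty. Take any $(\bar{\bm x},\bar{\bm\theta})\in\mathcal{J}^*$. By the lemma, $(\bar{\bm x},\bm z)\in\mathcal{H}^*$ for every $\bm z\in\mathrm{supp}\,p_{\varphi(\bar{\bm\theta})}$; since $\mathcal{H}^*=\{(\bm x^*,\bm z^*)\}$ is a singleton, this forces $\bar{\bm x}=\bm x^*$ and $\mathrm{supp}\,p_{\varphi(\bar{\bm\theta})}=\{\bm z^*\}$, which, as $\mathcal{Z}$ is finite, is exactly $p_{\varphi(\bar{\bm\theta})}(\{\bm z^*\})=1$. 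This pins down the $\bm x$-coordinate and the induced distribution of every maximizer.

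Next I would check that this value is actually attained. Using surjectivity of $\varphi$ (part of the lemma's hypothesis), pick $\bm\theta^*$ with $p_{\varphi(\bm\theta^*)}$ equal to the point mass at $\bm z^*$; then $\hat\alpha(\bm x^*,\bm\theta^*)=\alpha(\bm x^*,\bm z^*)=\alpha^*$. Conversely, for every $(\bm x,\bm\theta)$ we have $\hat\alpha(\bm x,\bm\theta)=\sum_{\bm z\in\mathcal{Z}}\alpha(\bm x,\bm z)\,p_{\varphi(\bm\theta)}(\{\bm z\})\le\alpha^*$, being a convex combination of values each at most $\alpha^*$. Hence $\alpha^*$ is the maximum of $\hat\alpha$ and $(\bm x^*,\bm\theta^*)\in\mathcal{J}^*$. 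Combining the two paragraphs, $\mathcal{J}^*=\{\bm x^*\}\times\{\bm\theta\in\Theta:p_{\varphi(\bm\theta)}=\delta_{\bm z^*}\}$, and every element satisfies $p_{\varphi(\bm\theta^*)}(\{\bm z^*\})=1$, which is the claim. (I note the convex-combination bound alone already yields both inclusions, so the appeal to the lemma is convenient but not strictly necessary.)

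The one place I would be careful — and the only real subtlety — is the passage from ``$p_{\varphi(\bm\theta)}=\delta_{\bm z^*}$ for every maximizer $\bm\theta$'' to ``$\mathcal{J}^*$ is literally the singleton $\{(\bm x^*,\bm\theta^*)\}$''. The lemma assumes only that $\varphi$ is continuous and surjective onto $\mathcal{P}_{\mathcal{Z}}$, not injective, so a priori several $\bm\theta$ could map to the Dirac mass at $\bm z^*$, and uniqueness holds at the level of the induced measure rather than the raw parameter. To get a genuine singleton I would either (i) additionally observe that $\bm\theta\mapsto p_{\varphi(\bm\theta)}$ is injective (as holds for each reparameterization of Table~\ref{table:proposal} composed with the transformations of Table~\ref{table:transformation}), so that $\varphi^{-1}$ of the point mass is a single $\bm\theta^*$, or (ii) read the statement modulo this identification. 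Everything else is just continuity, compactness, and the convex-combination bound, so no further obstacle remains.
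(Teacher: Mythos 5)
Your proof is correct and follows the same (implicit) route the paper intends: the corollary is stated as an immediate consequence of Lemma~\ref{lemma:general_consistent_maximizers} plus the convex-combination bound $\hat{\alpha}(\bm x, \bm\theta) \le \alpha^*$, which is exactly what you spell out. Your caveat about injectivity is also well taken --- e.g.\ the categorical mapping $p_{\varphi(\theta)}(\{i\}) = \theta_i / \sum_{j}\theta_j$ from Corollary~\ref{corr:concrete_mappings} sends every $\bm\theta$ supported on the single coordinate $i$ to the same point mass, so strictly speaking uniqueness of the maximizer holds at the level of the induced measure $p_{\varphi(\bm\theta^*)}$ (or under an injectivity assumption on $\varphi$), which is the reading the paper intends but does not state.
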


\begin{corollary}
\label{corr:concrete_mappings}
Consider the following mappings:
\begin{itemize}
    \item \textbf{Binary:} $\varphi: [0, 1] \rightarrow \mathcal{P}_{\{0, 1\}}$ with $p_{\varphi(\theta)}(\{1\}) = \theta$ and   $p_{\varphi(\theta)}(\{0\}) = 1- \theta$.
    \item \textbf{Ordinal:} $\varphi: [0, C-1] \rightarrow \mathcal{P}_{\{0, 1, \dotsc, C\}}$ with $p_{\varphi(\theta)}(\{i\}) = (1 - |i - \theta|)\, \mathbf{1}\{|i-\theta| \leq 1\}$ for $i=1, \dotsc, C$.
    \item \textbf{Categorical:} $\varphi: [0, 1]^C \rightarrow \mathcal{P}_{\{0, 1, \dotsc, C\}}$ with $p_{\varphi(\theta)}(\{i\}) = \frac{\theta_i}{\sum_{i=1}^C\theta_i}$. 
\end{itemize}
These mappings satisfy the conditions for Lemma~\ref{lemma:general_consistent_maximizers}. In the setting with multiple discrete parameters where the above mappings are applied in component-wise fashion for each discrete parameter, the component-wise mappings also satisfy the conditions for Lemma~\ref{lemma:general_consistent_maximizers}.
\end{corollary}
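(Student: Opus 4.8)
The plan is to verify, for each reparameterization $\varphi$, the hypotheses under which Lemma~\ref{lemma:general_consistent_maximizers} (and hence Theorem~\ref{thm:consistent_maximizers}) applies --- beyond the continuity of $\alpha$ in $\bm x$, which is inherited --- namely that $\Theta$ is a compact metric space, that $\varphi$ is continuous into $(\mathcal{P}_\mathcal{Z}, \|\cdot\|)$, and that $\varphi(\Theta) = \mathcal{P}_\mathcal{Z}$. The first two are routine and I would dispatch them case by case. Each parameter domain is a closed, bounded subset of a Euclidean space, hence compact: $[0,1]$ for binary, $[0, C-1]$ for ordinal, and the simplex $\Delta^{C-1}$ for categorical (taking the domain to be the simplex, as in Table~\ref{table:proposal}, on which the normalization is the identity so no division by zero occurs). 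For continuity, since each $\mathcal{Z}^{(i)}$ is finite I would identify a distribution with its mass vector in $\mathbb{R}^{|\mathcal{Z}^{(i)}|}$ and, using equivalence of norms, reduce to checking that every coordinate $\theta \mapsto p_{\varphi(\theta)}(\{z\})$ is continuous: this is affine for the binary map, the tent function $\theta \mapsto \max(1 - |i - \theta|, 0)$ --- a composition of continuous maps that agrees at its breakpoints --- for the ordinal map, and (on $\Delta^{C-1}$) the identity for the categorical map.

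For the surjectivity requirement I would first handle the two easy cases by exhibiting explicit preimages: a target mass vector $(q_0, q_1)$ on $\{0,1\}$ is attained at $\theta = q_1$, and a target $q \in \Delta^{C-1}$ at $\theta = q$. The ordinal map is where I expect the only real obstacle: its image is the one-dimensional chain $\{(1-s)\delta_i + s\delta_{i+1} : 0 \le s \le 1\}$ joining consecutive vertices of the simplex, so it is \emph{not} all of $\mathcal{P}_\mathcal{Z}$ once $C \ge 3$ and Lemma~\ref{lemma:general_consistent_maximizers} cannot be quoted verbatim. To get around this I would note that, although the mass-transport construction in the proof of Lemma~\ref{lemma:general_consistent_maximizers} needs literal surjectivity, its conclusion does not: assuming only that (a) $\varphi(\Theta) \subseteq \mathcal{P}_\mathcal{Z}$ and (b) $\delta_z \in \varphi(\Theta)$ for every $z \in \mathcal{Z}$ --- both of which hold for the tent map, since $\delta_i = p_{\varphi(i)}$ at the integer $\theta = i$ --- one re-derives the conclusion directly. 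First, $\max_{\mathcal{X}\times\Theta}\hat{\alpha} = \alpha^*$: the inequality $\le$ holds because $\hat{\alpha}(\bm x, \bm \theta)$ is a probability-weighted average of values $\alpha(\bm x, \bm z) \le \alpha^*$, and $\ge$ holds by evaluating at a $\bm\theta$ with $p_{\varphi(\bm\theta)} = \delta_{\bm z^*}$ for a global maximizer $(\bm x^*, \bm z^*) \in \mathcal{H}^*$. Then the equality $\alpha^* = \sum_{\bm z} \alpha(\bm x^*, \bm z)\, p_{\varphi(\bm\theta^*)}(\{\bm z\})$ at any $(\bm x^*, \bm\theta^*) \in \mathcal{J}^*$ forces $\alpha(\bm x^*, \bm z) = \alpha^*$, i.e.\ $(\bm x^*, \bm z) \in \mathcal{H}^*$, for every $\bm z$ in the support of $p_{\varphi(\bm\theta^*)}$ --- precisely the conclusion of the lemma.

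For the component-wise case I would set $\Theta = \prod_{i=1}^{d_z} \Theta^{(i)}$ and $\varphi(\bm\theta) = (\varphi^{(1)}(\theta^{(1)}), \dots, \varphi^{(d_z)}(\theta^{(d_z)})) \in \prod_i \mathcal{P}_\mathcal{Z}^{(i)} = \mathcal{P}_\mathcal{Z}$, which corresponds to the product measure $\bigotimes_i p_{\varphi^{(i)}(\theta^{(i)})}$ on $\mathcal{Z}$. Compactness survives finite products, continuity holds componentwise, and a product of maps is surjective iff every factor is --- so $\varphi(\Theta) = \mathcal{P}_\mathcal{Z}$ whenever each $\varphi^{(i)}$ is onto $\mathcal{P}_\mathcal{Z}^{(i)}$ (the all-binary / all-categorical situation). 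When ordinal factors appear, properties (a)--(b) are still inherited coordinatewise: each $\varphi^{(i)}$ has image in $\mathcal{P}_\mathcal{Z}^{(i)}$ and realizes every marginal Dirac, so $\varphi$ realizes $\delta_{\bm z} = \bigotimes_i \delta_{z^{(i)}}$ for every $\bm z \in \mathcal{Z} = \prod_i \mathcal{Z}^{(i)}$; since the support of a product measure is the product of the supports, the argument of the previous paragraph carries over unchanged. The genuinely delicate point throughout is this failure of literal surjectivity for ordinal parameters, which is why I would isolate the weaker ``distributions-plus-all-Diracs'' hypothesis that the proof of Lemma~\ref{lemma:general_consistent_maximizers} actually needs.
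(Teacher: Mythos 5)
Your proposal is correct, but it goes well beyond the paper's own argument, which is a two-sentence assertion that the mappings are (componentwise) continuous in $\bm\theta$ and "hence" satisfy the conditions of Lemma~\ref{lemma:general_consistent_maximizers}; the paper never addresses compactness or the surjectivity requirement $\varphi(\Theta)=\mathcal{P}_\mathcal{Z}$ at all. The most valuable difference is your observation that the ordinal tent map is \emph{not} surjective for $C\ge 3$ (its image is only the chain of measures supported on two adjacent values), so the corollary cannot be obtained by quoting Lemma~\ref{lemma:general_consistent_maximizers} verbatim, and the paper's one-line proof silently glosses over this. Your repair---replacing surjectivity by the weaker hypothesis that $\varphi(\Theta)\subseteq\mathcal{P}_\mathcal{Z}$ and that every Dirac $\delta_{\bm z}$ is realized, and then re-deriving the conclusion from the bound $\hat{\alpha}(\bm x,\bm\theta)\le\alpha^*$ for convex combinations together with attainment at a Dirac---is sound, and it is essentially the same convex-combination argument the paper itself uses in Lemma~\ref{lemma:equal_vals} and in the proof of Theorem~\ref{thm:consistent_maximizers}, so your route both salvages the ordinal case and avoids the mass-transport step of Lemma~\ref{lemma:general_consistent_maximizers} (which is also delicate in the multi-parameter case, where moving mass between two joint configurations can leave the set of product measures $\mathcal{P}_\mathcal{Z}=\prod_i\mathcal{P}_\mathcal{Z}^{(i)}$). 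Your componentwise treatment (products of compact domains, coordinatewise continuity, Diracs as products of marginal Diracs, support of a product measure being the product of supports) is likewise correct. The only liberty you take is restricting the categorical domain from $[0,1]^C$ to the simplex to avoid the division-by-zero at $\bm\theta=\bm 0$ and to get compactness of the relevant parameterization; this is consistent with Table~\ref{table:proposal} and is a reasonable reading of a genuine imprecision in the corollary's statement, but it should be flagged as such rather than passed over.
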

Clearly, the mappings given in Corollary~\ref{corr:concrete_mappings} are continuous functions of $\theta$. In the setting with multiple discrete parameters , the component-wise function is also continuous with respect to the distribution parameters for each discrete parameter. Hence, the mappings satisfy the conditions for Lemma~\ref{lemma:general_consistent_maximizers}.

\begin{lemma}
\label{lemma:equal_vals}
If $(\bm x^*, \bm z^*) \in \mathcal H^*=\argmax_{(\bm x, \bm z) \in \mathcal X \times \mathcal Z}\alpha(\bm x, \bm z)$, then
$$\alpha(\bm x^*, \bm z^*) = \max_{\bm\theta} \mathbb{E}_{\bm Z\sim p(\bm Z|\bm \theta)}[\alpha(\bm x^*, 
\bm Z)].$$
\end{lemma}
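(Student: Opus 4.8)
The plan is to sandwich the quantity $\max_{\bm\theta} \mathbb{E}_{\bm Z\sim p(\bm Z|\bm\theta)}[\alpha(\bm x^*, \bm Z)]$ between $\alpha(\bm x^*, \bm z^*)$ from both sides. For the upper bound, I would use the probability-mass expansion in Equation~\eqref{eq:prob_mass}: for any $\bm\theta \in \Theta$,
\begin{equation*}
    \mathbb{E}_{\bm Z\sim p(\bm Z|\bm\theta)}[\alpha(\bm x^*, \bm Z)] = \sum_{\bm z\in\mathcal Z} p(\bm z|\bm\theta)\,\alpha(\bm x^*, \bm z) \le \sum_{\bm z\in\mathcal Z} p(\bm z|\bm\theta)\,\alpha(\bm x^*, \bm z^*) = \alpha(\bm x^*, \bm z^*),
\end{equation*}
where the inequality holds because $(\bm x^*,\bm z^*)$ is a global maximizer of $\alpha$ over $\mathcal X\times\mathcal Z$, so $\alpha(\bm x^*, \bm z) \le \alpha(\bm x^*, \bm z^*)$ for every $\bm z\in\mathcal Z$, and the last equality uses $\sum_{\bm z} p(\bm z|\bm\theta) = 1$. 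Taking the maximum over $\bm\theta$ preserves the inequality, giving $\max_{\bm\theta}\mathbb{E}_{\bm Z\sim p(\bm Z|\bm\theta)}[\alpha(\bm x^*, \bm Z)] \le \alpha(\bm x^*, \bm z^*)$.

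For the matching lower bound, I would exhibit a parameter $\bm\theta^*$ for which $p(\bm Z|\bm\theta^*)$ is the point mass $\delta_{\bm z^*}$ concentrated on $\bm z^*$. Such a $\bm\theta^*$ exists: under the surjectivity condition $\varphi(\Theta) = \mathcal P_{\mathcal Z}$ used in Lemma~\ref{lemma:general_consistent_maximizers}, $\mathcal P_{\mathcal Z}$ contains all degenerate distributions, and in the component-wise setting the same holds parameter-by-parameter; concretely, using the reparameterizations of Table~\ref{table:proposal} (equivalently the mappings of Corollary~\ref{corr:concrete_mappings}) one simply sets each binary/ordinal $\theta^{(i)}$ equal to the integer value of $z^{*(i)}$ and each categorical $\theta^{(i)}$ to the vertex of the simplex corresponding to $z^{*(i)}$. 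For this $\bm\theta^*$,
\begin{equation*}
    \mathbb{E}_{\bm Z\sim p(\bm Z|\bm\theta^*)}[\alpha(\bm x^*, \bm Z)] = \sum_{\bm z\in\mathcal Z} p(\bm z|\bm\theta^*)\,\alpha(\bm x^*, \bm z) = \alpha(\bm x^*, \bm z^*),
\end{equation*}
so $\max_{\bm\theta}\mathbb{E}_{\bm Z\sim p(\bm Z|\bm\theta)}[\alpha(\bm x^*, \bm Z)] \ge \alpha(\bm x^*, \bm z^*)$. Combining the two bounds yields the claimed equality.

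I do not anticipate a genuine obstacle here; the argument is essentially two lines once the expectation is written as a convex combination of acquisition values. The only point requiring slight care is asserting that a point mass at $\bm z^*$ lies in the parameterized family — this is immediate from the surjectivity of $\varphi$ assumed in Lemma~\ref{lemma:general_consistent_maximizers}, and can alternatively be made fully explicit via the concrete mappings, so no appeal to compactness or continuity is even needed to guarantee the maximum is attained.
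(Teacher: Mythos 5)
Your proposal is correct and is essentially the paper's own argument: the paper likewise realizes $\alpha(\bm x^*,\bm z^*)$ by choosing $\bm\theta^*$ with $p(\bm z^*|\bm\theta^*)=1$, and rules out any larger value by noting that the expectation is a convex combination of acquisition values none of which exceeds the maximum (the paper phrases this as a contradiction, you as a direct upper bound — the same fact). Your extra remark justifying the existence of the point-mass parameter via the concrete mappings is a minor tightening of a step the paper simply asserts.
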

\begin{proof}

For any $\bm z^*$, let $\bm \theta^*$ be the parameters such that $p(\bm z^* | \bm \theta^*) = 1$ (i.e. a point mass on $\bm z^*$). From Equation \eqref{eq:prob_mass}, 
$$\mathbb{E}_{\bm Z\sim p(\bm Z|\bm \theta^*)}[\alpha(\bm x^*, 
\bm Z)] = \sum_{\bm z \in \mathcal Z} \alpha(\bm x^*, \bm z)p(\bm z|\bm \theta^*) = \alpha(\bm x^*, \bm z^*).$$
Claim: $\mathbb{E}_{\bm Z\sim p(\bm Z|\bm \theta^*)}[\alpha(\bm x^*, 
\bm Z)] = \max_{\bm\theta} \mathbb{E}_{\bm Z\sim p(\bm Z|\bm \theta)}[\alpha(\bm x^*, 
\bm Z)].$ 

Suppose there exists $\bm \theta'$ such that $\mathbb{E}_{\bm Z\sim p(\bm Z|\bm \theta')}[\alpha(\bm x^*, 
\bm Z)] > \mathbb{E}_{\bm Z\sim p(\bm Z|\bm \theta^*)}[\alpha(\bm x^*, 
\bm Z)]$. Since $(\bm x^*, \bm z^*) \in  \mathcal H^*$, $\alpha(\bm x^*, \bm z^*) = \max_{(\bm x, \bm z) \in \mathcal X \times \mathcal Z}\alpha(\bm x, \bm z)$. Hence, there is no convex combination of values of $\alpha$ that is greater than $\alpha(\bm x^*, \bm z^*)$. This is a contradiction.
\end{proof}

\consistentmaximizers*
\begin{proof}
From Lemma~\ref{lemma:general_consistent_maximizers}, we have that for any $(\bm x^*, \bm \theta^*) \in \mathcal J^*$, it holds that $(\bm x^*, \bm z) \in \mathcal{H}^*$ for all $z \in \mathrm{supp}\,  p_{\varphi(\bm \theta^*)}$.  Hence, $\hat{\mathcal{H}}^* \subseteq \mathcal H^*$. 

Now, let $(\bm x^*, \bm z^*) \in \mathcal{H}^*$. Let $\bm \theta^* \in \Theta$ such that $p(\bm z^*|\bm \theta^*) = 1$. From the proof of Lemma~\ref{lemma:equal_vals}, we have that $\mathbb{E}_{\bm Z\sim p(\bm Z|\bm \theta^*)}[\alpha(\bm x^*, 
\bm Z)] = \alpha(\bm x^*, 
\bm z^*).$ As in the proof of Lemma~\ref{lemma:equal_vals}, there is no convex combination of values of $\alpha$ greater than $\alpha(\bm x^*, \bm z^*)$. So $ \mathbb{E}_{\bm Z\sim p(\bm Z|\bm \theta^*)}[\alpha(\bm x^*, 
\bm Z)] = \max_{(\bm x, \bm \theta) \in \mathcal X \times \Theta}\mathbb{E}_{\bm Z\sim p(\bm Z|\bm \theta)}[\alpha(\bm x, 
\bm Z)]$, and therefore, $\bm x^*, \bm \theta^* \in \mathcal J^*$. Hence $(\bm x^*, \bm z^*) \in \hat{\mathcal H}^*$. So $\mathcal H^* \subseteq \hat{\mathcal H}^*$, and hence, $\hat{\mathcal H}^* = \mathcal H^*$.
\end{proof}

\begin{lemma}
Suppose that $\alpha: (\bm x, \bm z) \mapsto \mathbb{R}$ is differentiable with respect to~$\bm x$ for all $\bm z \in \mathcal{Z}$, and that the mapping $\varphi: \bm \theta \mapsto \mathcal{P}_{\mathcal{Z}}$ is such that $p_{\varphi(\bm \theta)}(\{\bm z\})$ is differentiable with respect to $\bm \theta$ for all $z \in \mathcal Z$. Then the probabilistic objective $\mathbb{E}_{\bm Z \sim p(\bm Z|\bm \theta)}[\alpha(\bm x, \bm Z)]$ is differentiable with respect to $(\bm x, \bm \theta)$.
\end{lemma}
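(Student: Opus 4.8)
The plan is to reduce the claim to the elementary fact that a finite sum of products of differentiable functions is differentiable. The starting point is Equation~\eqref{eq:prob_mass}: since $p_{\varphi(\bm\theta)}$ is a discrete probability measure supported on the finite set $\mathcal{Z}$, the probabilistic objective can be written as the \emph{finite} sum
\[
  F(\bm x, \bm\theta) := \mathbb{E}_{\bm Z\sim p(\bm Z|\bm\theta)}[\alpha(\bm x,\bm Z)] = \sum_{\bm z\in\mathcal{Z}} p_{\varphi(\bm\theta)}(\{\bm z\})\,\alpha(\bm x,\bm z),
\]
where finiteness holds because $|\mathcal{Z}| < \infty$.

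First I would show that each summand $g_{\bm z}(\bm x,\bm\theta) := p_{\varphi(\bm\theta)}(\{\bm z\})\,\alpha(\bm x,\bm z)$ is jointly (Fr\'echet-)differentiable in $(\bm x,\bm\theta)$. For fixed $\bm z$, the map $(\bm x,\bm\theta)\mapsto \alpha(\bm x,\bm z)$ does not depend on $\bm\theta$ and is differentiable in $\bm x$ by hypothesis, hence jointly differentiable with gradient $(\nabla_{\bm x}\alpha(\bm x,\bm z), \bm 0)$; likewise $(\bm x,\bm\theta)\mapsto p_{\varphi(\bm\theta)}(\{\bm z\})$ does not depend on $\bm x$ and is differentiable in $\bm\theta$ by hypothesis, hence jointly differentiable with gradient $(\bm 0, \nabla_{\bm\theta} p_{\varphi(\bm\theta)}(\{\bm z\}))$. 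The product rule then gives joint differentiability of $g_{\bm z}$. Summing over the finitely many $\bm z\in\mathcal{Z}$ and invoking linearity of the derivative yields differentiability of $F$ in $(\bm x,\bm\theta)$, with partial gradients that reproduce Equations~\eqref{eq:exact_gradient} and~\eqref{eq:exact_gradient_x}.

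I do not expect a genuine obstacle here; the only point that needs a little care is the distinction between ``differentiable in $\bm x$ for every $\bm z$'' and joint differentiability in $(\bm x,\bm\theta)$. This is benign because $\bm z$ ranges over a discrete index set rather than a continuous argument, so the hypothesis already says that the continuous argument $\bm x$ enters differentiably, while the $\bm\theta$-dependence is handled entirely by the separately assumed differentiability of the mass function $\bm\theta \mapsto p_{\varphi(\bm\theta)}(\{\bm z\})$. The finiteness of $\mathcal{Z}$ is exactly what lets us avoid any dominated-convergence or interchange-of-limit argument that would be needed for a general (non-finite) sample space. If one instead starts from the component-wise form in Equation~\eqref{eq:prob_mass2}, the same argument applies after noting that $\prod_{i=1}^{d_z} p(z^{(i)}|\theta^{(i)})$ is a product of finitely many functions, each differentiable in its own block of $\bm\theta$.
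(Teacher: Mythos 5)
Your proof is correct and follows essentially the same route as the paper's: write the probabilistic objective as the finite sum $\sum_{\bm z\in\mathcal{Z}} p_{\varphi(\bm\theta)}(\{\bm z\})\,\alpha(\bm x,\bm z)$, note each summand is a product of differentiable functions, and conclude by linearity. Your added remarks on joint differentiability and the role of the finiteness of $\mathcal{Z}$ are fine elaborations of the same argument.
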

\begin{proof}
For any $\bm z \in \mathcal Z$, the function $p(\bm z, \bm \theta) \alpha(\bm x, \bm z) = p_{\varphi(\bm \theta)}(\{\bm z\}) \alpha(\bm x, \bm z)$ is the product of two differentiable functions, hence differentiable. Therefore the probabilistic objective is a (finite) linear combination of differentiable functions, hence differentiable.
\end{proof}

\converge*

\begin{proof}
The binary and categorical mappings in Corollary~\ref{corr:concrete_mappings} are differentiable in $\theta$ (the ordinal mapping is differentiable almost everywhere\footnote{Technically, the arguments presented here do not prove convergence under the ordinal mapping, but we have found this to work well and reliably in practice. Alternatively, ordinal parameters could also just be treated as categorical ones in which case the convergence results hold. In practice, however, this introduces additional optimization variables that make the problem unnecessarily hard by removing the ordered structure from the problem.}). Since the acquisition function $\alpha: \mathcal{X \times Z} \rightarrow \mathbb{R}$ is differentiable in $\bm x$ for every $\bm z \in \mathcal{Z}$, this means that the PO is differentiable. Using the prescribed sequence of step sizes, optimizing the PO using stochastic gradient ascent will converge almost surely to a local maximum after a sufficient number of steps \citep{robbins_monro}. As we increase the number of randomly distributed starting points, the probability of not finding the global maximum of the PO will converge to zero \citep{wang2016parallel}. From Theorem~\ref{thm:consistent_maximizers}, the PO and the AF have the same set of maximizers. Hence, convergence in probability to a global maximizer of the PO means convergence in probability to a global maximizer of the AF.
\end{proof}




\section{Experiment Details}
\label{appdx:exp_details}
For each BO optimization replicate, we use $N_\text{init} = \min(20, 2*d_\text{eff})$ points from a scrambled Sobol sequence, where $d_\text{eff}$ is the ``effective dimensionality'' after one-hot encoding categorical parameters.  Unless otherwise noted, all experiments use 20 replications and confidence intervals represent 2 standard errors of the mean. The same initial points are used for all methods for that replicate and different initial points are used for each replicate. For each method we report the $\log_{10}$ regret. Since the optimal value is unknown for many problems, we set the optimal value to be  $f^*+0.1$ where $f^*$ is the best observed value across all methods and all replications. For constrained optimization $f^*$ is the best feasible observed value and for multi-objective optimization $f^*$ is the maximum hypervolume across all methods and replications. In total, the experiments in the main text (excluding \hybo{} and \casmo{})  ran for an equivalent of 2,009.82 hours on a single Tesla V100-SXM2-16GB GPU. The baseline experiments (\hybo{} and \casmo{}) ran for an equivalent of 745.10 hours on a single Intel Xeon Gold 6252N CPU.

\subsection{Additional Problem Details}

In this section, we describe the details of each synthetic problem considered in the experiments (the details of the remaining real-world problems are already described in Section \ref{subsec:real_world_problems}).

\paragraph{Ackley.} We use an adapted version of the 13-dimensional Ackley function modified from \citet{bliek2021}. The function is given by:
\begin{equation}
    f(\mathbf{x}) = -a \exp\Big(-b\sqrt{\frac{1}{d}\sum_{i=1}^d x_i^2)}\Big) -  \exp\Big(\frac{1}{d}\sum_{i=1}^d \cos(cx_i)\Big) + a + \exp({1}),
\end{equation}
where in this case $a=20, n=0.2, c=2\pi$ and $d=13$ and $\mathbf{x} \in [-1, 1]^{13}$. We discretize the first 10 dimensions to be binary with the choice $\{-1, 1\}$, and the final 3 dimensions are unmodified with the original bounds.

\paragraph{Mixed Int F1.} Mixed Int F1 is a partially discretized version of the 16-dimensional Sphere optimization problem \citep{hansencomparing}, given by:
\begin{equation}
    f(\mathbf{x}) = \sum_{i=1}^d (x_i - x_{\mathrm{opt}, i})^2 + f_{\mathrm{opt}},
\end{equation}
where $f_{\mathrm{opt}}$ is sampled from a Cauchy distribution with median = 0 and roughly 50\% of the values between $-100$ and $100$. The sampled $f_{\mathrm{opt}}$ is then clamped to be between $[-1000, 1000]$ and rounded to the nearest integer. $\mathbf{x}_{\mathrm{opt}}$ is sampled uniformly in $[-4, 4]^d$, and in this case $d = 16$. We discretize the first 8 dimensions as follows: the first 2 dimensions are binary with 2 choices $\{-5, 5\}$; the next 2 dimensions are ordinal with 3 choices $\{-5, 0, 5\}$; the next 2 dimensions are ordinal with 5 choices $\{-5, -2.5, 0, 2.5, 5\}$; the final 2 dimensions are ordinal with 7 choices $\{-5, -\frac{10}{3}, -\frac{5}{3}, 0, \frac{5}{3}, \frac{10}{3}, 5\}$. The remaining 8 dimensions are continuous with bounds $[-5, 5]^8$.

\paragraph{Rosenbrock.} We use an adapted version of the Rosenbrock function, given by:
\begin{equation}
    f(\mathbf{x}) = \Big( \sum_{i=1}^{d-1} \big( 100(x_{i+1} - x_i^2)^2 + (x_i - 1)^2 \big) \Big),
\end{equation}
where in this case $d=10$. The first 6 dimensions are discretized to be ordinal variables, with 4 possible values each $x_i \in \{-5, 0, 5, 10\} \forall i \in [1, 6]$. The final 4 dimensions are continuous with bounds $[-5, 10]^4$.

\paragraph{Chemical Reaction (Direct Arylation Chemical Synthesis).} For this problem, we fit a GP surrogate (with the same kernel used by the BO methods) to the dataset from \citet{shields2021bayesian}  (available at \url{https://github.com/b-shields/edbo/tree/master/experiments/data/direct_arylation} under the MIT license) in order to facilitate continuous optimization of temperature and concentration. The surrogate is included with our source code. 

\paragraph{Oil Sorbent.} We set the reference point for this problem to be $[-125.3865, -57.8292, 43.2665]$, which we choose using a commonly used heuristic to scale the nadir point (component-wise worst objective values across the Pareto frontier) \citep{WANG201725}.

\subsection{Method details}
\textbf{\PR{}, \contbo{}, \exactround{}, \PR{} + \turbo{}, and \exactroundste{}}.  We implemented all of these methods using BoTorch \citep{balandat2020botorch}, which is available under the MIT license at \url{https://github.com/pytorch/botorch}. \PR{} and \PR{} + \turbo{} use stochastic minibatches of 128 samples and the probabilistic objectives are optimized via Adam using a learning rate of $\frac{1}{40}$. The AFs of \contbo{}, \exactround{}, \exactroundste{} are deterministic and are optimized via L-BFGS-B---\exactround{} approximates gradients via finite differences \citep{GarridoMerchn2020DealingWC}. All methods use 20 random restarts and are run for a maximum of 200 iterations. We follow the default initialization heuristic in BoTorch \citep{balandat2020botorch}, which initializes the optimizer by evaluating the acquisition function at a large number of starting points (here, 1024, chosen from a scrambled Sobol sequence), and selecting (20) points using Boltzmann sampling~\citep{boltz} of the 1024 initial points, according to their acquisition function utilities.

\textbf{Combining \PR{} with trust regions}: When combining \PR{} with the trust regions used in \textsc{TuRBO} we only use a trust region over the continuous parameters and discrete ordinals with at least 3 values.
While methods like \casmo{} uses a Hamming distance for the trust regions over the categorical parameters, we choose not to do so as there is no natural way of efficiently optimizing \PR{} using gradient-based methods.
Finally, we do not use a trust region over the Boolean parameters as the trust region will quickly shrink to only include one possible value.
We use the same hyperparameters as \textsc{TuRBO}~\citep{turbo} for unconstrained problems and \textsc{SCBO}~\citep{scbo} in the presence of outcome constraints, including default trust region update settings. 

\textbf{Casmopolitan}: We use the implementation of \casmo{}---which is available at \url{https://github.com/xingchenwan/Casmopolitan} under the MIT licence---but modify it where appropriate to additionally handle the ordinal variables. Specifically, the ordinal variables are treated as continuous when computing the kernel. However, during interleaved search, ordinal variables are searched via local search similar to the categorical variables. We use a set of \casmo{} hyperparameters (i.e. success/failure sensitivity, initial trust region sizes and expansion factor) recommended by the authors. We use the same implementation of interleaved search for the acquisition optimization comparisons.

\textbf{HyBO}: We use the official implementation of \hybo{} at \url{https://github.com/aryandeshwal/HyBO}, which is licensed by the University of Amsterdam. We use the default hyperparameters recommended by the authors in all the experiments, and we use the full \hybo{} method with marginalization treatment of the hyperparameters as it has been shown to perform stronger empirically \citep{deshwal2021}.

\subsection{Gaussian process regression}
When there are no categorical variables we use $k_{\text{ordinal}}$ which is a product of an isotropic Matern-5/2 kernel for the binary parameters and a Matern-5/2 kernel with ARD for the remaining ordinal parameters.
In the presence of categorical parameters, this kernel is combined with a categorical kernel~\citep{ru2020bayesian} $k_{\text{cat}}$ as $k_{\text{cat}} \times k_{\text{ordinal}} + k_{\text{cat}} + k_{\text{ordinal}}$.
We use a constant mean function.
The GP hyperparameters are fitted using L-BFGS-B by optimizing the log-marginal likelihood. 
The ranges for the ordinal parameters are rescaled to $[0, 1]$ and the outcomes are standardized before fitting the GP.

\subsection{Variance Reduction via Control Variates}
As discussed in Section \ref{sec:var_reduction}, we use moving average baseline for variance reduction. Specifically, the baseline is an exponential moving average with a multiplier of 0.7, where each element is the mean acquisition value across the $N$ MC samples obtained while evaluating the probabilistic objective.
\subsection{Deterministic Optimization via Sample Average Approximation}
\label{appdx:saa}
Although multi-start stochastic ascent is provably convergent, an alternative optimization approach is to use common random numbers (i.e. a fixed set of base samples) to reduce variance when comparing a stochastic function at different inputs by using the same random numbers. The method of common random numbers leads to biased deterministic estimators that are lower-variance than their stochastic counterparts where random numbers are re-sampled at each step. Such techniques have been used in the context of BO in settings such as efficiently optimizing MC acquisition functions \citep{balandat2020botorch} and for optimizing risk measures of acquisition functions under random inputs \citep{daulton2022robust}.

Sampling a fixed set of points $\tilde{\bm z}_1, ..., \tilde{\bm z}_{N} \sim p(\bm Z | \bm \theta)$ would be a poor choice because $p(\bm Z | \bm \theta)$ can vary widely during AF optimization as $\bm \theta$ changes. Therefore, instead sample from $p(\bm Z | \bm \theta)$ using reparameterizations provided in Table~\ref{table:saa}. Specifically, we reparameterize $\bm Z$ as a deterministic function $h(\cdot, \cdot)$ that operates component-wise on $\bm \theta$ and the random variable $\bm U = (u^{(1)}, ..., u^{(d_z)}), u^{(i)} \sim \text{Uniform}(0,1)$: $\bm Z = h(\bm \theta, \bm U)$. That is, each random variable $Z^{(j)}$, where $j=1, ..., d_z$ has a corresponding independent base random variable $U^{(j)}$ such that $Z^{(j)}=h(\theta^{(j)}, U^{(j)})$.  Using a fixed a set of base samples $\{\tilde{\bm u}_i\}_{i=1}^N$, the samples of $\bm Z$ can be be computed as $\bm z_i = h(\bm \theta, \tilde{\bm u}_i)$. We note that even with fixed base samples, the samples $\{\bm z_i\}_{i=1}^N$ depends on $\bm \theta$, and hence, by using common \emph{base} uniform samples, we obtain a deterministic estimator where the values of the samples $\tilde{\bm z}_1, ..., \tilde{\bm z}_{N}$ can still vary with $\bm \theta$. Under this reparameterization, our probabilistic objective can be written as
\begin{equation}
\label{eqn:reparam_saa_po}
  \mathbb E_{\bm Z \sim p(\bm Z|\bm \theta)} [\alpha(\bm x,\bm Z)] = \mathbb E_{\bm U \sim p(\bm U)} [\alpha(\bm x, h(\bm \theta, \bm U))],
\end{equation}
where under the reparameterizations in Table~\ref{table:saa}, $U$ is a uniform random variable across the $d_z$-dimensional unit cube---$P(\bm U) = 
\text{Uniform}(0,1)^d_z$. Under this reparameterization we can define our sample average approximation estimator of the probabilistic objective as
\begin{equation}
\label{eqn:saa_estimator_po}
  \mathbb E_{\bm Z \sim p(\bm Z|\bm \theta)} [\alpha(\bm x,\bm Z)] \approx  \frac{1}{N}\sum_{i=1}^N \alpha(\bm x, h(\bm \theta, \tilde{\bm u}_i)).
\end{equation}
Our sample average approximation estimator of the gradient  of the probabilistic objective with respect to $\bm \theta$ is given by
\begin{equation}
    \label{eq:saa_score_func_estimator}
    \nabla_{\bm \theta} \mathbb E_{\bm Z \sim p(\bm Z | \bm \theta)}[\alpha(\bm x, \bm Z)] \approx \frac{1}{N}\sum_{i=1}^N \alpha(\bm x, h(\bm \theta, \tilde{\bm u}_i))\nabla_{\bm \theta}\log p(h(\bm \theta, \tilde{\bm u}_i) | \bm \theta).
\end{equation}

Sample average approximation estimators are deterministic and biased conditional on the selection of base samples. However, the reparameterizations in Table \ref{table:saa} create discontinuities in the PO, and the number of discontinuities increases with the number of MC samples. Nevertheless, we find that optimizing the PO using L-BFGS-B delivers strong performance on the benchmark problems and we compare against stochastic optimization in Figures~\ref{fig:saa_vs_adam} and \ref{fig:sgd}. As in the stochastic case, we reduce the variance further by leveraging quasi-MC sampling \citep{owen2003quasi} instead of i.i.d. sampling.

\begin{table*}[!ht]
    \centering
    \caption{\label{table:saa} Discrete random variables and their reparameterizations in terms of a Uniform random variable $U\sim \text{Uniform}(0,1)$ and $\theta$ via a deterministic function $h(\cdot, \cdot)$.}
    \begin{small}
    \begin{sc}
    \begin{tabular}{lll}
        \toprule
        Type & Random Variable & Reparameterization ($Z = h(\theta, U))$  \\
        \midrule
         Binary & $Z \sim \text{Bernoulli}(\theta)$& $h(\theta, U) =  \mathbbm{1}(U < \theta)$\\
        Ordinal & $Z = \lfloor \theta \rfloor + B$,& $h(\theta, U) =  \lfloor \theta \rfloor + \mathbbm{1}(U < \theta - \lfloor \theta \rfloor)$\\
        & $B \sim \text{Bernoulli}(\theta - \lfloor \theta \rfloor)$& \\ 
        Categorical &$Z \sim \text{Categorical}(\bm \theta)$& $h(\theta, U) =\min(  \argmax_{i=0 }^{C-1}\mathbbm{1}(U < \sum_c^{i}\theta^{(c)}))$\\
        \bottomrule
    \end{tabular}
    \end{sc}
    \end{small}
    \vspace{-1ex}
\end{table*}

\section{Constrained and Multi-Objective Bayesian Optimization}
\label{appdx:constrained_bo_mobo}
In many practical problems, the black-box objective must be maximized subject to $V > 0$ black-box outcome constraints $f_c^{(v)}(\bm x, \bm z) \geq 0$ for $v=1, ..., V$. See \citet{gardner2014constrained} for a more in depth review of black-box optimization with black-box constraints and BO techniques for this class of problems.

In the multi-objective setting, the goal is to maximize (without loss of generality) a set of $M$ objectives $f^{(1)}, ..., f^{(M)}$. Typically there is no single best solution, and hence the goal is to learn the Pareto frontier (i.e. the set of optimal trade-offs between objectives). In the multi-objective setting, the hypervolume indicator is a common metric for evaluating the quality of a Pareto frontier. See \citep{emmerich2006} for a review of multi-objective optimization.

\FloatBarrier
\section{Comparison with Enumeration}
When computationally feasible, the gold standard for acquisition optimization over discrete and mixed search spaces is to enumerate the discrete options and optimize any continuous parameters for each discrete configuration (or simply evaluated each discrete configuration for fully discrete spaces). In Figures~\ref{fig:enumerate} and \ref{fig:enumerate_time} we compare \PR{} (optimized with Adam using stochastic mini-batches of 128 MC samples) and analytic \PR{} (optimized with L-BFGS-B) against enumeration and show that \PR{} achieves log regret performance that is comparable to the gold standard of enumeration and does so in less wall time.

\FloatBarrier
\begin{figure*}[ht]
    \centering
    \includegraphics[width=0.5\linewidth]{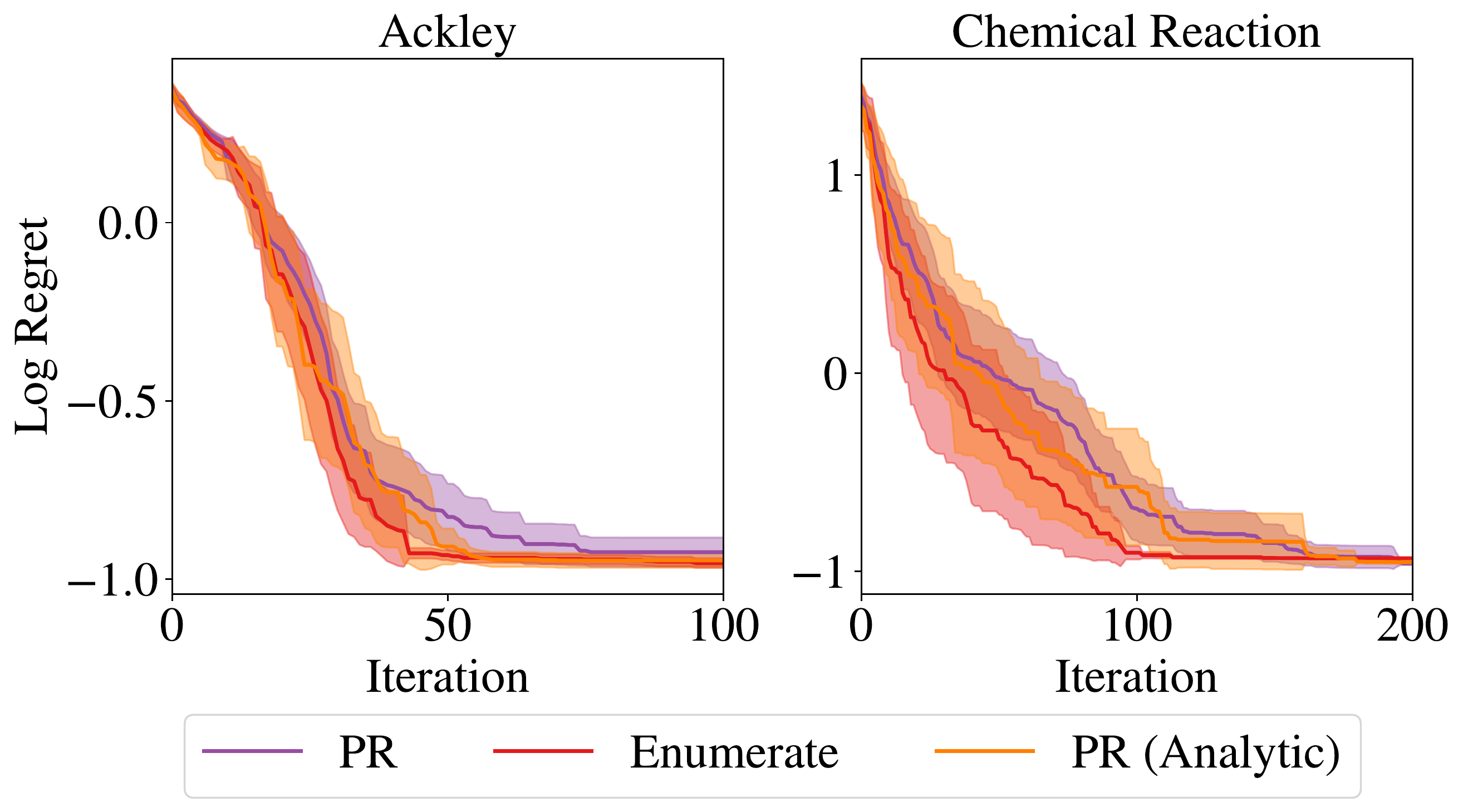}
    \caption{A comparison with an enumeration (gold standard) with respect to log regret.}
    \label{fig:enumerate}
\end{figure*}
\begin{figure*}[ht]
    \centering
    \includegraphics[width=0.5\linewidth]{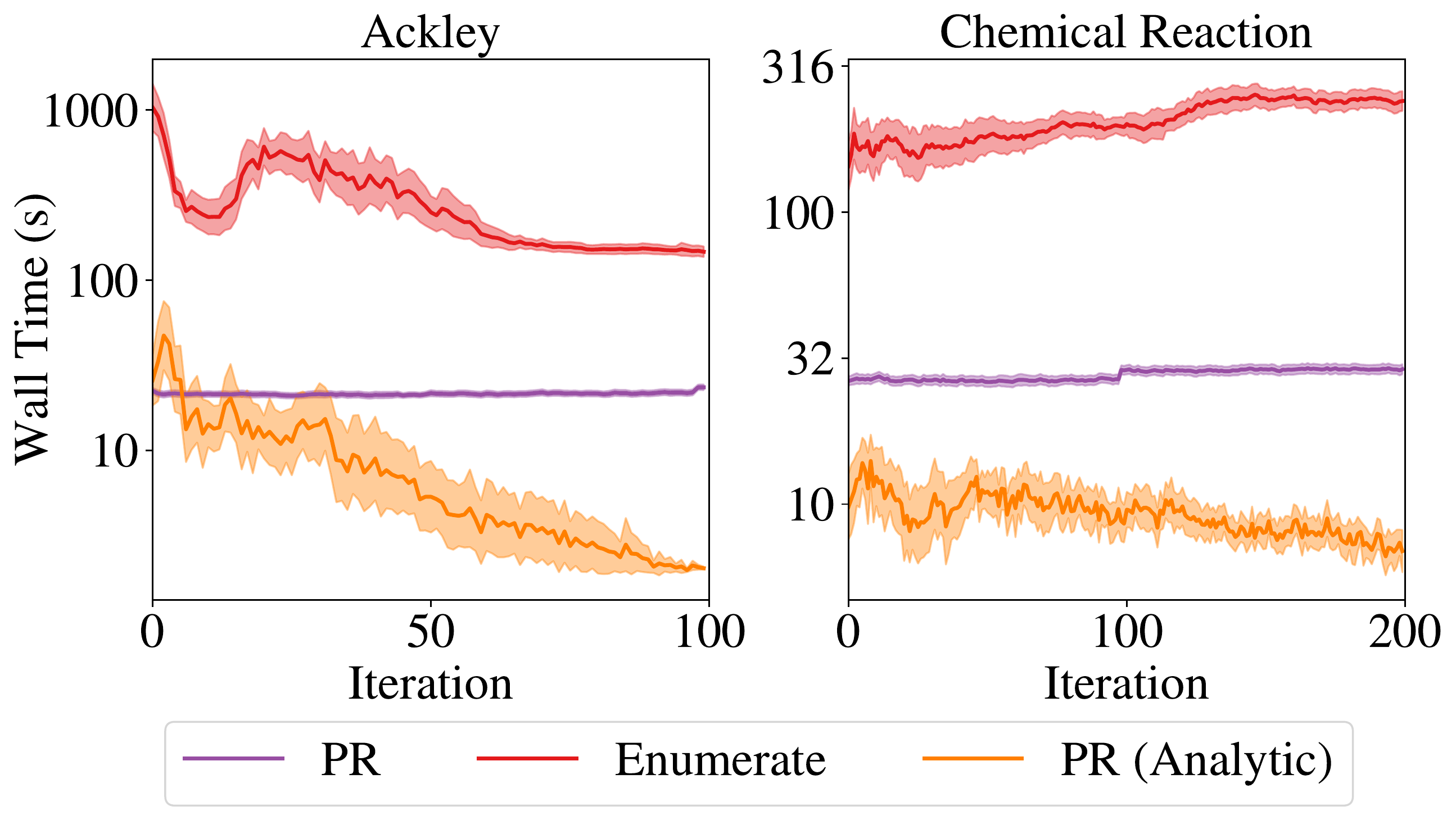}
    \caption{A comparison with enumeration with respect to wall time.}
    \label{fig:enumerate_time}
\end{figure*}
\FloatBarrier
\section{Analysis of MC sampling in Probabilistic Reparameterization }
\label{appdx:pr_mc_analysis}
The main text considers 1024 MC for \PR{}.  We consider 128, 256, and 512 samples, in addition to the default of 1024.  For problems with discrete spaces that are enumerable, we also consider analytic \PR{}. We do not find statistically significant differences between the final regret of any of these configurations (Figure~\ref{fig:exp1_mc_comparison}).  Run time is linear with respect to MC samples, and so substantial compute savings are possible when fewer MC samples are used (Figure~\ref{fig:wall_times_mc_comparison}). We observe comparable performance between \PR{} with 1,024 MC samples and as few as 128 MC samples. With 64 or fewer MC samples, we observe performance degradation with respect to log regret in Figure~\ref{fig:exp1_mc_comparison_fewer}, although wall time is considerably faster for fewer 64 or less MC samples as shown in Figure~\ref{fig:wall_times_mc_comparison_fewer}.
\FloatBarrier
\begin{figure*}[ht]
    \centering
    \includegraphics[width=\linewidth]{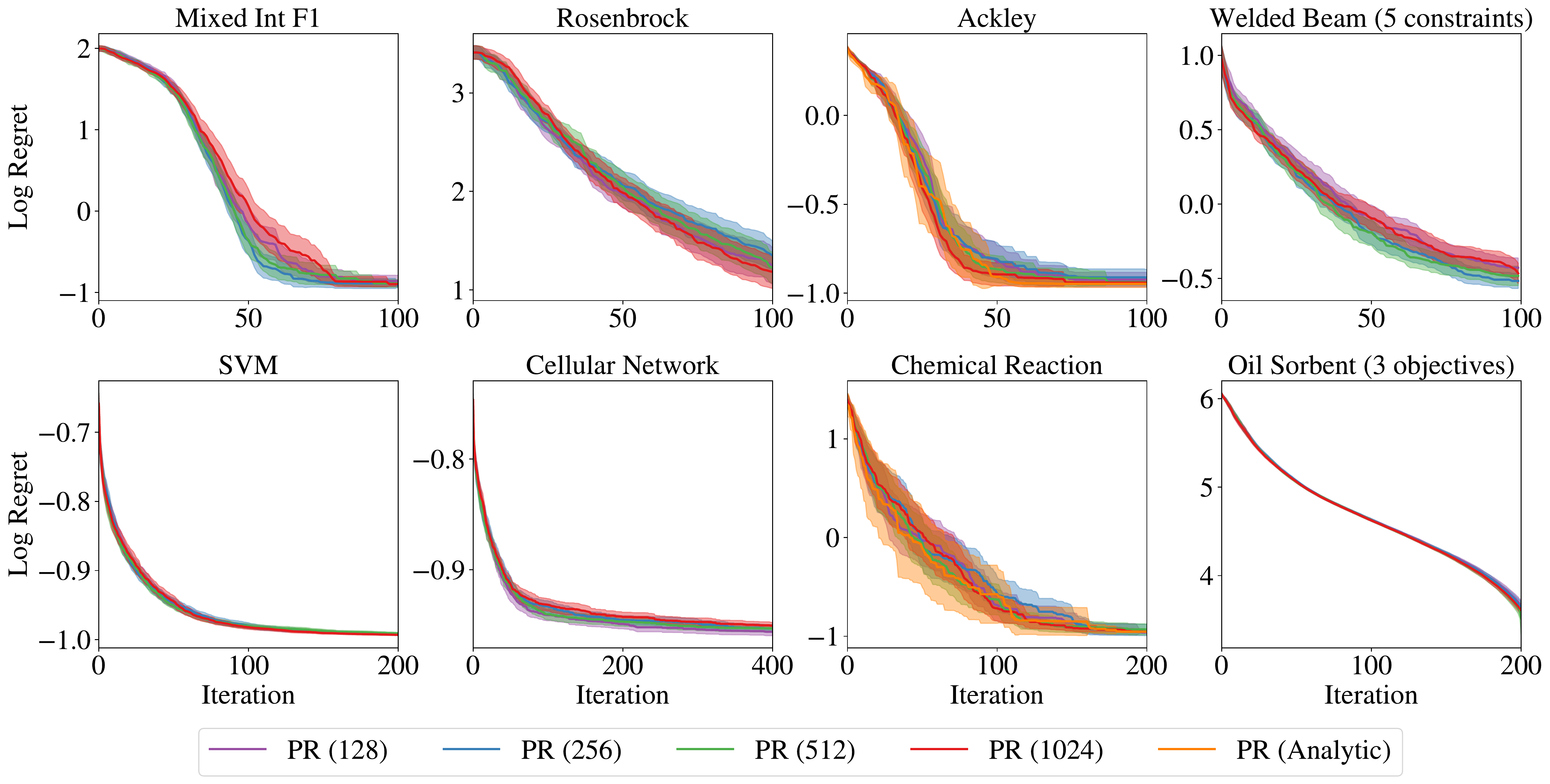}
    \caption{A sensitivity analysis of the optimization performance of \PR{} with respect to the number of MC samples. We find that \PR{} is robust to the number of MC samples, and that the performance of MC \PR{} matches that of analytic \PR{}.
    }
    \label{fig:exp1_mc_comparison}
\end{figure*}
\begin{figure*}[ht]
    \centering
    \includegraphics[width=\linewidth]{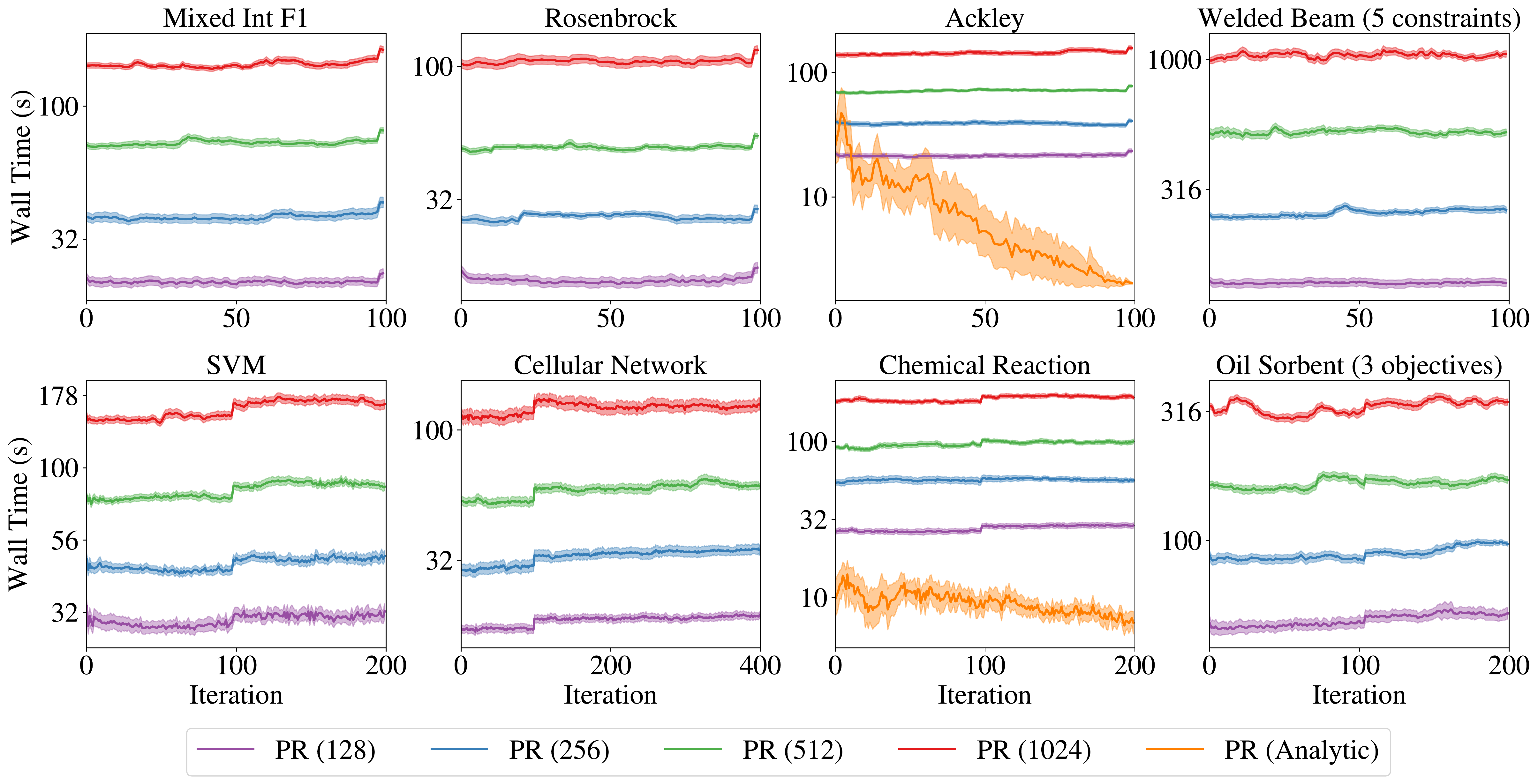}
    \caption{A sensitivity analysis of the wall time of \PR{} with respect to the number of MC samples. We observe that wall time scales linearly with the number of MC samples, which is expected since we compute \PR{} in $\frac{N}{32}$ chunks to avoid overflowing GPU memory.}
    \label{fig:wall_times_mc_comparison}
\end{figure*}
\begin{figure*}[ht]
    \centering
    \includegraphics[width=\linewidth]{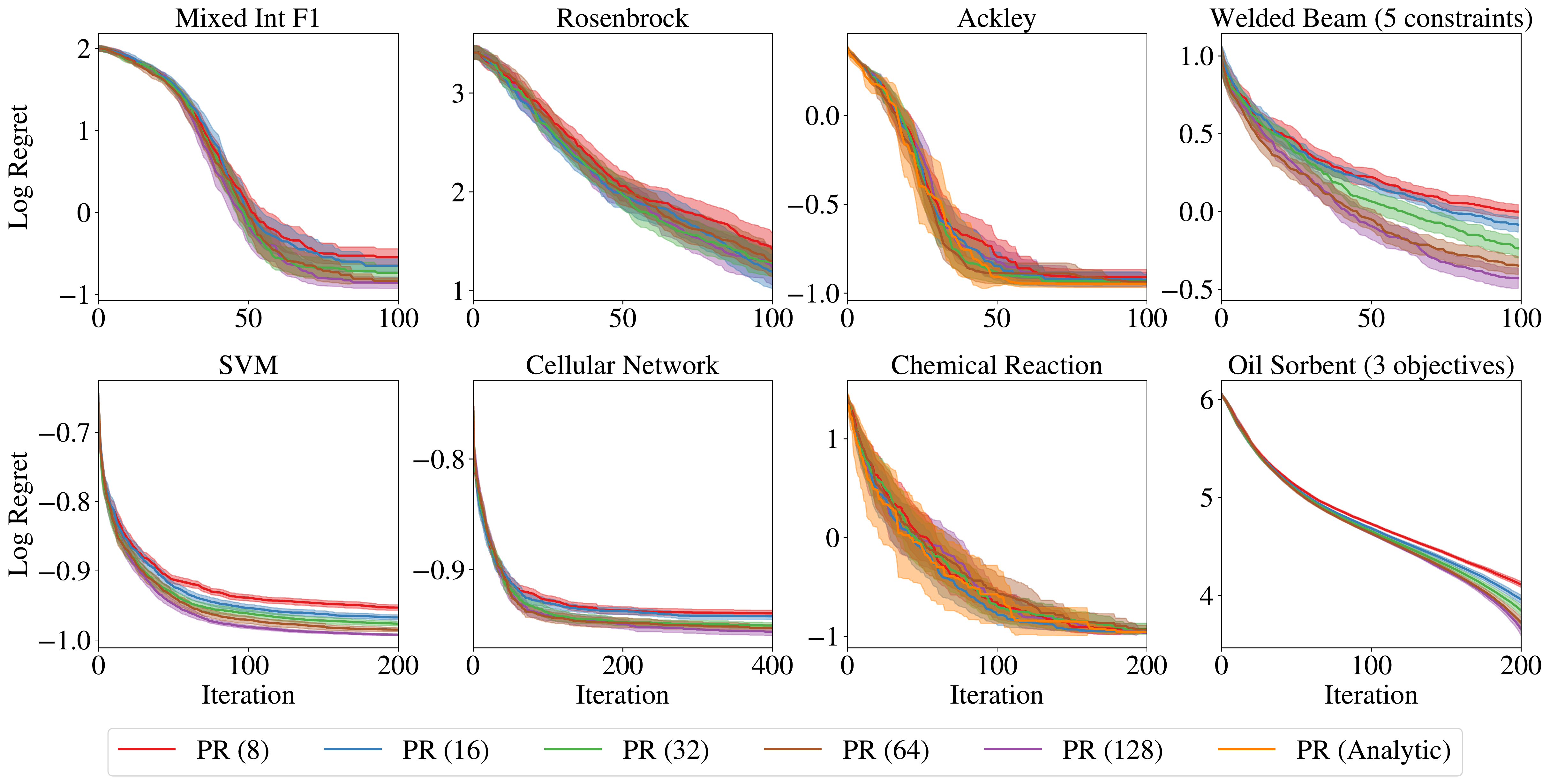}
    \caption{A sensitivity analysis of the optimization performance of \PR{} with respect to a small number of MC samples (with samples between 8 and 64). Performance degrades slightly when few samples are used.}
    \label{fig:exp1_mc_comparison_fewer}
\end{figure*}
\begin{figure*}[ht]
    \centering
    \includegraphics[width=\linewidth]{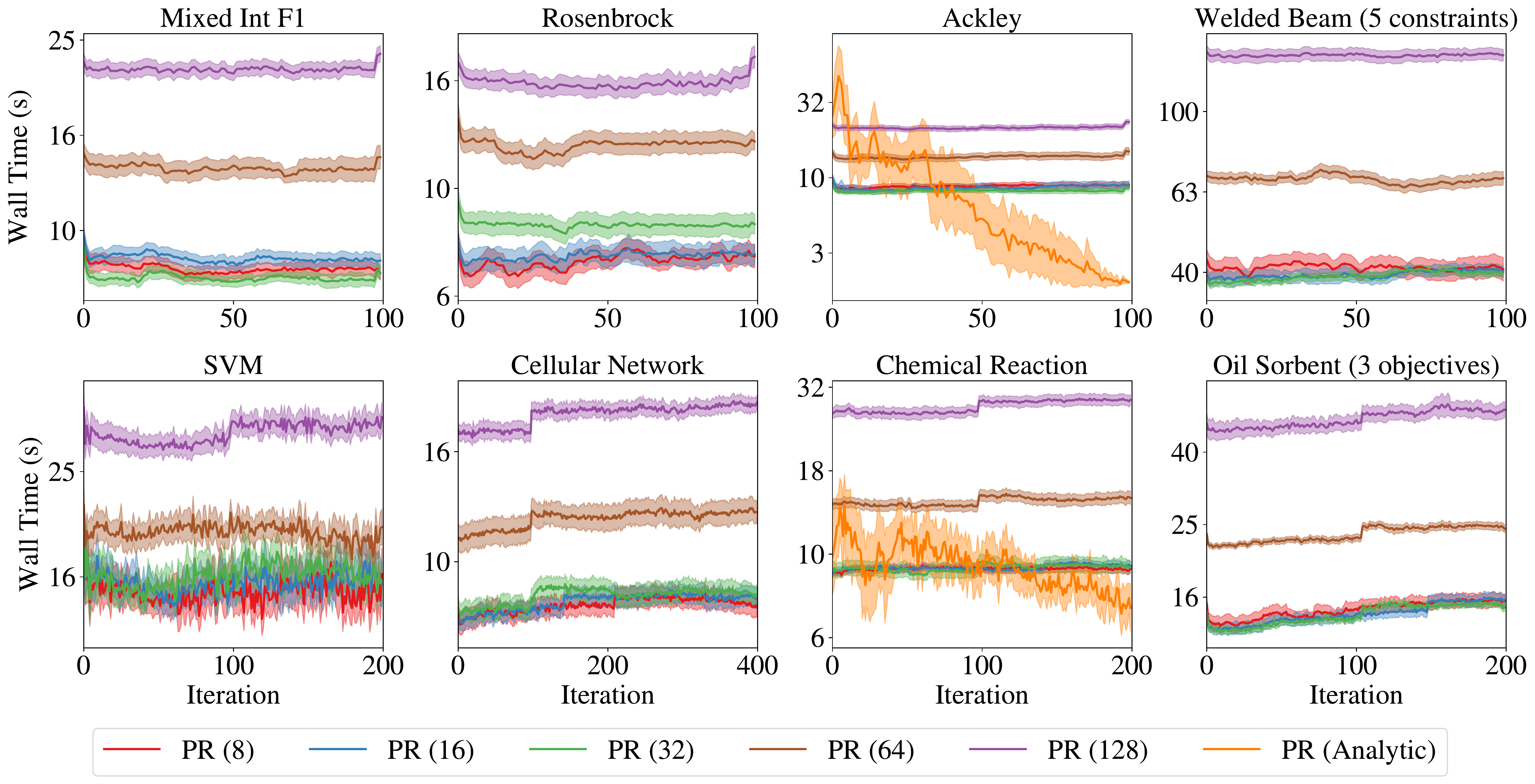}
    \caption{A sensitivity analysis of the wall time of \PR{} with respect to the number of MC samples (with samples between 8 and 64). We observe that wall time scales linearly with the number of MC samples, which is expected since we compute \PR{} in $\frac{N}{32}$ chunks to avoid overflowing GPU memory.}
    \label{fig:wall_times_mc_comparison_fewer}
\end{figure*}
\FloatBarrier
\subsubsection{Evaluation of Approximation Error in MC Sampling}
We examine the MC approximation error relative to analytic \PR{} on the chemical reaction and ackley problems. The results in Figure~\ref{fig:mc_approx_error} show the mean absolute percentage error (MAPE) $$\frac{100 }{|X_\text{discrete}|} \cdot \sum_{\bm x \in X_\text{discrete},\bm \theta \in \Theta_\text{discrete}}\frac{\mathbb E_{\bm Z\sim p(\bm Z |\bm \theta)} \alpha(\bm x,\bm Z) - \frac{1}{N}\sum_{i=1}^N \alpha(\bm x,\tilde{\bm z}_i)}{ \max_{\bm x \in X_\text{discrete},\bm \theta \in \Theta_\text{discrete}} \mathbb E_{\bm Z\sim p(\bm Z |\bm \theta)} \alpha(\bm x,\bm Z)}$$ evaluated over a random set of $|X_\text{discrete}| =|\Theta_\text{discrete}| 10,000$ points from $\mathcal X \times \mathcal \Theta$ (the sampled sets are denoted $X_\text{discrete}, \Theta_\text{discrete}$). We observe a rapid reduction in MAPE as we increase the number of samples. 
With 1024 samples, MAPE is 0.055\% (+/- 0.0002 \%) over 20 replications (different MC samples in \PR{}) on the chemical reaction problem and MAPE is 0.018\% (+/- 0.0003 \%) on the ackley problem.

With 128 samples, MAPE is 0.282\% (+/- 0.0029 \%) over 20 replications (different MC samples in \PR{}) on the chemical reaction problem and MAPE is 0.052\% (+/- 0.0021 \%) on the ackley problem.

\begin{figure*}[ht]
    \centering
    \includegraphics[width=0.8\linewidth]{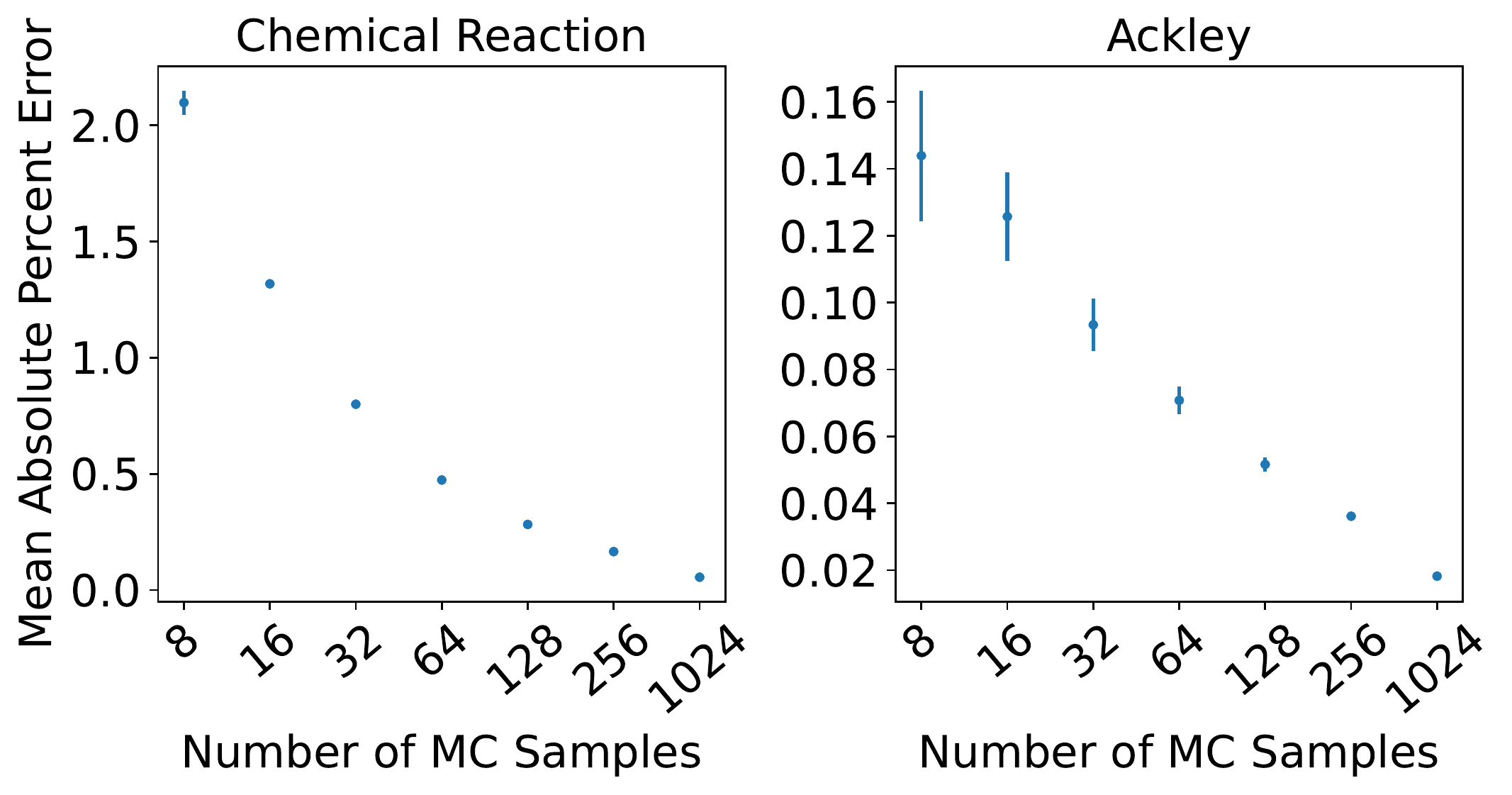}
    \caption{An evaluation of the mean absolute percentage error for the MC estimator of \PR{} (relative to analytic \PR{}).}
    \label{fig:mc_approx_error}
\end{figure*}
 

\section{Effect of $\tau$ in Transformation}
\label{appdx:tau}
Throughout the main text, we use $\tau = 0.1$, which we selected based on the observation that it provides a reasonable balance between retaining non-zero gradients of $g(\phi)$ with respect to $\phi$ and allowing $\theta$ to become close to 0 or 1 as shown in Figure~\ref{fig:transforms}.
\FloatBarrier
\begin{figure*}[ht]
    \centering
    \includegraphics[width=0.3\linewidth]{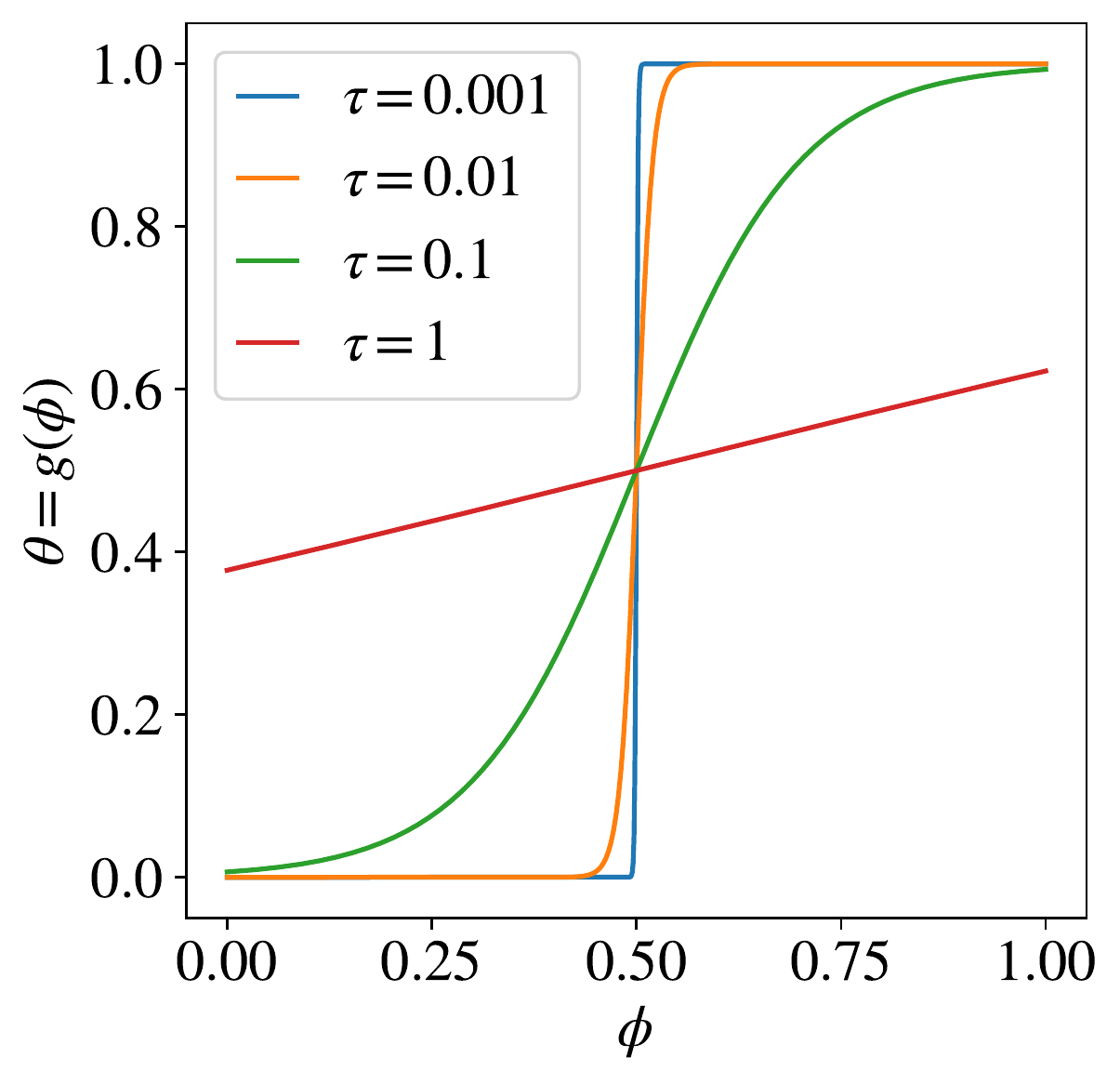}
    \caption{A comparison of the reparameterization of $\theta$ under various choices of $\tau$. We observe that $\tau = 0.1$ provides a reasonable balance between retaining non-zero gradients of $g(\phi)$ with respect to $\phi$ and allowing $\theta$ to become close to 0 or 1.}
    \label{fig:transforms}
\end{figure*}

As $\tau \rightarrow 0$, the $\theta$ can take more extreme values, but the gradient of the transformation with respect to $\phi$ also moves closer to zero. For larger values of $\tau$, the gradient of the transformation with respect to $\phi$ is larger, but $\theta$ has a more limited domain with less extreme values. We find that $\tau=0.1$ is a robust setting across all experiments.


\section{Alternative methods}
\label{appdx:alternative_methods}
\subsection{Straight-through gradient estimators}
An alternative approach to using approximating the gradients under exact rounding using finite differences is to approximate the gradients using straight-through gradient estimation (STE) \citep{bengio2013estimating}. The idea of STE is to approximate the gradient of a function with the identity function. In our setting, the gradient of the discretization function with respect to its input is estimated using an identity function. Using this estimator enables gradient-based AF optimization, even though the true gradient of the discretization function is zero everywhere that it is defined. Although STEs have been shown to work well empirically, these estimators are not well-grounded theoretically. Their robustness and potential pitfalls  in the context of AF optimization have not been well studied. Below, we evaluate the aforementioned \exactroundste{} approach and show that it offers competitive optimization performance (Figure~\ref{fig:exp1_ste}) with fast wall times (Figure~\ref{fig:wall_times_ste}), but does not quite match the optimization performance of \PR{} on several benchmark problems.
\FloatBarrier
\begin{figure*}[ht]
    \centering
    \includegraphics[width=\linewidth]{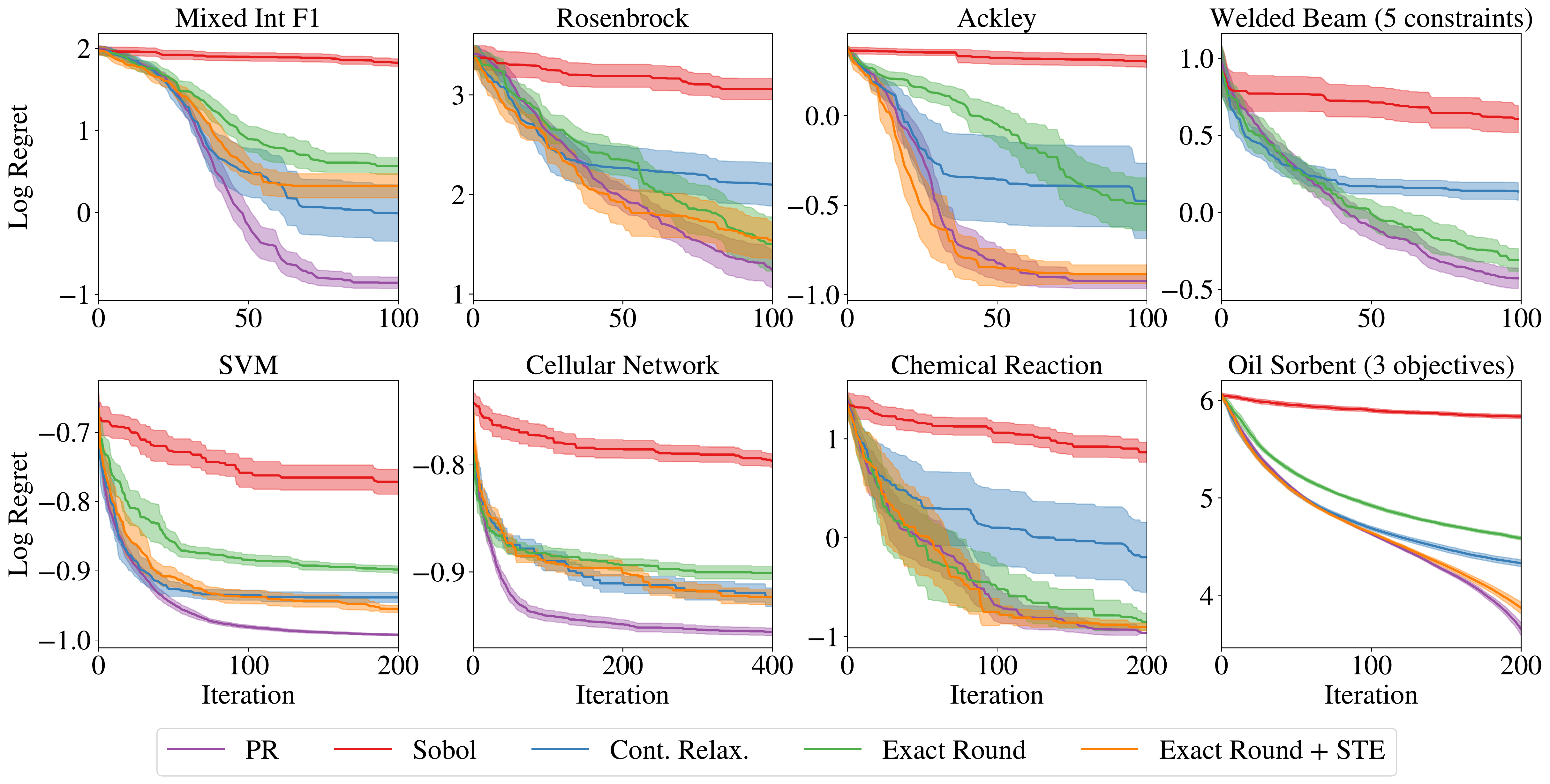}
    \caption{A comparison of exact rounding with straight-through gradient estimators versus  other acquisition optimization strategies.
        Log regret on each problem. 
        We report log hypervolume regret for Oil Sorbent and report the log regret of the best feasible objective for Welded beam.
    }
    \label{fig:exp1_ste}
\end{figure*}
\begin{figure*}[ht]
    \centering
    \includegraphics[width=\linewidth]{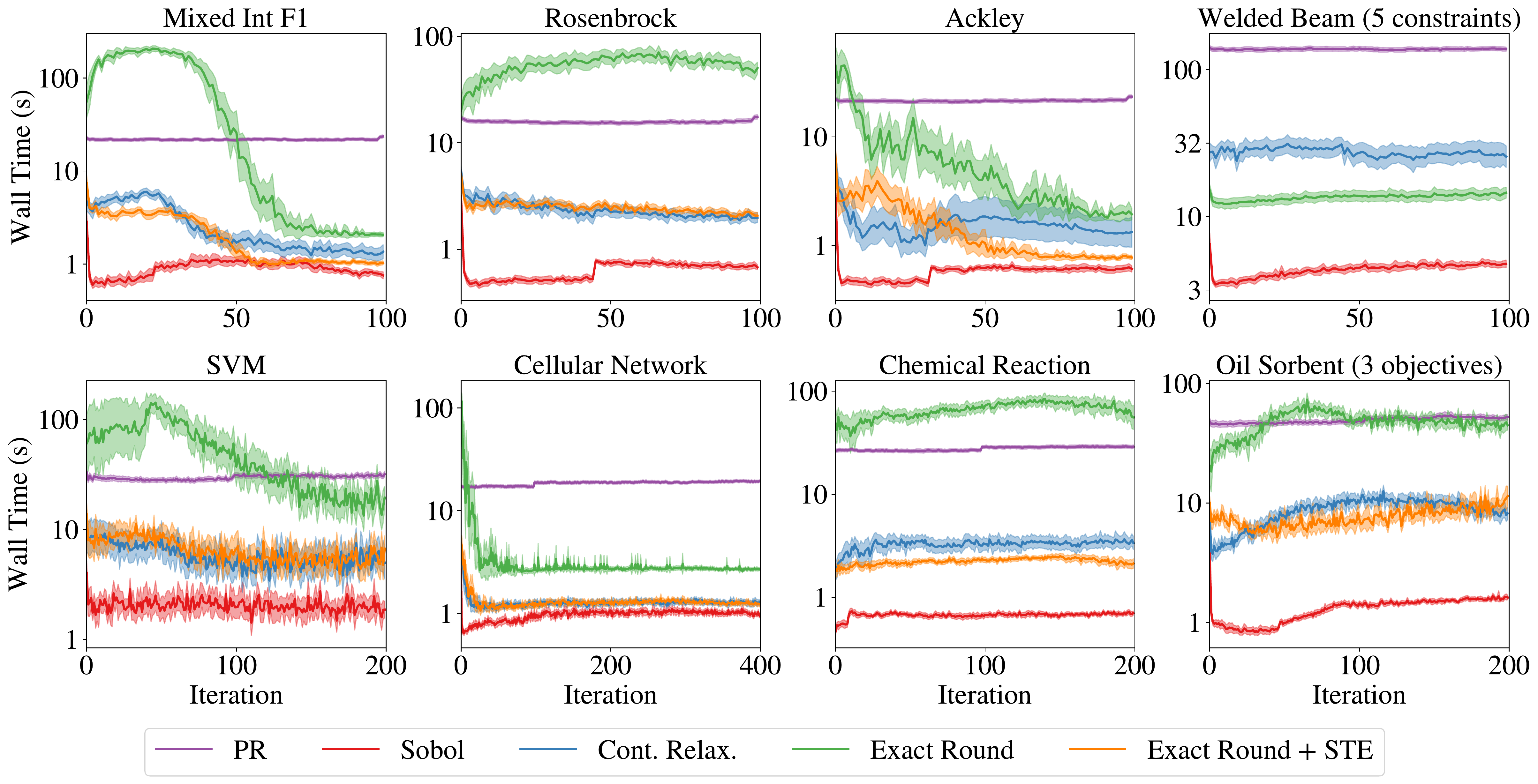}
    \caption{A comparison of wall times of exact rounding with straight-through gradient estimators versus  other acquisition optimization strategies.
    }
    \label{fig:wall_times_ste}
\end{figure*}
\FloatBarrier
\subsection{\turbo{} methods with alternative optimizers}
In this section, we consider alternative methods to \PR{} for optimizing AFs using within trust regions. The results in Figure~\ref{fig:exp1_turbo} show that \PR{} is a consistent best optimizer using TRs, but that STEs work quite well with TRs in many scenarios.
\begin{figure*}[ht]
    \centering
    \includegraphics[width=\linewidth]{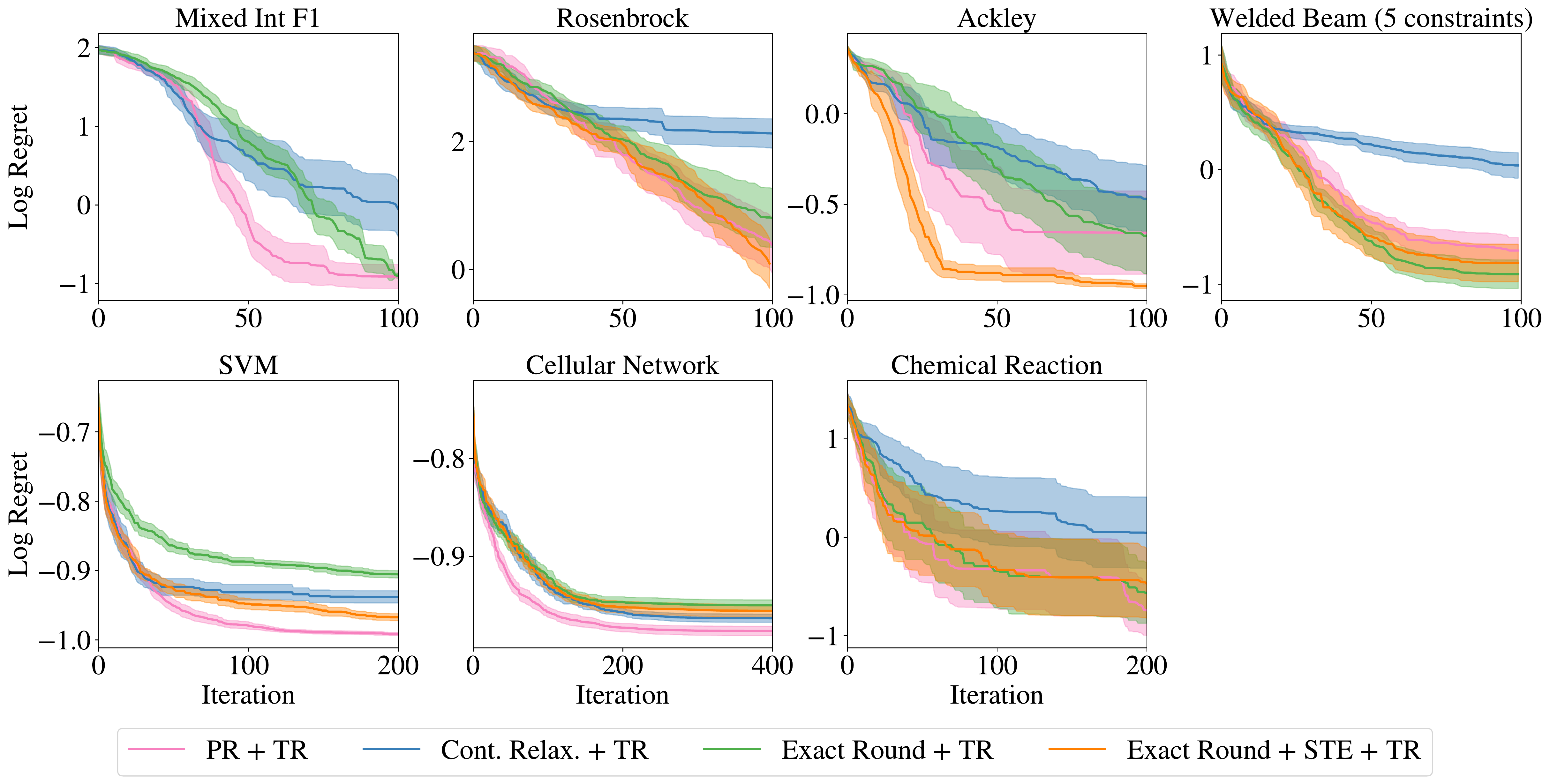}
    \caption{A comparison of \turbo{} methods with different acquisition optimization strategies. Log regret on each problem. 
        We report log hypervolume regret for Oil Sorbent and report the log regret of the best feasible objective for Welded beam. }
    \label{fig:exp1_turbo}
\end{figure*}
\begin{figure*}[ht]
    \centering
    \includegraphics[width=\linewidth]{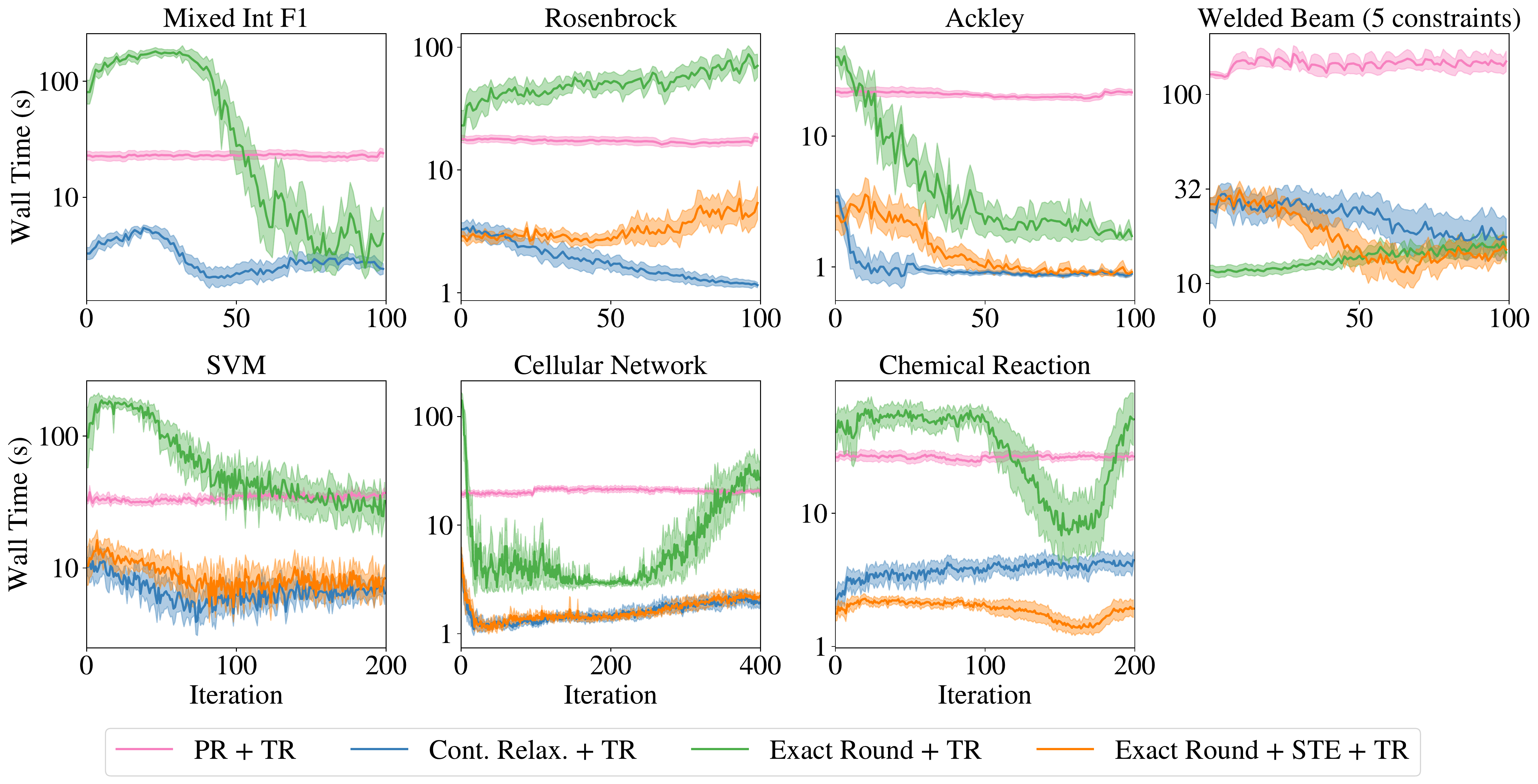}
    \caption{A comparison of wall times of \turbo{} methods with different acquisition optimization strategies.}
    \label{fig:wall_times_turbo}
\end{figure*}
\FloatBarrier
\section{Acquisition Function Optimization at a Given Wall Time Budget}
In Figure~\ref{fig:equal_wall_time_budget}, we provide additional starting points (64 points, rather than 20) to other non-\PR{} methods in order to provide them with additional wall-time budget. We find that using \PR{} with 64 MC samples, \PR{} provides rapid convergence compared to other baselines and therefore is a good optimization routine for any wall time budget.

\FloatBarrier
\begin{figure*}[ht]
    \centering
    \includegraphics[width=0.6\linewidth]{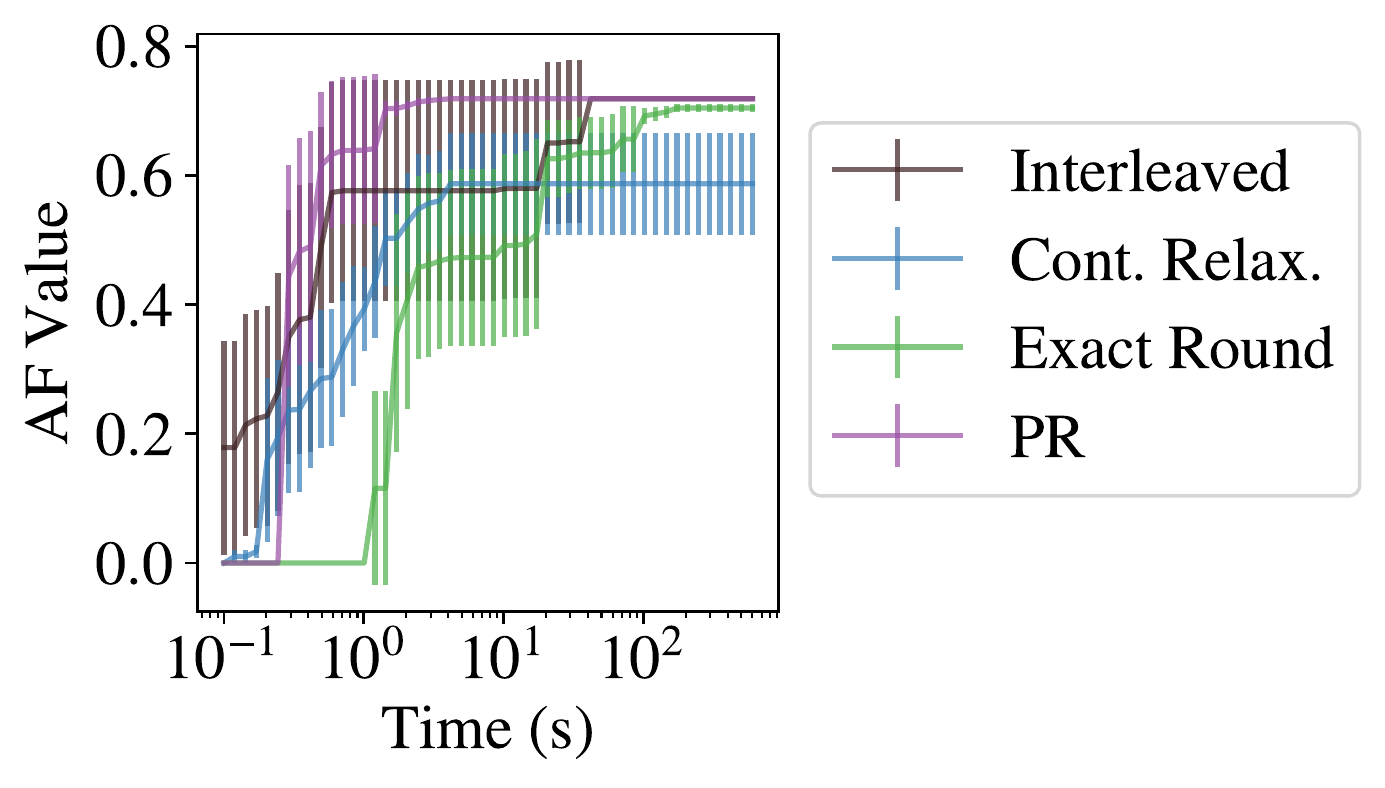}
    \caption{A comparison of methods for optimizing acquisition functions at a given wall time budget.}
    \label{fig:equal_wall_time_budget}
\end{figure*}
\FloatBarrier

\section{Alternative categorical kernels }
\label{appdx:alternative_kernels}
In this section, we demonstrate that \PR{} can be used with arbitrary kernels over the categorical parameters including those that require discrete inputs (which \contbo{} is incompatible with). Specifically for the categorical parameters, we compare using (a) a Categorical kernel (default) versus with a Mat\'ern-5/2 kernel with either (b) one-hot encoded categoricals, (c) a latent embedding kernel \citep{zhang2019latent}, or known embeddings based on density functional theory (DFT) \citep{shields2021bayesian}. For the latent embedding kernel, we follow \citet{pelamatti2021bayesian} and use a 1-d latent embedding for categorical parameters where the cardinality is less than or equal to 3 and a 2-d embedding for categorical parameters where the cardinality is greater than 3. For each latent embedding, we use an isotropic Mat\'ern-5/2 kernel and use product kernel across the kernels for the categorical, binary, ordinal, and continuous parameters. For the kernel over DFT embeddings, we use the DFT embeddings for the direct arylation dataset from \citet{shields2021bayesian}, which are available at \url{https://github.com/b-shields/edbo}. It is worth noting that in the Chemical Reaction problem, the black-box objective is a GP surrogate model with a Categorical kernel that is fit to the direct arylation dataset. The purpose of this section is demonstrate that \PR{} is agnostic to the choice of kernel over discrete parameters. Because the Chemical Reaction problem is based on a GP surrogate, we do not draw conclusions about which choice of kernel is best suited for modeling the true, unknown underlying Chemical Reaction yield function.
\FloatBarrier
\begin{figure*}[ht]
    \centering
    \includegraphics[width=\linewidth]{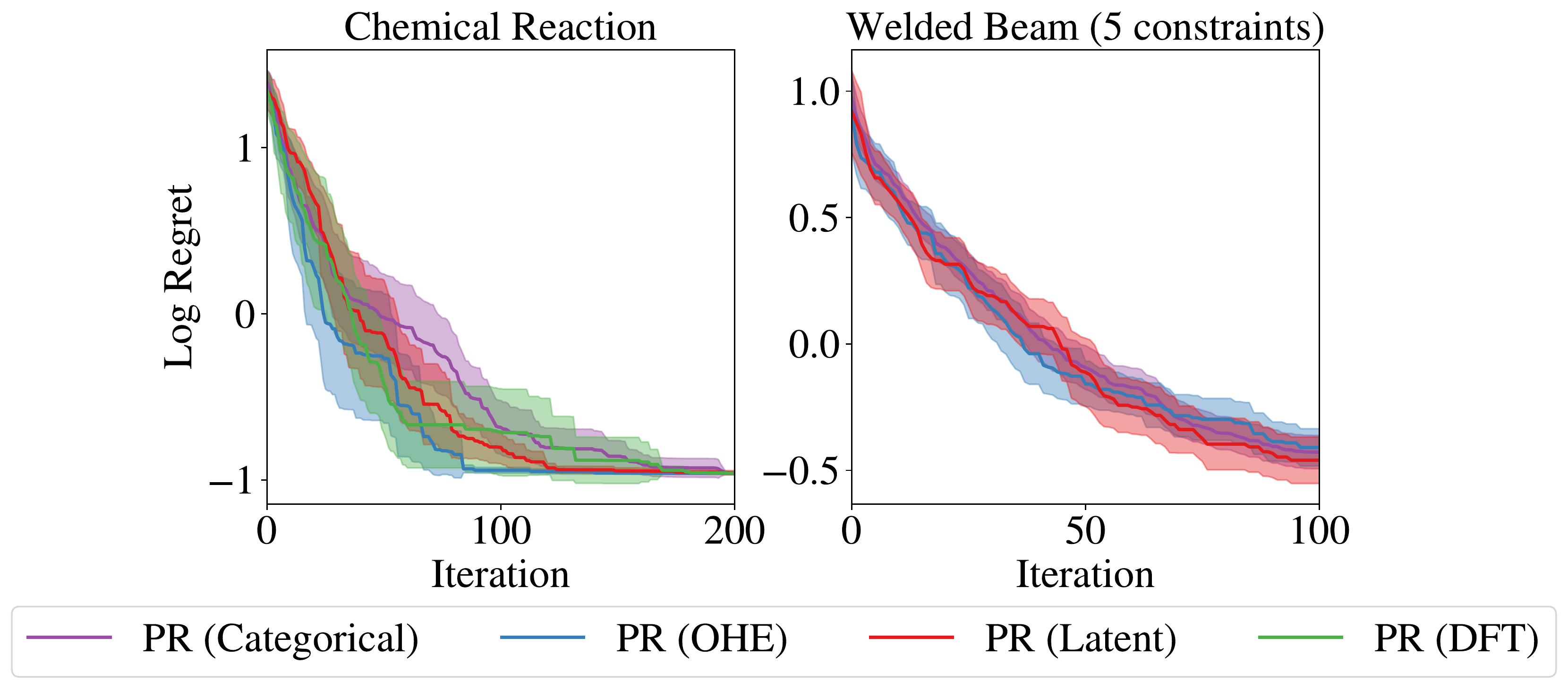}
    \caption{A comparison of different kernels over categorical parameters. Left: Welded beam has one categorical parameter, metal type (4 levels). Right: Chemical reaction has three categorical parameters, solvent, base, and ligand (with 4, 12, and 4 levels, respectively).}
    \label{fig:kernels}
\end{figure*}
\FloatBarrier
\section{Alternative Acquisition Functions }
\label{appdx:alternative_acqfs}
In this section, we compare \PR{} with expected improvement (EI) against \PR{} with upper confidence bound (UCB). For UCB, we set the hyperparameter $\beta$ in each iteration using the method in \citet{pmlr-v37-kandasamy15}. Although UCB comes enjoys bounded regret \citep{ucb}, we find empirically that EI works better on most problems. 
\FloatBarrier
\begin{figure*}[ht]
    \centering
    \includegraphics[width=\linewidth]{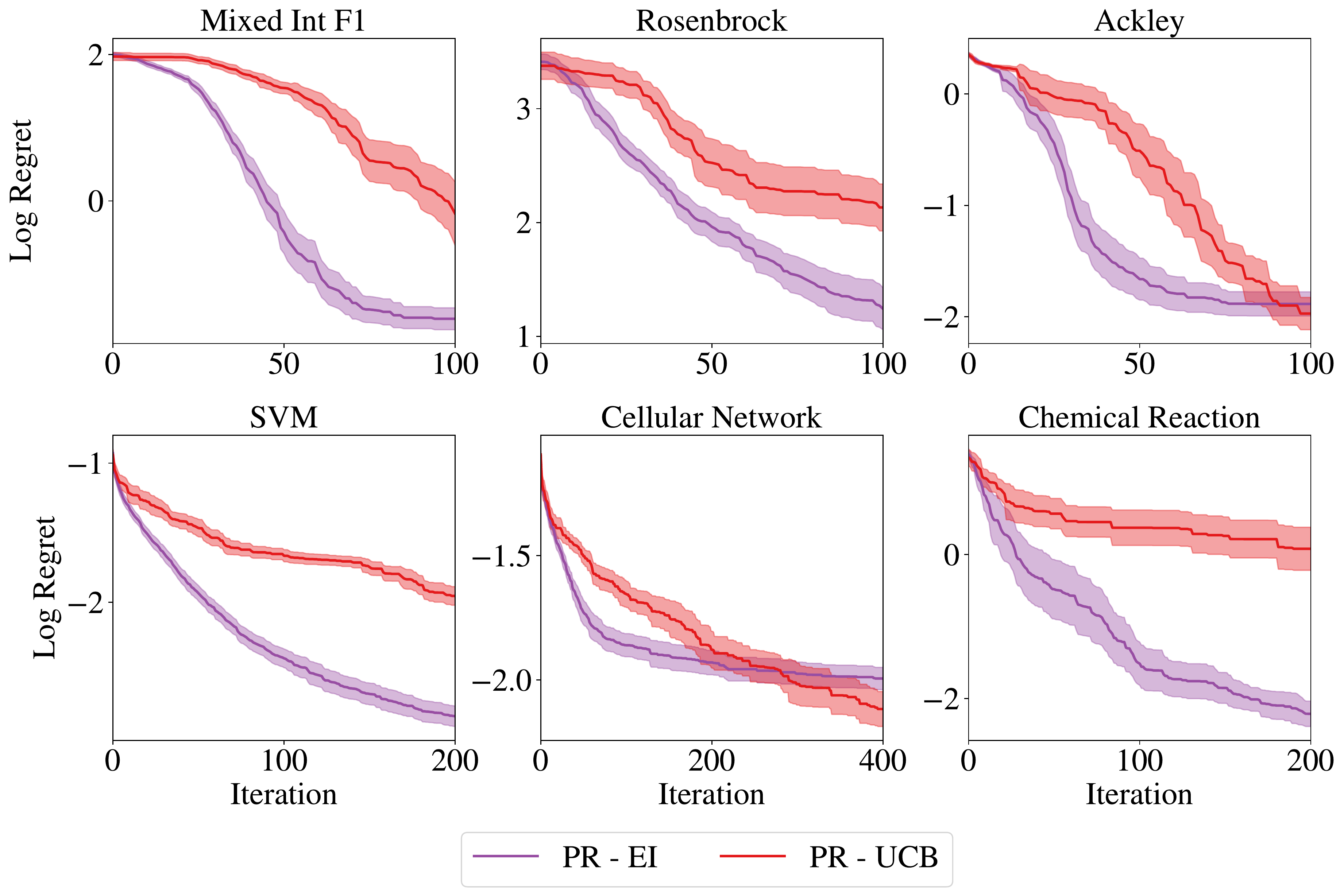}
    \caption{A comparison of expected improvement (EI) and upper confidence bound (UCB) acquisition functions with \PR{}.}
    \label{fig:ei_ucb}
\end{figure*}
\FloatBarrier
\section{Additional Results on Optimizing Acquisition Functions}
In this section, we provide additional results on various approaches for optimizing acquisition functions using the same evaluation procedure as in the main text. We use 50 replications.
\FloatBarrier
\begin{figure*}[ht]
    \centering
    \includegraphics[width=\linewidth]{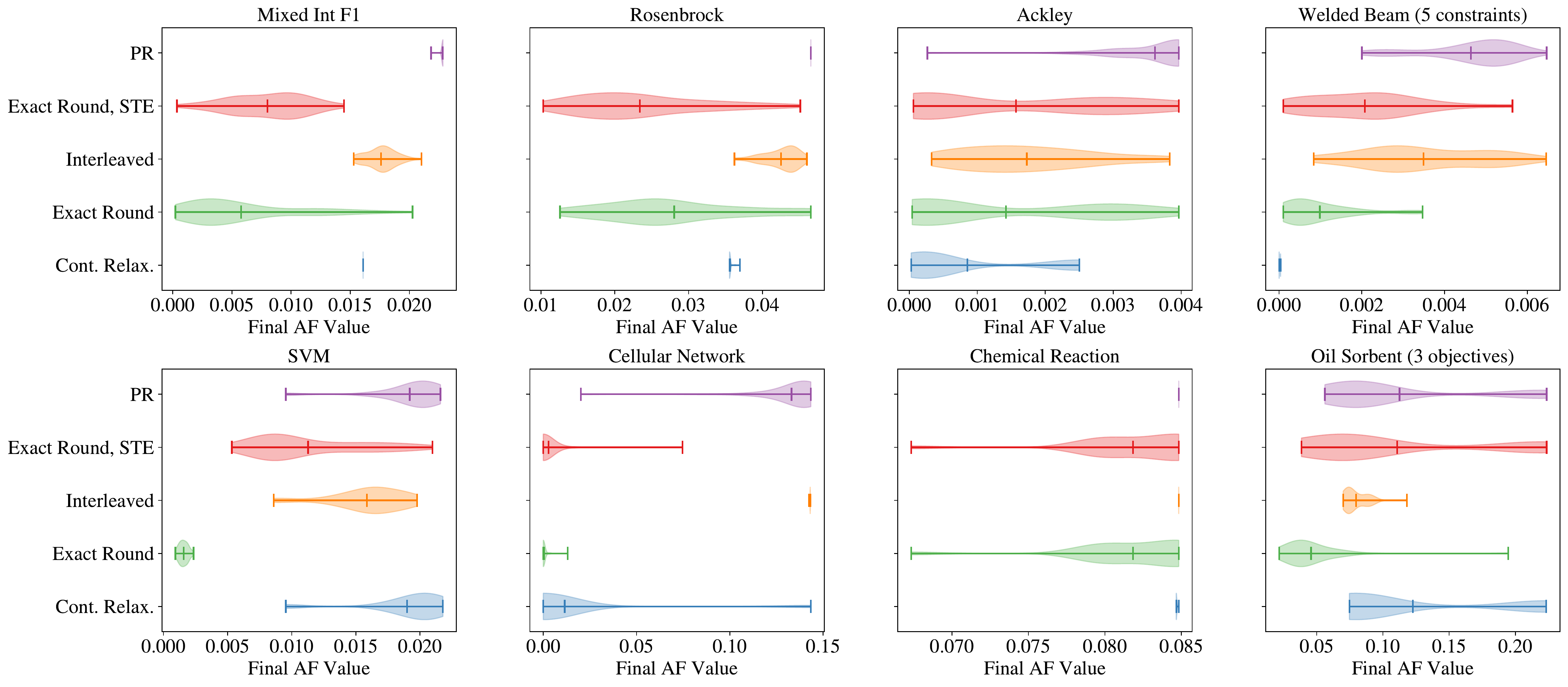}
    \caption{A comparison of methods for optimizing acquisition functions.}
    \label{fig:final_acqf}
\end{figure*}
\FloatBarrier
\section{Stochastic vs Deterministic Optimization}
\label{appdx:sgd_vs_saa}

We compare optimizing \PR{} with stochastic and deterministic optimization methods. For stochastic optimizers, we compare stochastic gradient ascent (SGA) and Adam with various initial learning rates. For SGA, the learning rate is decayed each time step $t$ by multiplying the initial learning rate by $t^{-0.7}$ and for Adam a fixed learning rate is used. For stochastic optimizers, the MC estimators of \PR{} and its gradient stochastic mini-batches of $N=128$ MC samples are used. For deterministic optimization, base samples are kept fixed. All routines are run for a maximum of 200 iterations. In Figure~\ref{fig:sgd}, we observe that Adam is more robust to the choice of learning rate than SGA and generally is the best performing method. Furthermore, Adam consistently performs better than deterministic optimization. We compare Adam with a learning rate of $\frac{1}{40}$ against L-BFGS-B in Figure~\ref{fig:saa_vs_adam}.
\FloatBarrier
\begin{figure*}[ht]
    \centering
    \includegraphics[width=\linewidth]{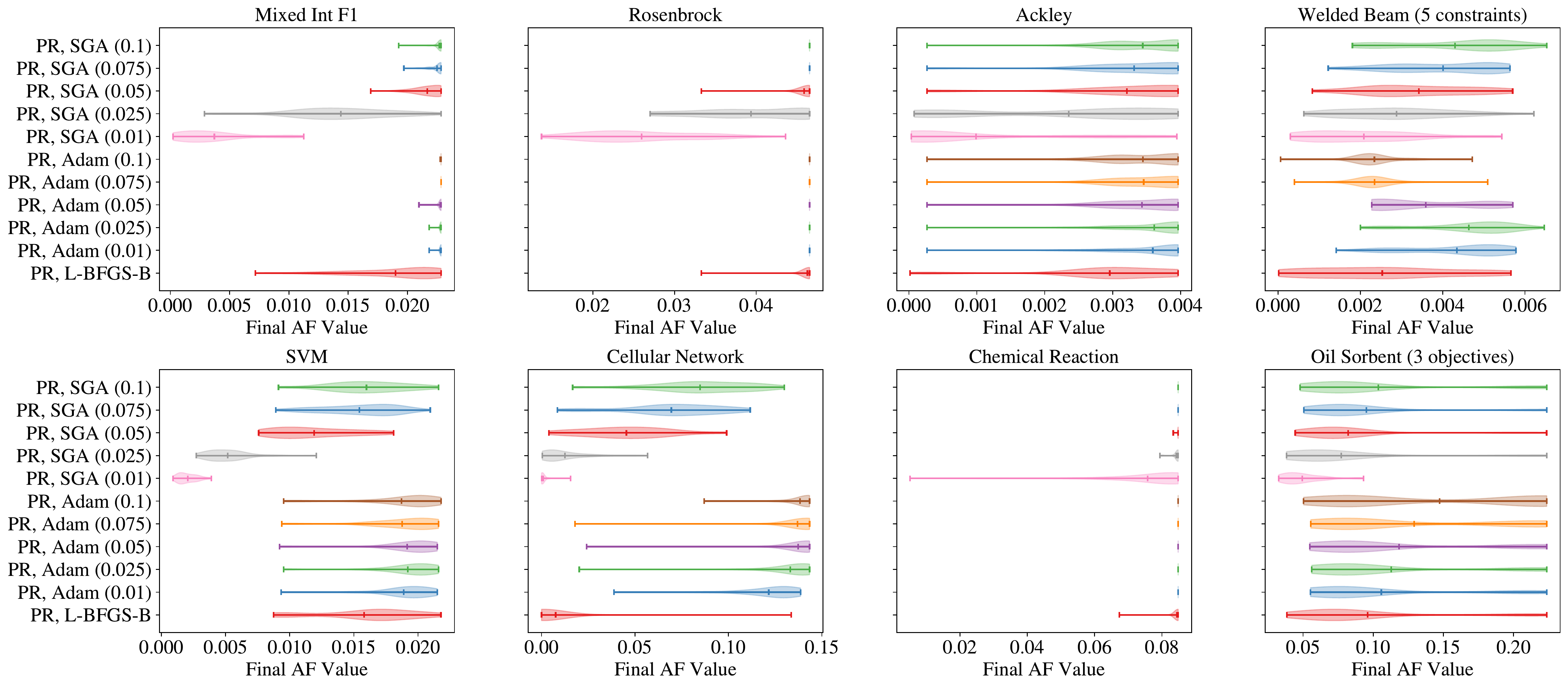}
    \caption{A comparison of \PR{} using stochastic and deterministic optimization methods. The initial learning rate for stochastic gradient ascent is given in parentheses.}
    \label{fig:sgd}
\end{figure*}
\begin{figure*}[!ht]
    \centering
    \includegraphics[width=\linewidth]{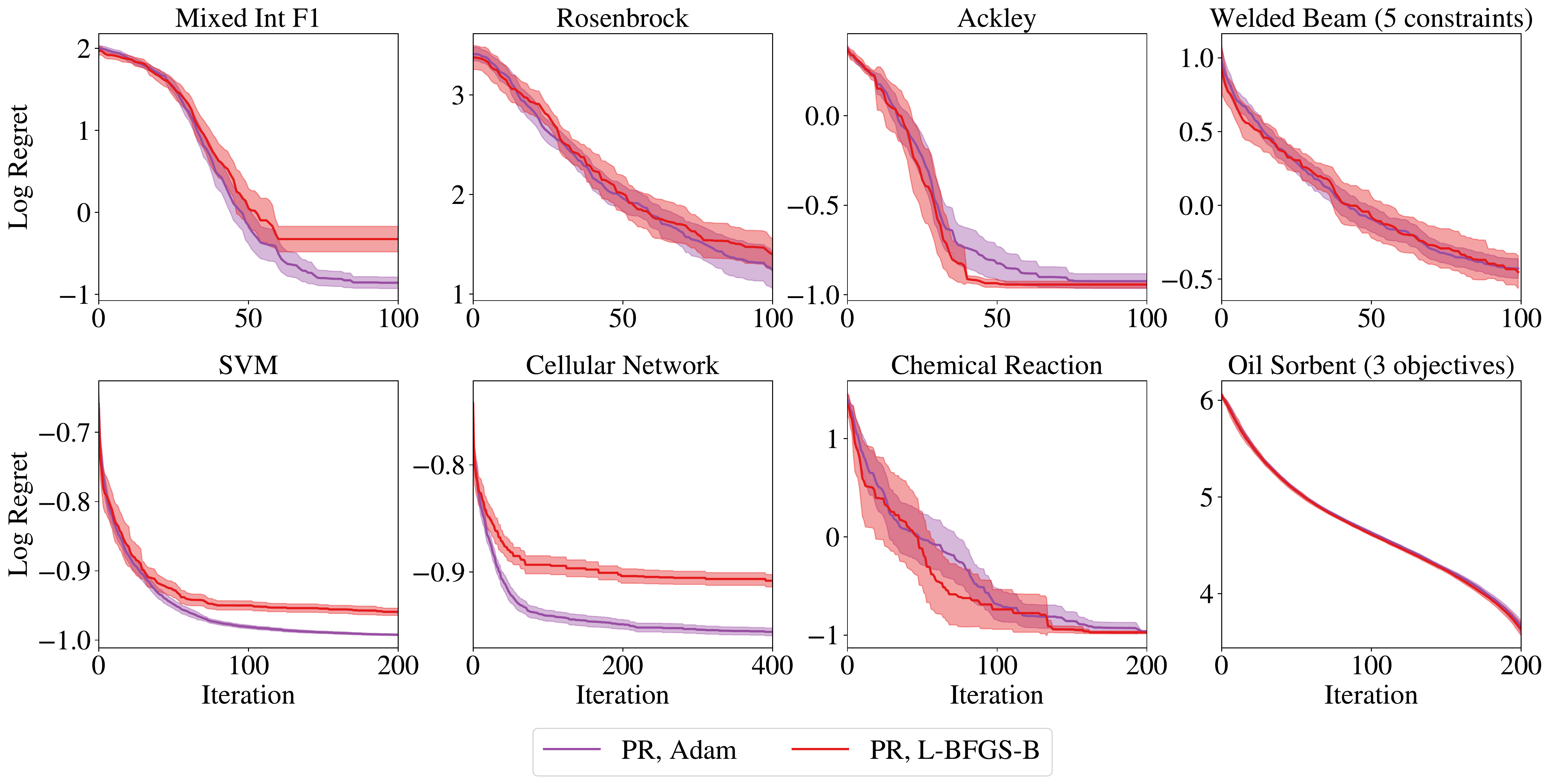}
    \caption{
        A comparison of optimizing the PO using deterministic estimation (via SAA) and  optimization versus stochastic estimation and optimization.}
    \label{fig:saa_vs_adam}
\end{figure*}
\FloatBarrier

\section{Comparison with an Evolutionary Algorithm}
In Figures~\ref{fig:ea} and ~\ref{fig:ea}, we compare against the evolutionary algorithm PortfolioDiscreteOnePlusOne, which is the recommended algorithm for discrete and mixed search spaces in the Nevergrad package \citep{nevergrad}. We find that \PR{} significantly outperforms this baseline by a large margin with respect to log regret, but is slower than the evolutionary algorithm with respect to wall time.

\FloatBarrier
\begin{figure*}[ht]
    \centering
    \includegraphics[width=0.75\linewidth]{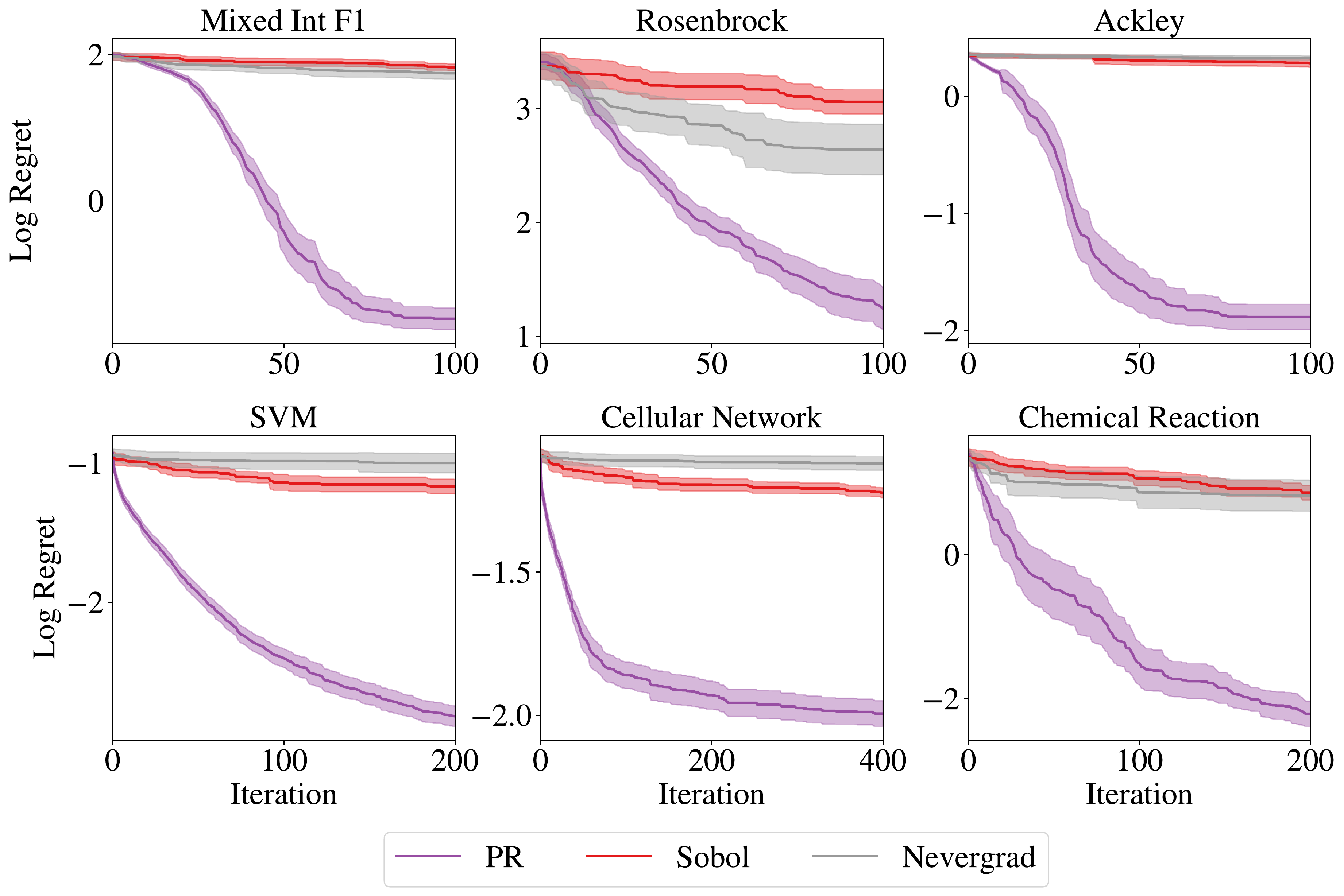}
    \caption{A comparison with an evolutionary algorithm with respect to log regret.}
    \label{fig:ea}
\end{figure*}
\begin{figure*}[ht]
    \centering
    \includegraphics[width=0.75\linewidth]{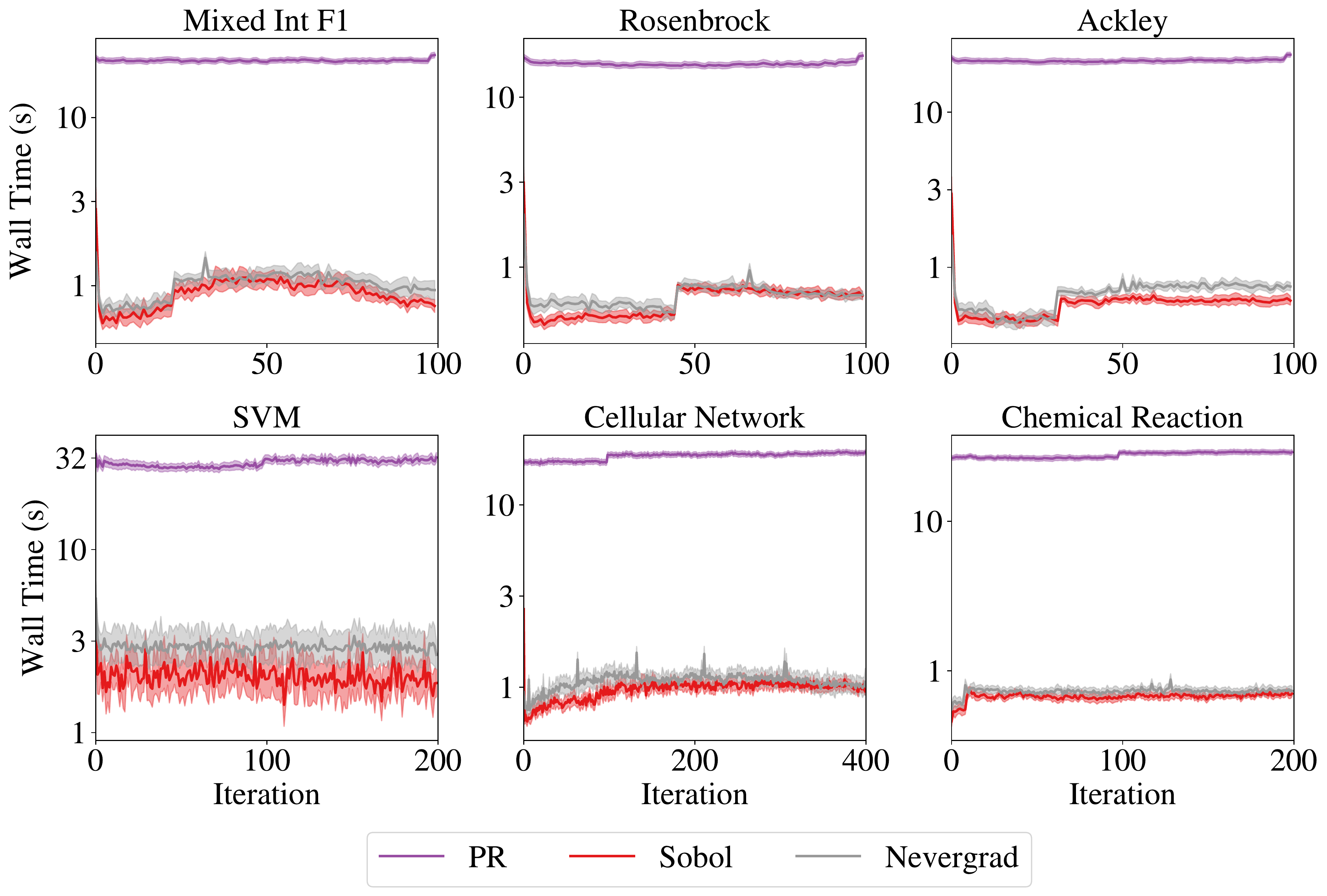}
    \caption{A comparison with an evolutionary algorithm with respect to wall time.}
    \label{fig:ea_times}
\end{figure*}
\FloatBarrier


\end{document}